\renewcommand*{\backref}[1]{}
\renewcommand*{\backrefalt}[4]{\ifcase#1\relax\or(page #2)\else(pages #2)\fi}
\newcommand{\ctext}[2]{%
  \begingroup
  \colorlet{hlcolor}{#1}%
  \sethlcolor{hlcolor}%
  \hl{#2}%
  \endgroup
}
\newcommand{\cctext}[2]{%
  \begingroup
  \definecolor{myhlcolor}{RGB}{#1}%
  \colorlet{hlcolor}{myhlcolor}%
  \sethlcolor{hlcolor}%
  \hl{#2}%
  \endgroup
}
\setlist[enumerate,1]{leftmargin=2em}
\setlist[itemize,1]{leftmargin=2em}
\newtheorem{theorem}{Theorem}[section]
\newtheorem{definition}[theorem]{Definition}
\newtheorem{proposition}[theorem]{Proposition}
\newtheorem{lemma}[theorem]{Lemma}
\newcommand{\cutparagraphup}{\vspace*{-0.1in}}
\def\Figref#1{Figure~\ref{#1}}
\def\Secref#1{Section~\ref{#1}}
\def\eqref#1{equation~\ref{#1}}
\def\Eqref#1{Equation~\ref{#1}}
\def\1{\bm{1}}
\DeclareMathAlphabet{\mathsfit}{\encodingdefault}{\sfdefault}{m}{sl}
\SetMathAlphabet{\mathsfit}{bold}{\encodingdefault}{\sfdefault}{bx}{n}
\DeclareMathOperator*{\argmin}{arg\,min}
\title{Revisiting Random Walks for Learning on\\Graphs}
\author{%
    \parbox{0.7\linewidth}{
    \vspace{-0.3cm}
    Jinwoo Kim$^1$\hspace{1em}
    Olga Zaghen$^2$\thanks{These authors contributed equally.}\hspace{0.4em}\thanks{Work done during an internship at School of Computing, KAIST.}\hspace{0.4em}\hspace{1em}
    Ayhan Suleymanzade$^1$\footnotemark[1]\\
    Youngmin Ryou$^1$\hspace{1em}
    Seunghoon Hong$^1$
    }\vspace{0.1cm}\\
    $^1$KAIST\hspace{1em}
    $^2$University of Amsterdam
    \vspace{-0.8cm}
}
\begin{document}

\maketitle

\begin{abstract}
    \vspace{-0.3cm}
    We revisit a simple model class for machine learning on graphs, where a random walk on a graph produces a machine-readable record, and this record is processed by a deep neural network to directly make vertex-level or graph-level predictions.
    We call these stochastic machines \textbf{random walk neural networks (RWNNs)}, and through principled analysis, show that we can design them to be isomorphism invariant while capable of universal approximation of graph functions in probability.
    A useful finding is that almost any kind of record of random walks guarantees probabilistic invariance as long as the vertices are anonymized.
    This enables us, for example, to record random walks in plain text and adopt a language model to read these text records to solve graph tasks.
    We further establish a parallelism to message passing neural networks using tools from Markov chain theory, and show that over-smoothing in message passing is alleviated by construction in RWNNs, while over-squashing manifests as probabilistic under-reaching.
    We empirically demonstrate RWNNs on a range of problems, verifying our theoretical analysis and demonstrating the use of language models for separating strongly regular graphs where 3-WL test fails, and transductive classification on arXiv citation network\footnote{Code is available at \url{https://github.com/jw9730/random-walk}.}.
    \vspace{-0.1cm}
\end{abstract}

\setcounter{footnote}{0}
\section{Introduction}\label{sec:introduction}
\vspace{-0.2cm}
Message passing neural networks (MPNNs) are a popular class of neural networks on graphs where each vertex keeps a feature vector and updates it by propagating messages over neighbors~\citep{battaglia2018relational}.
\nocite{kipf2017semi, gilmer2017neural, hamilton2017inductive}
MPNNs have achieved success, with one reason being their respect for the natural symmetries of graph problems (i.e., invariance to graph isomorphism)~\citep{chen2019on}.
On the other hand, MPNNs in their basic form can be viewed as implementing color updates of the Weisfeiler-Lehman (1-WL) graph isomorphism test, and thus their expressive power is not stronger~\citep{xu2019how}.
\nocite{morris2019weisfeiler}
Also, their inner workings are often tied to the topology of the input graph, which is related to over-smoothing, over-squashing, and under-reaching of features under mixing~\citep{giraldo2023on}.

In this work, we revisit an alternative direction for learning on graphs, where a random walk on a graph produces a machine-readable record, and this record is processed by a deep neural network that directly makes graph-level or vertex-level predictions.
We refer to these stochastic machines as \textbf{random walk neural networks (RWNNs)}.
Pioneering works were done in this direction, often motivated by the compatibility of random walks with sequence learning methods.
Examples include DeepWalk~\citep{perozzi2014deepwalk} and node2vec~\citep{grover2016node2vec}, which use skip-gram on walks, AWE~\citep{ivanov2018anonymous}, which uses embeddings of anonymized walks~\citep{micali2016reconstructing}, and CRaWl~\citep{tonshoff2023walking}, which uses 1D CNNs on sliding window descriptions of walks.
These methods were proven powerful in various contexts, such as graph isomorphism learning that requires expressive powers surpassing 1-WL test~\citep{tonshoff2023walking, martinkus2023agent}.

Despite the potential, unlike MPNNs, we have not yet reached a good principled understanding of RWNNs.
First, from the viewpoint of geometric deep learning, it is unclear how we can systematically incorporate the symmetries of graph problems into RWNNs, as their neural networks are not necessarily isomorphism invariant.
Second, the upper bound of their expressive power, along with the requirements on each component to reach it, is not clearly known.
Third, whether and how the over-smoothing and over-squashing problems may occur in RWNNs is not well understood.
Addressing these questions would improve our understanding of the existing methods and potentially allow us to design enhanced RWNNs by short-circuiting the search in their large design space.

\begin{table}[!t]
\centering
\vspace{-0.5cm}
\caption{
An overview of prior methods in the context of RWNNs and new components originating from our analysis.
NB is non-backtracking; MDLR is minimum degree local rule (Section~\ref{sec:graph_level}).
}
\vspace{-0.4cm}
\footnotesize
\begin{adjustbox}{max width=0.87\textwidth}
    \begin{tabular}{llll}
    \\\Xhline{2\arrayrulewidth}\\[-1em]
    Method & Random walk & Recording function $q$ & Reader NN $f_\theta$ \\
    \Xhline{2\arrayrulewidth}\\[-1em]
    \hyperlink{cite.perozzi2014deepwalk}{DeepWalk} & Uniform & Identity & Skip-gram \\
    \hyperlink{cite.grover2016node2vec}{node2vec} & Second-order $pq$-walk & Identity & Skip-gram \\
    \hyperlink{cite.ivanov2018anonymous}{AWE} & Uniform & Anonymization & Embedding table \\
    \hyperlink{cite.tonshoff2023walking}{CRaWl} & Uniform + NB & Sliding window & 1D CNN \\
    \hyperlink{cite.martinkus2023agent}{RW-AgentNet} & Uniform & Neighborhoods & RNN \\
    \hyperlink{cite.tan2023walklm}{WalkLM} & Uniform & Text attributes & Language model \\
    \Xhline{2\arrayrulewidth}\\[-1em]
    \multirow{2}{*}{Ours (\cref{sec:algorithm}, \cref{sec:analysis})} & MDLR (+ NB) (\cref{sec:graph_level})& Anonymization (\cref{sec:graph_level}) & \multirow{2}{*}{Any universal (\cref{sec:expressive_power})} \\
     & (+ restarts) (\cref{sec:vertex_level}) &  (+ named neighbors) (\cref{sec:graph_level}) & \\
    \Xhline{2\arrayrulewidth}
\end{tabular}
\end{adjustbox}
\label{table:prior_work}
\end{table}

\begin{figure}[!t]
    \centering
    \vspace{-0.1cm}
    \includegraphics[width=0.95\textwidth]{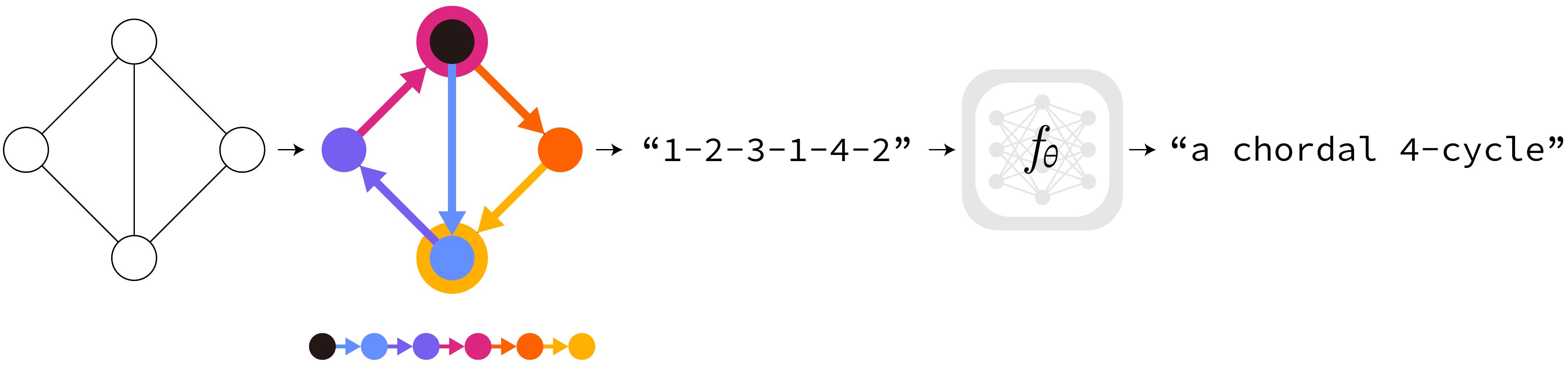}
    \caption{
    An RWNN that reads text record using a language model.
    }
    \label{fig:overview}
    \vspace{-0.3cm}
\end{figure}

\paragraph{Contributions}
We aim to establish a principled understanding of RWNNs with a focus on the previously stated questions.
Our starting point is a new formalization of RWNNs which decouples random walks, their records, and neural networks that process them.
This abstracts a wide range of design choices including those of prior methods, as in Table~\ref{table:prior_work}.
Our key idea for the analysis is then to understand RWNNs through a combination of two recent perspectives in geometric deep learning: probabilistic notions of invariance and expressive power~\citep{bloem-reddy2020probabilistic, abboud2021the}, and invariant projection of non-invariant neural networks~\citep{puny2022frame, dym2024equivariant}.

This idea allows us to impose probabilistic invariance on RWNNs even if their neural networks lack symmetry, by instead requiring (probabilistic) invariance conditions on random walks and their records.
This provides a justification for the anonymized recording of walks~\citep{micali2016reconstructing} and our novel extension of it using named neighbors.
As long as probabilistic invariance holds, this also enables recording walks in plain text and adopting a language model to process them (\Figref{fig:overview}).

We then upper-bound the expressive power of RWNNs as universal approximation in probability which surpasses the WL hierarchy of MPNNs, under the condition that random walk records the graph of interest with a high probability.
This establishes a useful link to cover times of Markov chains, providing guidance on designing random walks to minimize or bound the cover times.
From this we motivate a novel adaptation of minimum degree local rule (MDLR) walks~\citep{david2018random} for RWNNs, and introduce restarts when working on a large and possibly infinite graph.
We also provide a justification for the widespread use of non-backtracking~\citep{tonshoff2023walking}.

Continuing the link to Markov chain theory, we further analyze RWNNs and establish a parallelism to a linearized model of MPNNs.
From this, we show that over-smoothing in MPNNs is inherently avoided in RWNNs, while over-squashing manifests as probabilistic under-reaching.
This eliminates the typical trade-off between over-smoothing and over-squashing in MPNNs~\citep{giraldo2023on, nguyen2023revisiting}, and allows an RWNN to focus on overcoming under-reaching by scaling the walk length or using rapidly-mixing random walks such as non-backtracking walks.

We empirically demonstrate RWNNs on several graph problems.
On synthetic setups, optionally with a small 1-layer transformer, we verify our claims on cover times, over-smoothing, and over-squashing.
Then, we demonstrate adapting a language model (DeBERTa~\citep{he2021deberta} to solve the challenging task of isomorphism learning of strongly regular graphs with perfect accuracy whereas the 3-WL test fails, improving over the previously known best result of RWNNs.
We further suggest that our approach can turn transductive classification problem on a large graph into an in-context learning problem by recording labeled vertices during random walks.
To demonstrate this, we apply Llama~3~\citep{meta2024llama3} model on transductive classification on arXiv citation network with 170k vertices, and show that it can outperform a range of MPNNs as well as zero- and few-shot baselines.
Our experiments show the utility of our approach in the analysis and development of RWNNs.

\section{Random Walk Neural Networks}\label{sec:algorithm}

We start our discussion by formalizing random walk neural networks (RWNNs).
We define an RWNN as a randomized function $X_\theta(\cdot)$ that takes a graph $G$ as input and outputs a random variable $X_\theta(G)$ on the output space $\mathbb{R}^d$.\footnote{While any output type such as text is possible (Figure~\ref{fig:overview}), we explain with vector output for simplicity.}
In the case of vertex-level tasks, it is additionally queried with an input vertex $v$ which gives a random variable $X_\theta(G, v)$.
An RWNN consists of the following components:
\begin{enumerate}
    \item \textbf{Random walk algorithm} that produces $l$ steps of vertex transitions $v_0\to\cdots\to v_l$ on the input graph $G$. If an input vertex $v$ is given, we fix the starting vertex by $v_0=v$.
    \item \textbf{Recording function} $q: (v_0\to\cdots\to v_l, G)\mapsto\mathbf{z}$ that produces a machine-readable record $\mathbf{z}$ of the random walk.
    It may access the graph $G$ to record auxiliary information such as attributes.
    \item \textbf{Reader neural network} $f_\theta:\mathbf{z}\mapsto\hat{\mathbf{y}}$ that processes record $\mathbf{z}$ and outputs a prediction $\hat{\mathbf{y}}$ in $\mathbb{R}^d$.
    It is the only trainable component and is not restricted to specific architectures.
\end{enumerate}
For graph-level prediction, we sample $\hat{\mathbf{y}}\sim X_\theta(G)$ by running a random walk on $G$ and processing its record using the reader NN $f_\theta$.
For vertex-level prediction $\hat{\mathbf{y}}\sim X_\theta(G, v)$, we query the input vertex $v$ by simply starting the walk from it.
In practice, we ensemble several sampled predictions e.g. by averaging, which can be understood as Monte Carlo estimation of the mean predictor $(\cdot)\mapsto\mathbb{E}[X_\theta(\cdot)]$.

We now analyze each component for graph-level tasks on finite graphs, and then vertex-level tasks on possibly infinite graphs.
The latter simulates the problem of scaling to large graphs such as in transductive classification.
As our definition captures many known designs (Table~\ref{table:prior_work}), we discuss them as well.
We leave the pseudocode in Appendix~\ref{sec:pseudocode} and notations and proofs in Appendix~\ref{sec:proofs}.

\subsection{Graph-Level Tasks}\label{sec:graph_level}

Let $\mathbb{G}$ be the class of undirected, connected, and simple graphs\footnote{We assume this for simplicity but extending to directed or attributed graphs is possible.}.
Let $n\geq 1$ and $\mathbb{G}_n$ be the collection of graphs in $\mathbb{G}$ with at most $n$ vertices.
Our goal is to model a graph-level function $\phi:\mathbb{G}_n\to\mathbb{R}^d$ using an RWNN $X_\theta(\cdot)$.
Since $\phi$ is a graph function, it is reasonable to assume isomorphism invariance:
\begin{align}\label{eq:graph_level_invariance}
    \phi(G) = \phi(H),\quad\forall G\simeq H,
\end{align}
Incorporating the invariance structure to our model class $X_\theta(\cdot)$ would offer generalization benefit.
As $X_\theta(\cdot)$ is randomized, we accept the probabilistic notion of invariance~\citep{bloem-reddy2020probabilistic}:
\begin{align}\label{eq:graph_level_probabilistic_invariance}
    X_\theta(G)\overset{d}{=}X_\theta(H),\quad\forall G\simeq H.
\end{align}
An intuitive justification is that, if $X_\theta(\cdot)$ is invariant in probability, its mean predictor $(\cdot)\mapsto\mathbb{E}[X_\theta(\cdot)]$ would be an invariant function (we leave further discussion in Section~\ref{sec:related_work}).
We now claim that we can achieve probabilistic invariance of $X_\theta(\cdot)$ by properly choosing the random walk algorithm and recording function while not imposing any constraint on the reader NN $f_\theta$.
\begin{proposition}\label{theorem:invariance}
    $X_\theta(\cdot)$ is invariant in probability, if its random walk algorithm is invariant in probability and its recording function is invariant.
\end{proposition}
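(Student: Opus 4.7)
The plan is to prove the proposition by a direct composition argument, tracking how equality in distribution propagates through the three stages (walk, record, reader). Fix graphs $G \simeq H$ with some isomorphism $\pi: V(G) \to V(H)$. Let $W_G = (v_0 \to \cdots \to v_l)$ denote the random walk produced on $G$ and $W_H$ the walk on $H$; write $\pi(W_G)$ for the pushed-forward walk $(\pi(v_0) \to \cdots \to \pi(v_l))$ on $H$. I will show $X_\theta(G) = f_\theta(q(W_G, G)) \overset{d}{=} f_\theta(q(W_H, H)) = X_\theta(H)$ by inserting $\pi$ as an intermediate step.

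First, I unpack the two hypotheses in the statement. Probabilistic invariance of the random walk algorithm says that the law of the walk on $G$, pushed forward by $\pi$, coincides with the law of the walk on $H$; i.e., $\pi(W_G) \overset{d}{=} W_H$. Invariance of the recording function is a deterministic (pathwise) statement: for every walk $w$ on $G$, $q(w, G) = q(\pi(w), H)$. These are precisely the ingredients the paper has set up in Appendix~\ref{sec:proofs}, so I would cite the formal definitions there and then proceed.

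The proof then consists of three short moves. Move 1 (deterministic): apply the invariance of $q$ pointwise to the random walk $W_G$, obtaining the almost-sure equality $q(W_G, G) = q(\pi(W_G), H)$. Move 2 (distributional): push this through the deterministic (measurable) map $f_\theta$ to get $f_\theta(q(W_G, G)) = f_\theta(q(\pi(W_G), H))$ almost surely, hence equal in distribution. Move 3 (distributional): since $\pi(W_G) \overset{d}{=} W_H$ and $f_\theta \circ q(\cdot, H)$ is a fixed measurable map, the pushforward under this map preserves equality in law, giving $f_\theta(q(\pi(W_G), H)) \overset{d}{=} f_\theta(q(W_H, H))$. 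Chaining these yields $X_\theta(G) \overset{d}{=} X_\theta(H)$, which is exactly \eqref{eq:graph_level_probabilistic_invariance}.

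There is no real obstacle here; the content of the proposition is essentially that a composition of (probabilistically) invariant maps is (probabilistically) invariant, and the only thing to be careful about is the interface between the deterministic invariance of $q$ and the distributional invariance of the walk. The only mild subtlety is measurability of $q$ and $f_\theta$ so that the pushforward argument in Move 3 is justified, but this is automatic under the standing assumptions (finite vertex set, standard Borel output space). I would note this in one line and defer the formal setup to the appendix. The argument extends verbatim to the vertex-level case by conditioning on the starting vertex $v_0 = v$ and its image $\pi(v)$.
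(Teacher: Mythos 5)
Your proposal is correct and follows essentially the same route as the paper's proof: combine the deterministic (pathwise) invariance of $q$ with the distributional invariance of the walk, then push the result through the deterministic map $f_\theta$. The only cosmetic difference is the order in which you interleave the two hypotheses (you apply $q$-invariance a.s.\ first and defer the distributional step to the end, whereas the paper fuses the two into a single distributional equality before applying $f_\theta$), plus your brief remark on measurability, which the paper leaves implicit.
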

In other words, even if $f_\theta$ lacks symmetry, invariant random walk and recording function provably converts it into an invariant random variable $X_\theta(\cdot)$.
This is an extension of invariant projection operators on functions~\citep{puny2022frame, dym2024equivariant} to a more general and probabilistic setup.
% We now elaborate on the design choices that guarantee invariance.

\paragraph{Random walk algorithm}
A random walk algorithm is invariant in probability if it satisfies:
\begin{align}\label{eq:random_walk_invariance}
    \pi(v_0)\to\cdots\to\pi(v_l)\overset{d}{=}u_0\to\cdots\to u_l,\quad\forall G\overset{\pi}{\simeq}H,
\end{align}
where $v_{[\cdot]}$ is a random walk on $G$, $u_{[\cdot]}$ is a random walk on $H$, and $\pi:V(G)\to V(H)$ specifies the isomorphism from $G$ to $H$.
It turns out that many random walk algorithms in literature are already invariant.
To see this, let us write the probability of walking from a vertex $u$ to its neighbor $x\in N(u)$:
\begin{align}\label{eq:first_order_random_walk}
    \mathrm{Prob}[v_t=x|v_{t-1}=u] \coloneq \frac{c_G(u, x)}{\sum_{y\in N(u)}c_G(u, y)},
\end{align}
where the function $c_{G}: E(G)\to\mathbb{R}_+$ assigns positive weights (conductances) to edges.
If we set $c_G(\cdot)=1$, we recover uniform random walk used in DeepWalk~\citep{perozzi2014deepwalk}.
We show:
\begin{proposition}\label{theorem:first_order_random_walk_invariance}
    The random walk in \Eqref{eq:first_order_random_walk} is invariant in probability if its conductance $c_{[\cdot]}(\cdot)$ is invariant:
    \begin{align}
        c_G(u, v) = c_H(\pi(u), \pi(v)),\quad\forall G\overset{\pi}{\simeq}H.
    \end{align}
    It includes constant conductance, and any choice that only uses degrees of endpoints $\mathrm{deg}(u)$, $\mathrm{deg}(v)$.
\end{proposition}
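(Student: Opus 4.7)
The plan is to show that the joint distribution of the walk $v_0 \to \cdots \to v_l$ on $G$, pushed forward through $\pi$, coincides with the joint distribution of $u_0 \to \cdots \to u_l$ on $H$. Since both processes are Markov chains, it suffices to check (i) the initial distribution and (ii) the one-step transition kernel are preserved under $\pi$; then the equality of joint laws follows by the product form of Markov chain path probabilities.

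For (i), in the graph-level setting we assume a starting distribution that depends only on invariant information (e.g., uniform on $V(G)$), so $\operatorname{Prob}[v_0 = w] = \operatorname{Prob}[u_0 = \pi(w)]$ for every $w \in V(G)$. For (ii), the key observation is that an isomorphism $\pi: V(G) \to V(H)$ restricts to a bijection $N_G(u) \to N_H(\pi(u))$ for every $u$. Combined with the assumed conductance invariance $c_G(u,y) = c_H(\pi(u), \pi(y))$, we get a term-by-term equality of the denominators in \eqref{eq:first_order_random_walk}:
\begin{align*}
    \sum_{y \in N_G(u)} c_G(u, y) \;=\; \sum_{y \in N_G(u)} c_H(\pi(u), \pi(y)) \;=\; \sum_{z \in N_H(\pi(u))} c_H(\pi(u), z),
\end{align*}
so $\operatorname{Prob}[v_t = x \mid v_{t-1} = u] = \operatorname{Prob}[u_t = \pi(x) \mid u_{t-1} = \pi(u)]$. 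Multiplying along a path $w_0, \dots, w_l$ yields $\operatorname{Prob}[v_0 = w_0, \dots, v_l = w_l] = \operatorname{Prob}[u_0 = \pi(w_0), \dots, u_l = \pi(w_l)]$, which is precisely the distributional equality in \eqref{eq:random_walk_invariance}.

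For the concluding remark, constant conductance is trivially invariant, while any conductance of the form $c_G(u,v) = g(\deg_G(u), \deg_G(v))$ is invariant because graph isomorphisms preserve vertex degrees, i.e., $\deg_G(u) = \deg_H(\pi(u))$. I would state these as short corollaries of the invariance condition.

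The proof is essentially a calculation, so there is no real obstacle; the only subtlety worth flagging is the tacit assumption on the initial distribution, which I would either make explicit in the proposition's hypothesis or handle by noting that the graph-level setup samples $v_0$ from an invariant distribution (e.g., uniform over vertices). The extension to walks conditioned on an input vertex $v_0 = v$ is then immediate, since the Dirac initial law is carried to the Dirac law at $\pi(v)$.
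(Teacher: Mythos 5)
Your proof is correct and takes essentially the same approach as the paper: the paper factors the path probability through a separate induction lemma (Lemma~\ref{lemma:first_order_random_walk_invariance_from_transitions}), whereas you appeal directly to the product form of Markov chain path probabilities, but both arguments reduce to checking the initial law plus one-step transition invariance via the neighborhood bijection $\pi|_{N(u)}\colon N_G(u)\to N_H(\pi(u))$ and the assumed conductance invariance. You are also right to flag the tacit hypothesis on the starting distribution, which the paper handles by noting $v_0\sim\mathrm{Uniform}(V(G))$ in Algorithm~\ref{alg:random_walk}.
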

We favor using degrees since it is cheap while potentially improving the behaviors of walks.
Among many possible choices, we find that the following conductance function called the minimum degree local rule (MDLR)~\citep{abdullah2015speeding, david2018random} is particularly useful:
\begin{align}\label{eq:minimum_degree_local_rule}
    c_G(u, v) \coloneq \frac{1}{\min[\mathrm{deg}(u), \mathrm{deg}(v)]}.
\end{align}
MDLR is special as its vertex cover time, i.e., the expected time of visiting all $n$ vertices of a graph, is $O(n^2)$, optimal among first-order random walks (\Eqref{eq:first_order_random_walk}) that use degrees of endpoints.

A common practice is to add non-backtracking property that enforces $v_{t+1}\neq v_{t-1}$~\citep{tonshoff2023walking}, and we also find this beneficial.
In Appendix~\ref{sec:invariance_second_order} we extend \Eqref{eq:random_walk_invariance} and Proposition~\ref{theorem:first_order_random_walk_invariance} to second-order walks that include non-backtracking and node2vec walks~\citep{grover2016node2vec}.

\paragraph{Recording function}
A recording function $q: (v_0\to\cdots\to v_l, G)\mapsto\mathbf{z}$ takes a random walk and produces a machine-readable record $\mathbf{z}$.
We let $q(\cdot, G)$ have access to the graph $G$ the walk is taking place.
This allows recording auxiliary information such as vertex or edge attributes.
A recording function is invariant if it satisfies the following for any given random walk $v_{[\cdot]}$ on $G$:
\begin{align}\label{eq:recording_invariance}
    q(v_0\to\cdots\to v_l, G) = q(\pi(v_0)\to\cdots\to\pi(v_l), H),\quad\forall G\overset{\pi}{\simeq}H.
\end{align}
Invariance requires that $q(\cdot, G)$ produces the same record $\mathbf{z}$ regardless of re-indexing of vertices of $G$ into $H$.
For this, we have to be careful in how we represent each vertex in a walk $v_0\to\cdots\to v_l$ as a machine-readable value, and which auxiliary information we record from $G$.
For example, identity recording used in DeepWalk~\citep{perozzi2014deepwalk} and node2vec~\citep{grover2016node2vec} are not invariant as the result is sensitive to vertex re-indexing.
We highlight two choices which are invariant:
\begin{itemize}
    \item \textbf{Anonymization.}
    We name each vertex in a walk with a unique integer, starting from $v_0\mapsto1$ and incrementing it based on their order of discovery.
    For instance, a random walk $a\to b\to c\to a$ translates to a sequence $1\to2\to3\to1$.
    \item \textbf{Anonymization + named neighbors.}
    While applying anonymization for each vertex $v$ in a walk, we record its neighbors $u\in N(v)$ if they are already named but the edge $v\to u$ has not been recorded yet.
    For instance, a walk $a\to b\to c\to d$ on a fully-connected graph translates to a sequence $1\to2\to3\langle1\rangle\to4\langle1, 2\rangle$, where $\langle\cdot\rangle$ represents the named neighbors.
\end{itemize}

The pseudocode of both algorithms can be found in Appendix~\ref{sec:pseudocode}.
We now show the following:
\begin{proposition}\label{theorem:recording_function_invariance}
    A recording function $q:(v_0\to\cdots\to v_l, G)\mapsto\mathbf{z}$ that uses anonymization, optionally with named neighbors, is invariant.
\end{proposition}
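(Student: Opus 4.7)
The plan is to prove the statement by induction on the walk index $t = 0, 1, \ldots, l$, showing that at step $t$ the symbol appended to the record on $G$ coincides with the symbol appended to the record on $H$ under the relabeling $\pi$. Since the record $\mathbf{z}$ is built by concatenation, equality at each step yields equality of the full records. The underlying fact driving the argument is that an isomorphism $\pi:V(G)\to V(H)$ is a bijection that preserves adjacency, hence it preserves (i) equality and inequality of vertices encountered along the walk, and (ii) the neighbor relation — and these are the only two properties of the underlying graph the recording function ever inspects.

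For pure anonymization, I would formalize the naming map as follows: set $\alpha_G(v_0) = 1$, and for $t \geq 1$, let $\alpha_G(v_t) = \alpha_G(v_s)$ if $v_t = v_s$ for some least $s < t$, and otherwise $\alpha_G(v_t) = 1 + \max_{s < t} \alpha_G(v_s)$. By induction on $t$, I claim $\alpha_G(v_t) = \alpha_H(\pi(v_t))$. The base case is immediate. For the step, injectivity of $\pi$ gives $v_t = v_s$ iff $\pi(v_t) = \pi(v_s)$, so the pattern of "already-seen versus new" along the walk is identical on both sides; combined with the inductive hypothesis on earlier names, the newly assigned integer matches. Hence the emitted sequence of anonymous names is identical, which is exactly the record $q$ produces in the plain-anonymization case.

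For the named-neighbors variant, at step $t$ the recording function additionally emits the set $\{\alpha_G(u) : u \in N_G(v_t),\ u \in \{v_0,\ldots,v_{t-1}\},\ \{v_t,u\}\notin R_{t-1}\}$, where $R_{t-1}$ is the set of edges (represented as pairs of anonymous names) already recorded up to step $t-1$. Because $\pi$ preserves adjacency, $u\in N_G(v_t)$ iff $\pi(u)\in N_H(\pi(v_t))$; "already appeared in the walk" is preserved by $\pi$; and by the anonymization induction, each $\alpha_G(u)=\alpha_H(\pi(u))$. A parallel induction on $t$ shows that the bookkeeping set $R_t$, written in anonymous names, is the same on both sides at every step, so the "not yet recorded" filter agrees. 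Consequently the set emitted at step $t$ is identical on $G$ and on $H$, closing the induction.

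The only real obstacle is notational: one must precisely state a recording function as a deterministic procedure that maintains the invariants $\alpha$ and $R_t$, and then verify that every branch of the procedure depends only on information preserved by $\pi$. This is a straightforward unwinding of the pseudocode in Appendix \ref{sec:pseudocode} rather than a substantive mathematical step; once those invariants are stated, the inductive argument goes through cleanly and the proposition follows.
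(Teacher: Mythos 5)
Your argument is correct and matches the paper's own proof in both structure and substance: the paper likewise establishes $\mathrm{id}(v_t)=\mathrm{id}'(\pi(v_t))$ from the bijectivity of $\pi$ and then runs an induction on the walk step, maintaining that the set of recorded edges (your $R_t$, the paper's $T$) agrees under $\pi$ so that the filtered named-neighbor set emitted at each step coincides. The only cosmetic difference is that you characterize the naming map recursively while the paper uses an $\argmin$ formula; and worth a remark, the algorithm sorts named neighbors by their integer name before emission, so what you describe as a set is actually an ordered list, but since names are preserved the ordering is too and nothing changes.
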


While anonymization was originally motivated by privacy concerns in \citet{micali2016reconstructing} (see also Appendix~\ref{sec:apdx_extended_related_work}), Proposition~\ref{theorem:recording_function_invariance} offers a new justification based on invariance.
% Our choice of anonymizing a walk is inspired by the observation model of \citet{micali2016reconstructing}.
% While initially motivated by privacy concerns~\citep{micali2016reconstructing, ivanov2018anonymous}, we find it useful for invariance.
Our design of recording named neighbors is novel, and is inspired by sublinear algorithms that probe previously discovered neighbors in a walk \citep{dasgupta2014estimating}.
In our context, it is useful since whenever a walk visits a set of vertices $S\subseteq V(G)$ it automatically records the entire induced subgraph $G[S]$.
As a result, to record all edges of a graph, a walk only has to visit all vertices.
While traversing all edges, i.e. edge cover time, is $O(n^3)$~\citep{zuckerman1991on} in general, it is possible to choose a walk algorithm that takes only $O(n^2)$ time to visit all $n$ vertices.
MDLR in \Eqref{eq:minimum_degree_local_rule} exactly achieves this.

\paragraph{Reader neural network}
A reader neural network $f_\theta:\mathbf{z}\mapsto\hat{\mathbf{y}}$ processes the record $\mathbf{z}$ of the random walk and outputs a prediction $\hat{\mathbf{y}}$ in $\mathbb{R}^d$.
As in Proposition~\ref{theorem:invariance}, there is no invariance constraint imposed on $f_\theta$, and any neural network that accepts the recorded walks and has a sufficient expressive power can be used (we will make this precise in \Secref{sec:expressive_power}).
This is in contrast to MPNNs where invariance is hard-coded in feature mixing operations.
Also, our record $\mathbf{z}$ can take any format, such as a matrix, byte sequence, or plain text, as long as $f_\theta$ accepts it.
Thus, it is possible (while not required) to choose the record to be plain text, such as \texttt{"1-2-3-1"}, and choose $f_\theta$ to be a pre-trained language model.
This offers expressive power~\citep{yun2020are} and has a potential benefit of knowledge transfer from the language domain~\citep{lu2022pretrained, rothermel2021dont}.

\subsection{Vertex-Level Tasks}\label{sec:vertex_level}
We now consider vertex-level tasks.
In case of finite (small) graphs, we may simply frame a vertex-level task $(G, v)\mapsto\mathbf{y}$ as a graph-level task $G'\mapsto\mathbf{y}$ where $G'$ is $G$ with its vertex $v$ marked.
Then, we can solve $G'\mapsto\mathbf{y}$ by querying an RWNN $X_\theta(G, v)$ to start its walk at $v_0=v$.

A more interesting case is when $G$ is an infinite graph that is only locally finite, i.e. has finite degrees.
This simulates problems on a large graph such as transductive classification.
In this case, we may assume that our target function depends on finite local structures~\citep{tahmasebi2023the}.

Let $r\geq 1$, and $\mathbb{B}_r\coloneq\{B_r(v)\}$ be the collection of local balls in $G$ of radius $r$ centered at $v\in V(G)$.
We would like to model a vertex-level function $\phi:\mathbb{B}_r\to\mathbb{R}^d$ on $G$ using an RWNN $X_\theta(\cdot)$ by querying the vertex of interest, $X_\theta(G, v)$.
We assume that $\phi$ is isomorphism invariant:
\begin{align}
    \phi(B_r(v))=\phi(B_r(u)),\quad\forall B_r(v)\simeq B_r(u).
\end{align}
The probabilistic invariance of $X_\theta(\cdot)$ is defined as follows:
\begin{align}
    X_\theta(G, v)\overset{d}{=}X_\theta(G, u),\quad\forall B_r(v)\simeq B_r(u).
\end{align}
We can achieve probabilistic invariance by choosing the walk algorithm and recording function similar to the graph-level case\footnote{This is assuming that the random walks are localized in $B_r(v)$ and $B_r(u)$, e.g. with restarts.}.
As a modification, we query $X_\theta(G, v)$ with the starting vertex $v_0=v$ of the random walk.
Anonymization informs $v$ to the reader NN by always naming it as $1$.

Then, we make a key choice of localizing random walks with restarts.
That is, we reset a walk to its starting vertex $v_0$ either with a probability $\alpha\in(0, 1)$ at each step or periodically every $k$ steps.
Restarting walks tend to stay more around starting vertex $v_0$, and were used to implement locality bias in personalized PageRank algorithm for search~\citep{page1999pagerank}.
This localizing effect is crucial in our context since a walk can drift away from $B_r(v_0)$ before recording all necessary information, which may take an infinite time to return as $G$ is infinite~\citep{janson2012hitting}.
Restarts make the return to $v_0$ mandatory, ensuring that $B_r(v_0)$ can be recorded in a finite expected time.
We show:
\begin{theorem}\label{theorem:infinite_local_cover_time}
    For a uniform random walk on an infinite graph $G$ starting at $v$, the vertex and edge cover times of the finite local ball $B_r(v)$ are not always finitely bounded.
\end{theorem}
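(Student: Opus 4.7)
The plan is to prove the statement by exhibiting an explicit infinite graph where both expected cover times diverge. The natural choice is $G = \mathbb{Z}$, viewed as an infinite, connected, locally-finite graph, with $v = 0$ and radius $r = 1$, so that $B_r(v) = \{-1, 0, 1\}$ is a finite local ball containing two edges. Although the uniform (simple symmetric) random walk on $\mathbb{Z}$ is recurrent and hence visits every integer almost surely, it is null-recurrent: expected return and hitting times are infinite. This is precisely the regime where almost-sure coverage coexists with unbounded expected cover time, so $\mathbb{Z}$ already serves as a counterexample.

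The first step is to recall the classical identity $\mathbb{E}_1[T_0] = \infty$, where $T_j$ denotes the first hitting time of vertex $j$. One-step conditioning gives
\begin{equation*}
\mathbb{E}_0[T_0^+] \,=\, 1 + \tfrac{1}{2}\,\mathbb{E}_1[T_0] + \tfrac{1}{2}\,\mathbb{E}_{-1}[T_0],
\end{equation*}
and combined with reflection symmetry $\mathbb{E}_1[T_0] = \mathbb{E}_{-1}[T_0]$ and null-recurrence $\mathbb{E}_0[T_0^+] = \infty$, this forces $\mathbb{E}_1[T_0] = \infty$. Because any integer-valued path from $+1$ to $-1$ must cross $0$, the strong Markov property then yields $\mathbb{E}_1[T_{-1}] = \mathbb{E}_1[T_0] + \mathbb{E}_0[T_{-1}] = \infty$.

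Given this, I would bound the expected vertex cover time $C_v$ of $B_1(0)$ from below. Starting at $0$, the first step lands in $\{+1, -1\}$ uniformly; conditioning on the walk moving to $+1$, the remaining time to cover is exactly $T_{-1}$ under $\mathbb{P}_1$. Hence by strong Markov, $\mathbb{E}_0[C_v \mid X_1 = +1] = 1 + \mathbb{E}_1[T_{-1}] = \infty$, and by reflection symmetry the same holds conditioning on $X_1 = -1$, so $\mathbb{E}_0[C_v] = \infty$. The edge cover time is handled analogously: traversing the edge $\{0, -1\}$ requires the walker to be at $0$ or $-1$ and step to the other endpoint, which under $\mathbb{P}_1$ cannot occur before $T_{-1}$, again yielding an infinite expectation.

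The only delicate point, and the main obstacle, is being precise about the definitions (vertex versus edge cover time, and that expectations are taken under the walk law started at $v$); once these are fixed the argument reduces to invoking null-recurrence of simple random walk on $\mathbb{Z}$. As a remark, an even more dramatic counterexample exists on the infinite $d$-regular tree with $d \geq 3$: transience implies a positive probability that the walker leaves $B_r(v)$ and never returns, so the cover times become infinite almost surely rather than only in expectation, further motivating the restarts introduced in \cref{sec:vertex_level}.
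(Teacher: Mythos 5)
Your proof is correct and takes essentially the same approach as the paper: both choose $G = \mathbb{Z}$, $v = 0$, $r = 1$, so $B_1(0) = \{-1, 0, 1\}$, and both reduce the claim to the infinitude of expected hitting times on the line. The only difference is cosmetic: the paper derives $\mathbb{E}_0[T_1] = \infty$ self-containedly via a two-line contradiction ($T = 1 + T$ using translation symmetry), whereas you cite null-recurrence of simple random walk on $\mathbb{Z}$ as a black box to get $\mathbb{E}_1[T_0] = \infty$ and then propagate via the strong Markov property; your concluding remark on transience of $d$-regular trees is a nice addition but not needed.
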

\begin{theorem}\label{theorem:finite_local_cover_time}
    In Theorem~\ref{theorem:infinite_local_cover_time}, if the random walk restarts at $v$ with any nonzero probability $\alpha$ or any period $k\geq r+1$, the vertex and edge cover times of $B_r(v)$ are always finite.
\end{theorem}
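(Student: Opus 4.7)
The plan is to decouple the restarted walk into i.i.d.\ \emph{attempts}, each being the excursion between consecutive restarts (so each attempt starts afresh at $v$), and to show that each attempt independently has positive probability of hitting any prescribed vertex or traversing any prescribed edge of $B_r(v)$. Since $G$ is locally finite, $B_r(v)$ contains finitely many vertices and edges, and the maximum $G$-degree $\Delta := \max_{w \in B_r(v)} \deg_G(w)$ is finite. The vertex (resp.\ edge) cover time will then be bounded by $\mathbb{E}[\max_u T_u] \leq \sum_u \mathbb{E}[T_u]$ over the finitely many targets, so it suffices to show $\mathbb{E}[T_u] < \infty$ for each fixed vertex or edge.

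For any target vertex $u \in B_r(v)$, fix a geodesic from $v$ to $u$ of length $\ell_u \leq r$; it lies entirely in $B_r(v)$. For any target edge $e = (a,b) \in B_r(v)$, fix a walk consisting of a geodesic from $v$ to $a$ followed by the single step $a \to b$; this walk has length $\ell_e \leq r+1$ and stays in $B_r(v)$. In either case, the probability that an unmodified uniform random walk on $G$ starting at $v$ follows this specific prefix is at least $(1/\Delta)^{\ell}$ by multiplying the step probabilities along the path.

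Under the probability-$\alpha$ restart scheme, attempt lengths are i.i.d.\ geometric with mean $1/\alpha$, and conditional on an attempt lasting at least $\ell$ steps (probability $(1-\alpha)^{\ell}$), its trajectory is distributed as an unmodified uniform walk. Hence each attempt independently hits a fixed target with probability at least $p := ((1-\alpha)/\Delta)^{\ell} > 0$. The index $N$ of the first successful attempt is stochastically dominated by $\mathrm{Geom}(p)$, and Wald's identity applied to this stopping time and the i.i.d.\ attempt lengths yields $\mathbb{E}[T_u] \leq \mathbb{E}[N] \cdot (1/\alpha) \leq 1/(p\alpha) < \infty$. Under the period-$k$ scheme with $k \geq r+1$, attempts have deterministic length $k \geq \max(\ell_u, \ell_e)$, so each attempt hits its target with probability at least $(1/\Delta)^{\ell}$, giving $\mathbb{E}[T_u] \leq k \cdot \Delta^{\ell} < \infty$.

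The main subtlety I anticipate is justifying the precise condition $k \geq r+1$: an edge between two vertices at distance $r$ from $v$ may require exactly $r+1$ steps to traverse when starting from $v$ (e.g.\ when a geodesic from $v$ to one endpoint passes through the other, forcing the walk to step back and forth), whereas $k = r$ would fail. Verifying that $r+1$ steps always suffice for any edge of $B_r(v)$ reduces to a brief case analysis on geodesics in $B_r(v)$. All other considerations, in particular the fact that the walk may temporarily leave $B_r(v)$ through high-degree boundary vertices, are absorbed into the crude uniform lower bound $(1/\Delta)^{\ell}$, so no further care is needed regarding the geometry of $G$ outside the ball.
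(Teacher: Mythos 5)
Your proof is correct and takes a genuinely simpler route than the paper's. The paper proves Lemmas~\ref{lemma:hitting_time_restart} and \ref{lemma:hitting_time_restart_edge}, which bound the hitting time $H(u,x)$ to any vertex or directed edge of $B_r(v)$ from an \emph{arbitrary} starting point $u\in B_r(v)$; this requires a three-way decomposition of the walk according to whether it reaches the restart vertex $v$ before the target, and the resulting pairwise hitting-time bounds are then assembled into a cover-time bound via the Aleliunas-style spanning-tree inequality $C_V(B_r(v))\leq\sum_{(x,y)\in E(T)}[H(x,y)+H(y,x)]$ over a DFS tree $T$ of the ball. You instead exploit the fact that the local cover time (\Eqref{eq:local_cover_time}) is by definition taken from $v$, so the elementary union bound $\mathbb{E}[\max_u T_u]\leq\sum_u\mathbb{E}[T_u]$ over the finitely many targets reduces everything to hitting times from $v$ alone; this eliminates both the spanning-tree argument and the general-$u$ case analysis. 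Your excursion-plus-Wald estimate for $\mathbb{E}[T_u]$ is essentially the computation the paper already carries out inside Lemma~\ref{lemma:hitting_time_restart} for the special term $H(v,x)$, so the core technique is shared. What your approach gains is brevity and directness; what the paper's buys (useful in other settings, irrelevant for a pure finiteness claim) is a quantitatively tighter bound. The off-by-one you flagged in the periodic case is a real subtlety worth settling: Algorithm~\ref{alg:random_walk} triggers restarts at $t\equiv 0\pmod k$, so a period-$k$ excursion from $v$ affords $k-1$ unrestarted transitions, and the geodesic-plus-edge path of length up to $r+1$ must fit inside that window; the paper's prose (``walking $k$ steps until the next restart'' in the proof of Lemma~\ref{lemma:hitting_time_restart}) quietly adopts the looser count, and your bound inherits the same convention.
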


\section{Analysis}\label{sec:analysis}
In Section~\ref{sec:algorithm}, we have described the design of RWNNs, primarily relying on the principle of (probabilistic) invariance.
In this section, we provide in-depth analysis on their expressive power and relations to the issues in MPNNs such as over-smoothing, over-squashing, and under-reaching.

\subsection{Expressive Power}\label{sec:expressive_power}
\nocite{cybenko1989approximation}
Intuitively, if the records of random walks contain enough information such that the structures of interest, e.g., graph $G$ or local ball $B_r(v)$, can be fully recovered, a powerful reader NN such as an MLP~\citep{hornik1989multilayer} or a transformer~\citep{yun2020are} on these records would be able to approximate any function of interest.
Our analysis formalizes this intuition.

We first consider using an RWNN $X_\theta(\cdot)$ to universally approximate graph-level functions $\phi(\cdot)$ in probability, as defined in \citet{abboud2021the}.
% While we consider scalar-valued functions as often done in universality proofs, extension to vector-valued functions can be done easily.
\begin{definition}\label{defn:graph_level_universality}
    $X_\theta(\cdot)$ is a universal approximator of graph-level functions in probability if, for all invariant functions $\phi:\mathbb{G}_n\to\mathbb{R}$ for a given $n\geq 1$, and $\forall\epsilon,\delta>0$, there exist choices of length~$l$ of the random walk and network parameters $\theta$ such that the following holds:
    \begin{align}\label{eq:graph_level_universality}
        \mathrm{Prob}[|\phi(G) - X_\theta(G)|< \epsilon] > 1-\delta,\quad\forall G\in\mathbb{G}_n.
    \end{align}
\end{definition}

We show that, if the random walk is long enough and the reader NN $f_\theta$ is universal, an RWNN $X_\theta(\cdot)$ is capable of graph-level universal approximation.
The length of the walk $l$ controls the confidence $>1-\delta$, with edge cover time $C_E(G)$ or vertex cover time $C_V(G)$ playing a central role.
\begin{theorem}\label{theorem:graph_level_universality}
    An RWNN $X_\theta(\cdot)$ with a sufficiently powerful $f_\theta$ is a universal approximator of graph-level functions in probability (Definition~\ref{defn:graph_level_universality}) if it satisfies either of the below:
    \begin{itemize}
        \item It uses anonymization to record random walks of lengths $l > C_E(G)/\delta$.
        \item It uses anonymization + named neighbors to record walks of lengths $l > C_V(G)/\delta$.
    \end{itemize}
\end{theorem}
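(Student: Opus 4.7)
The plan is to reduce universal approximation to two separate facts: (i) with probability at least $1-\delta$, the record $\mathbf{z}$ produced by the random walk determines the isomorphism class of $G$, and (ii) given such an informative record, a sufficiently powerful reader $f_\theta$ can compute any invariant target $\phi(G)$ to within error $\epsilon$. I would organize the argument around the ``good event'' that the walk covers the relevant structure, so that on this event the record is in bijective correspondence with the isomorphism class of $G$.

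First I would handle the coverage probability by Markov's inequality. Letting $T$ denote the edge (respectively vertex) cover time of the walk started from a uniformly random vertex, $\mathbb{E}[T] = C_E(G)$ (respectively $C_V(G)$), so $\mathrm{Prob}[T > l] \leq \mathbb{E}[T]/l$. Taking $l > C_E(G)/\delta$ or $l > C_V(G)/\delta$ respectively makes this $< \delta$. Let $A$ be the event $\{T \leq l\}$, so $\mathrm{Prob}[A] > 1 - \delta$.

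Next I would argue that on $A$ the record determines $G$ up to isomorphism. For the anonymization case, if the walk has traversed every edge by step $l$, then the anonymized sequence $v_0 \to \cdots \to v_l$ spells out all edges of $G$ with a consistent integer naming, and hence reconstructs $G$ up to relabeling; by Proposition \ref{theorem:recording_function_invariance} the record is the same for all isomorphic copies. For the anonymization + named neighbors case, visiting every vertex suffices, because whenever a vertex is revisited (which must happen to every pair of adjacent already-named vertices while one of them is current) the ``named neighbors'' field writes down the corresponding edges; since the walk visits all vertices, all edges incident to named pairs get recorded, giving a full reconstruction of $G[V(G)] = G$. Thus I can define a map $\Psi$ from the (countable) set of possible records to $\mathbb{R}$ by $\Psi(\mathbf{z}) \coloneq \phi(G_\mathbf{z})$, where $G_\mathbf{z}$ is any graph that could produce $\mathbf{z}$ under full coverage; well-definedness follows because all such graphs are isomorphic and $\phi$ is invariant, and $\Psi$ can be extended arbitrarily to records outside the coverage image.

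Finally I would invoke the universal approximation property of $f_\theta$ to approximate $\Psi$. Since $\mathbb{G}_n$ has only finitely many isomorphism classes, the target $\Psi$ takes only finitely many values on the relevant (finite) set of cover-producing records of length $\leq l$; encode records as fixed-length token or vector sequences and apply the universality of MLPs \citep{hornik1989multilayer} or transformers \citep{yun2020are} to obtain parameters $\theta$ with $|f_\theta(\mathbf{z}) - \Psi(\mathbf{z})| < \epsilon$ for all such $\mathbf{z}$. Combining, on the event $A$ we have $|X_\theta(G) - \phi(G)| = |f_\theta(\mathbf{z}) - \Psi(\mathbf{z})| < \epsilon$, so $\mathrm{Prob}[|X_\theta(G) - \phi(G)| < \epsilon] \geq \mathrm{Prob}[A] > 1 - \delta$, uniformly over $G \in \mathbb{G}_n$.

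The main obstacle I expect is the reconstruction step for anonymization + named neighbors: carefully verifying that vertex coverage alone forces every edge to appear in some ``named neighbors'' annotation requires a clean bookkeeping argument about when a neighbor becomes ``already named'' relative to the current walk position. The probability and universal-approximation pieces are largely routine once the bijection between $A$-conditioned records and isomorphism classes is in place, but uniformity of the reader's error over the (technically infinite) set $\mathbb{G}_n$ must be handled by reducing to the finite set of isomorphism classes and the finitely many length-$l$ records they can emit.
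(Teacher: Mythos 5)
Your proposal is correct and follows essentially the same route as the paper's proof: both bound the probability of full coverage via Markov's inequality on the cover time, observe that full coverage together with anonymization (plus named neighbors, in the second bullet) lets the record reconstruct $G$ up to isomorphism (the paper packages this as Lemma~\ref{lemma:record_completeness}), and then invoke universal approximation of $f_\theta$ on the induced function $\phi\circ\psi$ over records. One small imprecision: since $C_E(G)$ (resp.\ $C_V(G)$) is defined as the \emph{worst-case} expected cover time over starting vertices, your statement $\mathbb{E}[T]=C_E(G)$ for a uniformly random start should be $\mathbb{E}[T]\leq C_E(G)$; the Markov bound still goes through in the needed direction.
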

While both cover times are $O(n^3)$ for uniform random walks~\citep{aleliunas1979random, zuckerman1991on}, we can use MDLR in \Eqref{eq:minimum_degree_local_rule} to achieve an $O(n^2)$ vertex cover time, in conjunction with named neighbors recording.
While universality can be in principle achieved with uniform random walks, our design reduces the worst-case length $l$ required for the desired reliability $>1-\delta$.

We now show an analogous result for the universal approximation of vertex-level functions.
% Since our target function $\phi$ is on a collection $\mathbb{B}_r$ of finite graphs, the proof is similar to the graph-level case.
\begin{definition}\label{defn:vertex_level_universality}
    $X_\theta(\cdot)$ is a universal approximator of vertex-level functions in probability if, for all invariant functions $\phi:\mathbb{B}_r\to\mathbb{R}$ for a given $r\geq 1$, and $\forall\epsilon,\delta>0$, there exist choices of length~$l$ and restart probability $\alpha$ or period $k$ of the random walk and network parameters $\theta$ such that:
    \begin{align}\label{eq:vertex_level_universality}
        \mathrm{Prob}[|\phi(B_r(v)) - X_\theta(G, v)|< \epsilon] > 1-\delta,\quad\forall B_r(v)\in\mathbb{B}_r.
    \end{align}
\end{definition}

Accordingly, using local vertex cover time $C_V(B_r(v))$ and edge cover time $C_E(B_r(v))$, we show:
\begin{theorem}\label{theorem:vertex_level_universality}
    An RWNN $X_\theta(\cdot)$ with a sufficiently powerful $f_\theta$ and any nonzero restart probability $\alpha$ or restart period $k\geq r+1$ is a universal approximator of vertex-level functions in probability (Definition~\ref{defn:vertex_level_universality}) if it satisfies either of the below for all $B_r(v)\in\mathbb{B}_r$:
    \begin{itemize}
        \item It uses anonymization to record random walks of lengths $l > C_E(B_r(v))/\delta$.
        \item It uses anonymization + named neighbors to record walks of lengths $l > C_V(B_r(v))/\delta$.
    \end{itemize}
\end{theorem}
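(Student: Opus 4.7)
The plan is to mirror the proof of Theorem~\ref{theorem:graph_level_universality}, replacing the global graph $G$ with the rooted local ball $B_r(v)$ and using restarts as the localization mechanism guaranteed by Theorem~\ref{theorem:finite_local_cover_time}. The argument decomposes into three pieces: probabilistic invariance of $X_\theta(G,v)$ under rooted isomorphisms $B_r(v)\simeq B_r(u)$, a high-probability coverage bound on $B_r(v)$, and universal approximation by $f_\theta$ on records that encode the covered ball.

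For invariance, I would invoke a vertex-level analogue of Proposition~\ref{theorem:invariance}: the restart-augmented walk depends only on the local conductance (invariant by Proposition~\ref{theorem:first_order_random_walk_invariance}) and the rooted restart event at $v$, while anonymization always assigns the label $1$ to $v_0=v$, so the distribution of the record is determined by the rooted isomorphism class of $B_r(v)$. For the coverage bound, I would fix $B_r(v)$ and let $T$ denote either the local edge cover time (for bare anonymization) or the local vertex cover time (for anonymization with named neighbors). Theorem~\ref{theorem:finite_local_cover_time} gives $\mathbb{E}[T]\leq C_E(B_r(v))$ or $C_V(B_r(v))$ respectively, both finite under any nonzero $\alpha$ or any period $k\geq r+1$. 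Markov's inequality then yields $\mathrm{Prob}[T>l]\leq\mathbb{E}[T]/l$, so choosing $l>C_E(B_r(v))/\delta$ or $l>C_V(B_r(v))/\delta$ forces $\mathrm{Prob}[T\leq l]>1-\delta$. Conditioned on this event, the record captures $B_r(v)$ up to rooted isomorphism, and a sufficiently powerful $f_\theta$ such as an MLP~\citep{hornik1989multilayer} or a transformer~\citep{yun2020are} can approximate the composition of $\phi$ with this decoding to within $\epsilon$, combining with the Markov bound to give the required probability estimate.

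The main obstacle I anticipate is making precise the claim that the record's distribution depends only on $B_r(v)$, since a walk with restart probability $\alpha$ may temporarily wander beyond $B_r(v)$ and record exterior structure before restarting. A clean resolution is to observe that two rooted-isomorphic balls induce identically distributed restart-walks under the invariance of Proposition~\ref{theorem:first_order_random_walk_invariance} applied to a suitable local neighborhood, so any exterior contribution to the record is already accounted for by the probabilistic invariance framework of Proposition~\ref{theorem:invariance}; alternatively one can restrict attention to the $B_r(v)$-restricted portion of the record that is actually used by the decoding map. A secondary bookkeeping concern is uniformity over $\mathbb{B}_r$: the theorem requires a single $l$ satisfying the cover-time bound for every $B_r(v)\in\mathbb{B}_r$, which I would address by noting such $l$ exists whenever the cover times are uniformly bounded over $\mathbb{B}_r$, as holds for locally-finite graphs of bounded degree at fixed radius $r$.
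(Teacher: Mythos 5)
Your plan is essentially the paper's: substitute $B_r(v)$ for $G$ in the proof of Theorem~\ref{theorem:graph_level_universality}, use Theorem~\ref{theorem:finite_local_cover_time} to guarantee finite local cover times under restarts, and apply Markov's inequality to the local cover time to get the $1-\delta$ bound. The only genuine modification the paper makes is the one you list second, not first: the decoding map $\psi$ is redefined to discard any recorded vertex whose distance to $\mathrm{id}(v_0)=1$ exceeds $r$, so that $\psi$ has range $\mathbb{B}_r$ even when the walk wanders outside the ball before restarting. Your first ``clean resolution''---that exterior contributions are already absorbed by the probabilistic-invariance framework---does not address the right issue and would not close the gap on its own: the difficulty is not that two isomorphic balls might yield different record distributions (indeed, embedded in different ambient neighborhoods they generally \emph{will}), but that $\psi$ must be well-defined with codomain $\mathbb{B}_r$ so that $\phi'\coloneq\phi\circ\psi$ is a bona fide target for $f_\theta$ on $\mathrm{Range}(q)$, and that $\phi'(\mathbf{z})=\phi(B_r(v))$ holds on the coverage event. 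The truncation of $\psi$ delivers both at once, and the invariance discussion you open with is orthogonal to the universality claim (it belongs with Proposition~\ref{theorem:invariance}, not here). Your closing remark on uniformity of $l$ over $\mathbb{B}_r$ is a legitimate bookkeeping point that the paper leaves implicit; it is handled by the fact that the bounds in the proof of Theorem~\ref{theorem:finite_local_cover_time} depend only on $\Delta$, $r$, and $\alpha$ (or $k$), hence are uniform over $v$ once the ambient graph has bounded degree.
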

Since restarts are required to finitely bound the local cover times on an infinite graph (Section~\ref{sec:vertex_level}), non-restarting walks cannot support vertex-level universality in general, and our design is obligatory.

\subsection{Over-smoothing, Over-squashing, and Under-reaching}\label{sec:feature_mixing}

Often, in MPNNs, each layer operates by passing features over edges and mixing them using weights deduced from e.g., adjacency matrix.
This ties them to the topology of the input graph, and a range of prior work has shown how this relates to the well-known issues of over-smoothing, over-squashing, and under-reaching.
We connect RWNNs to these results to verify if similar issues may take place.

Let $G$ be a connected non-bipartite graph with row-normalized adjacency matrix $P$.
We consider a linearized MPNN, where the vertex features $\mathbf{h}^{(0)}$ are initialized as some probability vector $\mathbf{x}$, and updated by $\mathbf{h}^{(t+1)}=\mathbf{h}^{(t)}P$.
This simplification is often useful in understanding the aforementioned issues \citep{giraldo2023on, zhao2019pairnorm}.
Specifically, in this model:
\begin{itemize}
    \item Over-smoothing happens as the features exponentially converge to a stationary vector $\mathbf{h}^{(l)}\to\boldsymbol{\pi}$ as $l\to\infty$, smoothing out the input $\mathbf{x}$~\citep{giraldo2023on}.
    \item Over-squashing and under-reaching occur when a feature $\mathbf{h}_u^{(l)}$ becomes insensitive to distant input $\mathbf{x}_v$.
    While under-reaching refers to insufficient depth $l<\mathrm{diam}(G)$~\citep{barcelo2020the}, over-squashing refers to features getting overly compressed at bottlenecks of $G$, even with sufficient depth $l$.
    The latter is described by the Jacobian $|\partial \mathbf{h}_u^{(l)}/\partial \mathbf{x}_v|\leq [\sum_{t=0}^lP^t]_{uv}$,\footnote{This bound is obtained by applying Lemma 3.2 of \citet{black2023understanding} to our linearized MPNN.} as the bound often decays exponentially with $l$~\citep{topping2022understanding, black2023understanding}.
\end{itemize}

What do these results tell us about RWNNs?
We can see that, while $P$ drives feature mixing in the message passing schema, it can be also interpreted as the transition probability matrix of uniform random walk where $P_{uv}$ is the probability of walking from $u$ to $v$.
This parallelism motivates us to design an analogous, simplified RWNN and study its behavior.

We consider a simple RWNN that runs a uniform random walk $v_0\to\cdots\to v_l$, reads the record $\mathbf{x}_{v_0}\to\cdots\to\mathbf{x}_{v_l}$ by averaging, and outputs it as $\mathbf{h}^{(l)}$.
Like linear MPNN, the model involves $l$ steps of time evolution through $P$.
However, while MPNN uses $P$ to process features, this model uses $P$ only to obtain a record of input, with feature processing decoupled.
We show that in this model, over-smoothing does not occur as in simple MPNNs\footnote{While we show not forgetting $\mathbf{x}$ for brevity, we may extend to initial vertex $v_0$ using its anonymization as $1$.}:
\begin{theorem}\label{theorem:oversmoothing}
    The simple RWNN outputs $\mathbf{h}^{(l)}\to\mathbf{x}^\top\boldsymbol{\pi}$ as $l\to\infty$.
\end{theorem}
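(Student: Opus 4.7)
The plan is to identify the output as a Cesàro time-average along the Markov chain trajectory and invoke the ergodic theorem. Interpreting ``averaging'' in the obvious way, write $\mathbf{h}^{(l)} = \frac{1}{l+1}\sum_{t=0}^{l}\mathbf{x}_{v_t}$, where $\mathbf{x}_v$ denotes the $v$-th entry of the input probability vector $\mathbf{x}$ indexed by the vertices of $G$. Since $G$ is connected, the Markov chain with transition matrix $P$ is irreducible; since $G$ is non-bipartite, it is aperiodic. Hence on the finite state space $V(G)$ the chain is ergodic, admitting the unique stationary distribution $\boldsymbol{\pi}$ with $\pi_v = \deg(v)/(2|E(G)|)$.

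Ergodicity on a finite state space yields the strong law of large numbers for Markov-chain functionals: for every bounded $f: V(G) \to \mathbb{R}$,
\begin{equation*}
    \frac{1}{l+1}\sum_{t=0}^{l} f(v_t) \xrightarrow{\text{a.s.}} \sum_{v \in V(G)} \pi_v f(v) \quad\text{as } l\to\infty,
\end{equation*}
regardless of the law of $v_0$. Applied with $f(v) = \mathbf{x}_v$ this gives $\mathbf{h}^{(l)} \to \sum_v \pi_v \mathbf{x}_v = \mathbf{x}^\top\boldsymbol{\pi}$ almost surely, and hence in probability. If the input features $\mathbf{x}_v$ are vector-valued, one simply applies the one-dimensional statement coordinatewise.

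There is no real technical obstacle: the argument is a direct invocation of the pointwise ergodic theorem once the chain hypotheses (irreducibility from connectedness, aperiodicity from non-bipartiteness) are verified. The only mild care is to state the mode of convergence (a.s., which entails in probability) and to note that the limit is independent of the starting vertex $v_0$, so the result holds for any initialization of the walk. The conceptual content worth emphasizing is the contrast with the linear MPNN recursion $\mathbf{h}^{(l)} = \mathbf{x} P^l \to \boldsymbol{\pi}$, which collapses to the stationary vector and loses $\mathbf{x}$; in the RWNN, $P$ is used only to sample the trajectory while the read-out is a separate step, so the limit $\mathbf{x}^\top\boldsymbol{\pi}$ still depends linearly on $\mathbf{x}$ and the input signal is preserved in the $\boldsymbol{\pi}$-averaged sense rather than smoothed away.
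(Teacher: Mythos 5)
Your proof is correct and follows essentially the same route as the paper: both invoke ergodicity of the uniform random walk on a connected non-bipartite graph and identify the Cesàro average along the trajectory with the $\boldsymbol{\pi}$-weighted mean via the Markov chain ergodic theorem (the paper phrases this as ``the limiting frequency of visits on each vertex $v$ is precisely $\boldsymbol{\pi}_v$,'' which is the same fact). Your version is slightly more explicit about the mode of convergence (a.s., hence in probability) and about the independence from $v_0$, but no new idea is introduced.
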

Even if the time evolution through $P$ happens fast (i.e. $P$ is rapidly mixing) or with many steps $l$, the model is resistant to over-smoothing as the input $\mathbf{x}$ always affects the output.
In fact, if $P$ is rapidly mixing, we may expect improved behaviors based on Section~\ref{sec:expressive_power} as the cover times could reduce.

On the other hand, we show that over-squashing manifests as probabilistic under-reaching:
\begin{theorem}\label{theorem:oversquashing}
    Let $\mathbf{h}_u^{(l)}$ be output of the simple RWNN queried with $u$.
    Then:
    \begin{align}\label{eq:oversquashing}
        \mathbb{E}\left[\left|\frac{\partial \mathbf{h}_u^{(l)}}{\partial \mathbf{x}_v}\right|\right]=\frac{1}{l+1}\left[\sum_{t=0}^lP^t\right]_{uv}\to \boldsymbol{\pi}_v\quad\text{as}\quad l\to\infty.
    \end{align}
\end{theorem}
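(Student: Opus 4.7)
The plan is to unpack the definition of the simple RWNN, differentiate, then apply a standard Cesàro/ergodic argument for Markov chains. First, I would write the output of the model explicitly as the empirical mean along the walk,
\[
\mathbf{h}_u^{(l)} \;=\; \frac{1}{l+1}\sum_{t=0}^{l} \mathbf{x}_{v_t}, \qquad v_0=u,
\]
so that for each fixed realization of the walk $v_0\to\cdots\to v_l$ the partial derivative with respect to the input feature at vertex $v$ is the scaled occupation count
\[
\frac{\partial \mathbf{h}_u^{(l)}}{\partial \mathbf{x}_v} \;=\; \frac{1}{l+1}\sum_{t=0}^{l} \mathbbm{1}[v_t = v].
\]
This quantity is nonnegative, so the absolute value is redundant, and the expectation is the same as the expectation of the occupation-count ratio.

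Next I would take the expectation under the law of the uniform random walk started at $u$. By linearity,
\[
\mathbb{E}\!\left[\left|\frac{\partial \mathbf{h}_u^{(l)}}{\partial \mathbf{x}_v}\right|\right] \;=\; \frac{1}{l+1}\sum_{t=0}^{l} \mathrm{Prob}[v_t = v \mid v_0 = u] \;=\; \frac{1}{l+1}\sum_{t=0}^{l} [P^t]_{uv},
\]
using that $P$ is by assumption the row-normalized adjacency matrix, i.e., the one-step transition matrix of the uniform walk, so $[P^t]_{uv}$ is exactly the $t$-step transition probability. This establishes the claimed equality.

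For the limit as $l\to\infty$, I would invoke the ergodic theorem for finite Markov chains. Since $G$ is connected and non-bipartite, $P$ is irreducible and aperiodic, so it admits a unique stationary distribution $\boldsymbol{\pi}$ and $[P^t]_{uv}\to \boldsymbol{\pi}_v$ as $t\to\infty$. A Cesàro average of a convergent sequence converges to the same limit, giving
\[
\frac{1}{l+1}\sum_{t=0}^{l}[P^t]_{uv} \;\longrightarrow\; \boldsymbol{\pi}_v.
\]
(Equivalently, one may argue directly from the occupation-time ergodic theorem that the empirical visit frequency to $v$ converges in mean to $\boldsymbol{\pi}_v$.) No step is truly hard here; the only mild subtlety is being explicit that the absolute value is vacuous because the Jacobian is a scaled indicator sum, and that the non-bipartite assumption of the theorem hypothesis is what rules out oscillatory $[P^t]_{uv}$ and gives pointwise (not just Cesàro) convergence, so either version of the ergodic theorem suffices.
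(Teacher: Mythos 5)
Your proposal is correct and follows essentially the same route as the paper: express $\mathbf{h}_u^{(l)}$ as the walk average, differentiate to get a scaled occupation count, take expectation by linearity to obtain $\tfrac{1}{l+1}\sum_{t=0}^{l}[P^t]_{uv}$, and conclude via ergodicity of the chain on a connected non-bipartite graph. The only small additions you make (noting that the absolute value is vacuous, and spelling out the Cesàro step) are harmless clarifications rather than a different argument.
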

The equation shows that the feature Jacobians are bounded by the sum of powers of $P$, same as in simple MPNN.
Both models are subject to over-squashing phenomenon that is similarly formalized, but manifests through different mechanisms.
While in message passing the term is related to over-compression of features at bottlenecks~\citep{topping2022understanding}, in RWNNs it is related to exponentially decaying probability of reaching a distant vertex $v$, i.e., probabilistic under-reaching.

In many MPNNs, it is understood that the topology of the input graph inevitably induces a trade-off between over-smoothing and over-squashing~\citep{nguyen2023revisiting, giraldo2023on}.
Our results suggest that RWNNs avoid the trade-off, and we can focus on overcoming under-reaching e.g., with long or rapidly-mixing walks, while not worrying much about over-smoothing.
Design choices such as MDLR (\Eqref{eq:minimum_degree_local_rule}) and non-backtracking~\citep{alon2007non} can be understood as achieving this.

\section{Related Work}\label{sec:related_work}
We briefly review the related work.
An extended discussion can be found in Appendix~\ref{sec:apdx_extended_related_work}.

% \cutparagraphup
\paragraph{Random walks for learning on graphs}
In graph learning, random walks have received interest due to their compatibility with sequence learning methods (Table~\ref{table:prior_work}).
DeepWalk~\citep{perozzi2014deepwalk} and node2vec~\citep{grover2016node2vec} used skip-gram models on walks.
CRaWl~\citep{tonshoff2023walking} used 1D CNN on sliding window-based walk records, with expressive power bounded by the window size.
We discuss the relation between CRaWl and our approach in depth in Appendix~\ref{sec:apdx_extended_related_work}.
AgentNet~\citep{martinkus2023agent} learns agents that walk on a graph while recurrently updating their features.
While optimizing the walk strategy, this may trade off speed as training requires recurrent roll-out of the network.
Our method allows pairing simple and fast walkers, such as MDLR, with parallelizable NNs such as transformers.
WalkLM~\citep{tan2023walklm} proposed a fine-tuning method for language models on walks on text-attributed graphs.
By utilizing anonymization, our approach is able to process graphs even if no text attribute is given.
Lastly, a concurrent work \citep{wang2024non} has arrived at a similar use of anonymization combined with RNNs.

% \cutparagraphup
\paragraph{Probabilistic invariant neural networks}
Whenever a learning problem is compatible with symmetry, incorporating the associated invariance structure to the hypothesis class often leads to generalization benefit~\citep{bronstein2021geometric, elesedy2022group, elesedy2023symmetry}.
This is also the case for probabilistic invariant NNs~\citep{lyle2020on, bloem-reddy2020probabilistic}, which includes our approach.
Probabilistic invariant NNs have recently gained interest due to their potential of achieving higher expressive powers compared to deterministic counterparts~\citep{cotta2023probabilistic, cornish2024stochastic}.
In graph learning, this is often achieved with stochastic symmetry breaking between vertices using randomized features~\citep{loukas2020what, puny2020global, abboud2021the, kim2022pure}, vertex orderings~\citep{murphy2019relational, kim2023learning}, or dropout~\citep{papp2021dropgnn}.
Our approach can be understood as using random walk as a symmetry-breaking mechanism for probabilistic invariance, which provides an additional benefit of natural compatibility with sequence learning methods.

\section{Experiments}\label{sec:experiments}
% \cutparagraphup
\nocite{fey2019fast, falcon2019lightning, wolf2019huggingface, kwon2023efficient, adam2019pytorch}
We perform a set of experiments to demonstrate RWNNs.
We implement random walks in C++ based on \citet{chenebaux2020graph}, which produces good throughput ($<$0.1 seconds for 10k steps) without using GPUs as in \citet{tonshoff2023walking}; pseudocode is in Appendix~\ref{sec:pseudocode}.
We implement training and inference pipelines in PyTorch with NVIDIA GPUs and Intel Gaudi 2 compatibility.
Supplementary experiments and analysis can be found in the Appendix, from \ref{sec:supp_synthetic} to \ref{sec:apdx_more_analysis}.
% More experiments on substructure counting, real-world classification, and link prediction are in Appendix~\ref{sec:additional_experiments}.

\subsection{Synthetic Experiments}\label{sec:synthetic}
\begin{figure}[!t]
    \vspace{-0.5cm}
    \includegraphics[width=0.7\linewidth]{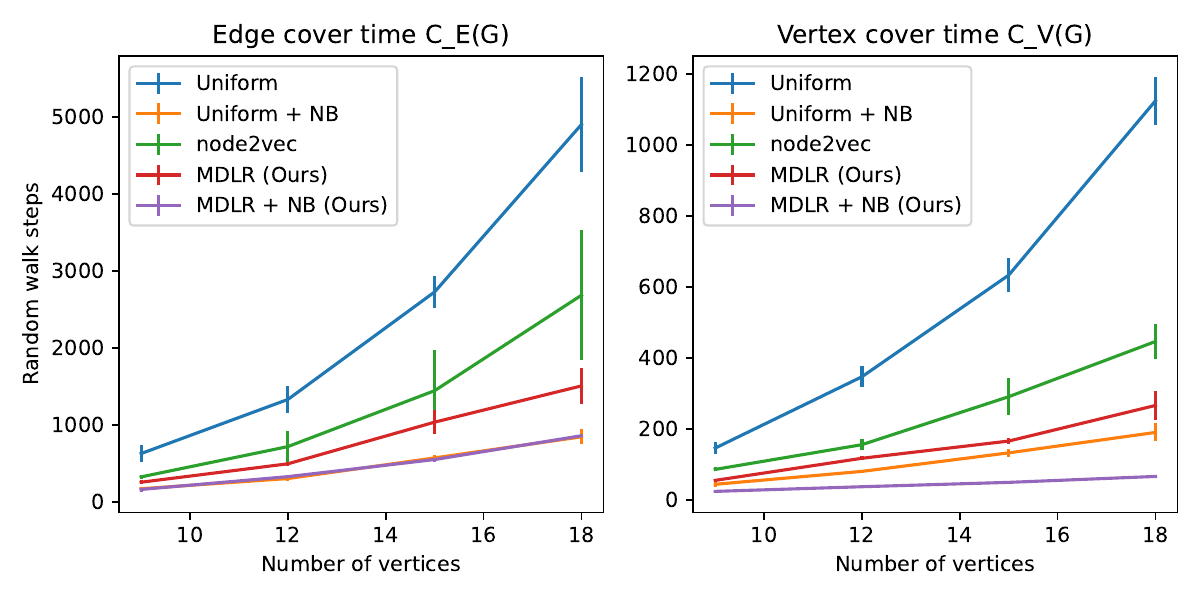}
    \centering
    \vspace{-0.4cm}
    \caption{
    Cover times of random walks on varying sizes of lollipop graphs.
    NB is non-backtracking.
    }
    \vspace{-0.1cm}
    \label{fig:cover_time}
\end{figure}
% \cutparagraphup
On synthetic setups, we first verify our claims on cover times in Section~\ref{sec:graph_level}, focusing on the utility of MDLR (\Eqref{eq:minimum_degree_local_rule}), non-backtracking, and named neighbor recording.
We measure the edge cover times $C_E(G)$ and vertex cover times $C_V(G)$ on varying sizes of lollipop graphs $G$~\citep{feige1997a} which are commonly used to establish the upper bounds of cover times.
While the strict definition of edge cover requires traversing each edge in both directions, our measurements only require traversing one direction, which suffices for the universality of RWNNs.
The results are in Figure~\ref{fig:cover_time}.
We find that \textbf{(1)} MDLR achieves a significant speed-up compared to uniform and node2vec walks, \textbf{(2)} adding non-backtracking significantly improves all first-order walks, and \textbf{(3)} edge cover times are often much larger than vertex cover times, strongly justifying the use of named neighbor recording (Theorem~\ref{theorem:graph_level_universality}).

\begin{figure}[!t]
\vspace{-0.2cm}
\includegraphics[width=0.7\linewidth]{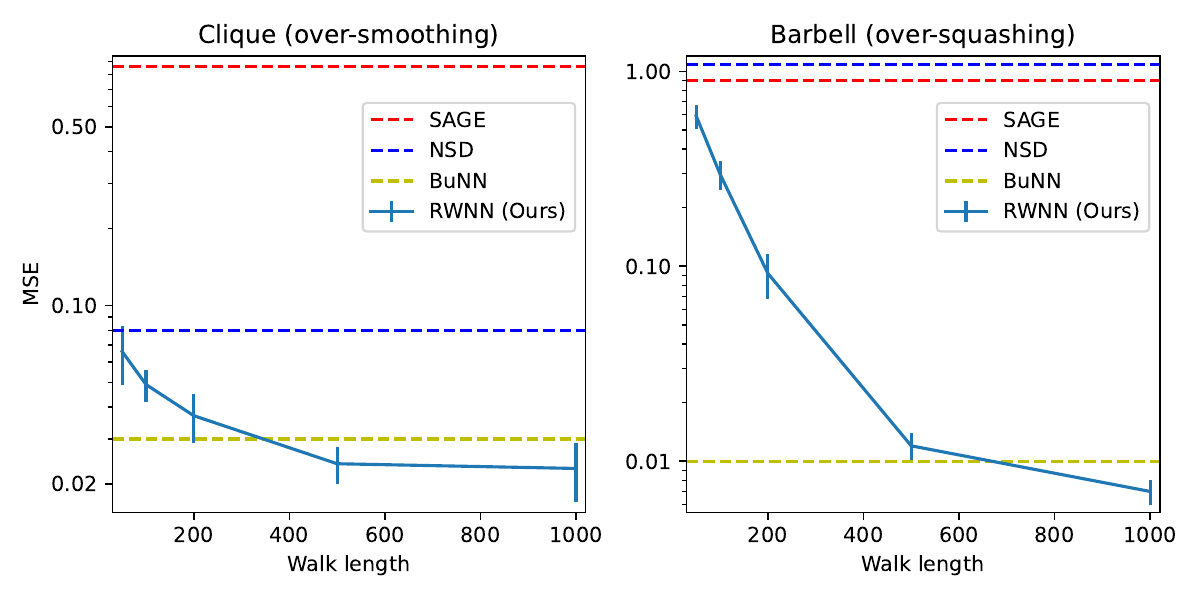}
\centering
\vspace{-0.4cm}
\caption{
Over-smoothing and over-squashing (MSE $\downarrow$) for various walk lengths $l$.
}
\vspace{-0.1cm}
\label{fig:lengths}
\end{figure}

\begin{table}[!t]
    % \vspace{-0.1cm}
    \caption{
    Over-smoothing and over-squashing results in comparison to baselines from \citet{bamberger2024bundle}.
    We report test mean squared error (MSE $\downarrow$) aggregated for 4 randomized runs.
    }
    \vspace{-0.4cm}
    \centering
    \footnotesize
    \begin{adjustbox}{max width=0.65\textwidth}
        \begin{tabular}{lccccc}\label{table:smoothing}
        \\\Xhline{2\arrayrulewidth}\\[-1em]
        Method & Clique (over-smoothing) & Barbell (over-squashing) \\
        \Xhline{2\arrayrulewidth}\\[-1em]
        Baseline 1 & 30.94 ± 0.42 & 30.97 ± 0.42 \\
        Baseline 2 & 0.99 ± 0.08 & 1.00 ± 0.07 \\
        \Xhline{2\arrayrulewidth}\\[-1em]
        MLP & 1.10 ± 0.08 & 1.08 ± 0.07 \\
        GCN & 29.65 ± 0.34 & 1.05 ± 0.08 \\
        SAGE & 0.86 ± 0.10 & 0.90 ± 0.29 \\
        GAT & 20.97 ± 0.40 & 1.07 ± 0.09 \\
        \Xhline{2\arrayrulewidth}\\[-1em]
        NSD & 0.08 ± 0.02 & 1.09 ± 0.15 \\
        BuNN & 0.03 ± 0.01 & 0.01 ± 0.07 \\
        \Xhline{2\arrayrulewidth}\\[-1em]
        RWNN-transformer (Ours) & \textbf{0.023 ± 0.006} & \textbf{0.007 ± 0.001} \\
        \Xhline{2\arrayrulewidth}
    \end{tabular}
    \end{adjustbox}
    \vspace{-0.3cm}
\end{table}

Then, we verify our claims in Section~\ref{sec:feature_mixing} using Clique and Barbell datasets of \citet{bamberger2024bundle} that explicitly test for over-smoothing and over-squashing, respectively.
Our RWNN uses MDLR walks with non-backtracking, and the reader NN is a 1-layer transformer encoder with width 128, matching the baselines.
Figure~\ref{fig:lengths} shows the results for varying walk lengths $l$.
The model performs well on Clique overall, while Barbell requires scaling the walk length.
This agrees with our claims in Section~\ref{sec:feature_mixing} that RWNNs in general avoid over-smoothing, but over-squashing manifests as probabilistic under-reaching (and is therefore mitigated by sufficiently long walks).
With $l = 1000$, our model in Table~\ref{table:smoothing} achieves the best performance among the considered methods.
More details on the experiment and additional baseline comparisons can be found in Appendix~\ref{sec:supp_synthetic}.

\subsection{Graph Isomorphism Learning}\label{sec:graph_separation}
\begin{table}[!t]
\vspace{-0.3cm}
\caption{
Isomorphism learning results.
We report accuracy of classifying training data rounded to the first decimal point.
*, $\dag$, $\ddag$, $\mathsection$, and $^\diamond$ are from \citet{papp2021dropgnn}, \citet{murphy2019relational}, \citet{zhao2022from}, \citet{martinkus2023agent}, and \citet{alvarez2024improving}, respectively.
% We trained AgentNet and CRaWl on SR25 with their reported best configurations on SR16 and CSL, respectively.
% For CSL, we note cross-validation from \citet{murphy2019relational} uses label-balanced permuted copies of training graphs, which is essentially validating and testing with training data for invariant models i.e. all in the table.
}
\vspace{-0.3cm}
\centering
\footnotesize
\begin{adjustbox}{max width=0.55\textwidth}
    \begin{tabular}{lccccc}\label{table:graph_separation}
    \\\Xhline{2\arrayrulewidth}\\[-1em]
     & CSL & SR16 & SR25 \\
    Method & 1, 2-WL fails & 3-WL fails & 3-WL fails \\
    \Xhline{2\arrayrulewidth}\\[-1em]
    \hyperlink{cite.xu2019how}{GIN} & 10.0\%$\dag$ & 50.0\%$\mathsection$ & 6.7\%$\ddag$ \\
    \hyperlink{cite.maron2019provably}{PPGN} & 100.0\%$\mathsection$ & 50.0\%$\mathsection$ & 6.7\%$\ddag$ \\
    \Xhline{2\arrayrulewidth}\\[-1em]
    \hyperlink{cite.zhao2022from}{GIN-AK+} & - & - & 6.7\%$\ddag$ \\
    \hyperlink{cite.zhao2022from}{PPGN-AK+} & - & - & 100.0\%$\ddag$ \\
    \hyperlink{cite.alvarez2024improving}{GIN+ELENE} & - & - & 6.7\%$^\diamond$ \\
    \hyperlink{cite.alvarez2024improving}{GIN+ELENE-L~(ED)} & - & - & 100.0\%$^\diamond$ \\
    \Xhline{2\arrayrulewidth}\\[-1em]
    \hyperlink{cite.abboud2021the}{GIN-RNI} & 16.0\%* & - & - \\
    \hyperlink{cite.murphy2019relational}{GIN-RP} & 37.6\%$\dag$ & - & - \\
    \hyperlink{cite.papp2021dropgnn}{GIN-Dropout} & 82.0\%* & - & - \\
    \Xhline{2\arrayrulewidth}\\[-1em]
    \hyperlink{cite.martinkus2023agent}{RW-AgentNet} & 100.0\%$\mathsection$ & 50.5\%$\mathsection$ & - \\
    \hyperlink{cite.martinkus2023agent}{AgentNet} & 100.0\%$\mathsection$ & 100.0\%$\mathsection$ & 6.7\% \\
    \hyperlink{cite.tonshoff2023walking}{CRaWl} & 100.0\%$\mathsection$ & 100.0\%$\mathsection$ & 46.6\% \\
    \Xhline{2\arrayrulewidth}\\[-1em]
    RWNN-DeBERTa (Ours) & 100.0\% & 100.0\% & 100.0\% \\
    \Xhline{2\arrayrulewidth}
\end{tabular}
\end{adjustbox}
\vspace{-0.4cm}
\end{table}
% \cutparagraphup
We now verify the claims on expressive power in Section~\ref{sec:expressive_power} using three challenging datasets where the task is recognizing the isomorphism type of input graph where certain WL test fails.
% This is framed as $N$-way classification on $N$ non-isomorphic graphs.
% The graphs are chosen such that a certain WL test fails, and so does their related family of MPNNs.

The Circular Skip Links (CSL) graphs dataset~\citep{murphy2019relational} contains 10 non-isomorphic regular graphs with 41 vertices of degree 4.
Distinguishing these graphs requires computing lengths of skip links, and 1-WL and 2-WL tests fail.
The $4\times 4$ rook's and Shrikhande graphs~\citep{alvarez2024improving}, which we call SR16, are a pair of strongly regular graphs with 16 vertices of degree 6.
Distinguishing the pair requires detecting 4-cliques, and 3-WL test fails.
The SR25 dataset~\citep{balcilar2021breaking} contains 15 strongly regular graphs with 25 vertices of degree 12, on which 3-WL test fails.
% SR25 is challenging and has only been solved by a handful of carefully designed subgraph-based MPNNs~\citep{mitton2021subgraph, zhao2022from, alvarez2024improving}.
Examples of the graphs and random walks on them can be found in Appendix~\ref{sec:pseudocode}.
%~\citep{martinkus2023agent, chen2019on}~\citep{martinkus2023agent}

Our RWNN uses MDLR walks with non-backtracking, and recording function with anonymization and named neighbors that produces plain text (Algorithm~\ref{alg:recording_function_anonymization_neighborhoods}).
We use pre-trained DeBERTa-base language model as the reader NN to leverage its capacity (see Appendix~\ref{sec:supp_graph_separation} regarding pre-training), and fine-tune it with cross-entropy loss for $\leq$100k steps using AdamW optimizer with 2e-5 learning rate and 0.01 weight decay following \citet{kim2023learning}.
%on 8$\times$ RTX 3090 GPUs
We truncate the input to 512 tokens due to memory constraints.
We use batch size 256, and accumulate gradients for 8 steps for SR25, which we found sufficient to stabilize the training.
At test time, we ensemble 4 predictions by averaging logits.
\nocite{loshchilov2019decoupled}

The results are in Table~\ref{table:graph_separation}.
MPNNs that align with certain WL tests fail when asked to solve harder problems, e.g., GIN aligns with 1-WL and fails in CSL.
A limited set of the state-of-the art NNs solve SR25, but at the cost of introducing specialized structural features~\citep{alvarez2024improving}.
% \red{Recent algorithms that use subgraphs and structural features show improved performance, with state-of-the-art such as ELENE-L~(ED) solving SR25.}
Alternative approaches based on stochastic symmetry-breaking, e.g., random node identification, often fail on CSL although universal approximation is possible in theory~\citep{abboud2021the}, possibly due to learning difficulties.
For algorithms based on walks, AgentNet and CRaWl solve CSL and SR16, while failing on SR25.
This is because learning a policy to walk in AgentNet can be challenging in complex tasks especially at the early stage of training, and the expressiveness of CRaWl is limited by the receptive field of the 1D CNN.
Our approach based on DeBERTa language model overcomes the problems, demonstrating as the first RWNN that solves SR25.

In Appendix~\ref{sec:attention_visualization}, we further provide visualizations of learned attentions by mapping attention weights on text records of walks to input graphs.
We find that the models often focus on sparse, connected substructures, which presumably provide discriminative information on the isomorphism types.

\subsection{Real-World Transductive Classification}\label{sec:arxiv}
\nocite{he2023harnessing, zhao2022learning}

\begin{table}[!t]
\vspace{-0.4cm}
\caption{
Test accuracy on ogbn-arxiv.
$\dag$ denotes using validation labels for label propagation or in-context learning following \citet{huang2020combining}.
For Llama~3, we ensemble 5 predictions by voting.
}
\vspace{-0.4cm}
\centering
\footnotesize
\begin{adjustbox}{max width=0.33\textwidth}
    \begin{tabular}{lcc}\label{table:arxiv}
    \\\Xhline{2\arrayrulewidth}\\[-1em]
    Method & Accuracy \\
    \Xhline{2\arrayrulewidth}\\[-1em]
    \hyperlink{cite.hu2020open}{MLP} & 55.50\% \\
    \hyperlink{cite.hu2020open}{node2vec} & 70.07\% \\
    \hyperlink{cite.hu2020open}{GCN} & 71.74\% \\
    \hyperlink{cite.hu2020open}{GraphSAGE} & 71.49\% \\
    \hyperlink{cite.li2021training}{GAT} & 73.91\% \\
    \hyperlink{cite.li2021training}{RevGAT} & 74.26\% \\
    \hline\\[-1em]
    \hyperlink{cite.huang2020combining}{Label propagation} & 68.50\% \\
    \hyperlink{cite.huang2020combining}{C\&S} & 72.62\% \\
    \hyperlink{cite.huang2020combining}{C\&S$\dag$} & 74.02\% \\
    \hline\\[-1em]
    \hyperlink{cite.he2023harnessing}{Llama2-13b zero-shot} & 44.23\% \\
    \hyperlink{cite.he2023harnessing}{GPT-3.5 zero-shot} & 73.50\% \\
    \hyperlink{cite.zhao2022learning}{DeBERTa, fine-tuned} & 74.13\% \\
    \hline\\[-1em]
    Llama3-8b zero-shot & 51.31\% \\
    Llama3-8b one-shot & 52.81\% \\
    Llama3-8b one-shot$\dag$ & 52.82\% \\
    Llama3-70b zero-shot & 65.30\% \\
    Llama3-70b one-shot$\dag$ & 67.65\% \\
    \hline\\[-1em]
    RWNN-Llama3-8b (Ours) & 71.06\% \\
    RWNN-Llama3-8b (Ours)$\dag$ & 73.17\% \\
    RWNN-Llama3-70b (Ours)$\dag$ & \textbf{74.85\%} \\
    \Xhline{2\arrayrulewidth}
\end{tabular}
\end{adjustbox}
\vspace{-0.4cm}
\end{table}

\cutparagraphup
Previous results considered undirected, unattributed, and relatively small graphs.
We now show that RWNN based on Llama 3~\citep{meta2024llama3} language models can solve real-world problem on a large graph with directed edges and textual attributes.
We use ogbn-arxiv~\citep{hu2020open}, a citation network of 169,343 arXiv papers with title and abstract.
The task is transductive classification into 40 areas such as \texttt{"cs.AI"} using a set of labeled vertices.
Additional results are in Appendix~\ref{sec:supp_arxiv}.

We consider two representative types of baselines.
The first reads title and abstract of each vertex using a language model and solves vertex-wise classification problem, ignoring graph structure.
The second initializes vertex features as language model embeddings and trains an MPNN, at the risk of over-compressing the text.
To take the best of both worlds, we design the recording function so that, not only it does basic operations such as anonymization, it records a complete information of the local subgraph including title and abstract, edge directions, and notably, \textbf{labels in case of labeled vertices} (Appendix~\ref{sec:pseudocode})~\citep{sato2024training}.
The resulting record naturally includes a number of input-label pairs of the classification task at hand, implying that we can frame transductive classification problem as a simple in-context learning problem~\citep{brown2020language}.
This allows training-free application of Llama 3~\citep{meta2024llama3} language model for transductive classification.
We choose restart rate $\alpha=0.7$ (Section~\ref{sec:vertex_level}) by hyperparameter search on a 100-sample subset of the validation data.

The results are in Table~\ref{table:arxiv}.
Our models based on frozen Llama 3 perform competitively against a range of previous MPNNs on text embeddings, as well as outperforming language models that perform vertex-wise predictions ignoring graph structures such as GPT-3.5 and fine-tuned DeBERTa.
Especially, our model largely outperforms one-shot baselines, which are given 40 randomly chosen labeled examples (one per class).
This is surprising as our model observes fewer vertices, 31.13 in average, due to other recorded information.
This is presumably since the record produced by our algorithm informs useful graph structure to the language model to quickly learn the task in-context compared to randomly chosen shots.
In Appendix~\ref{sec:attention_visualization}, we visualize the attention weights, verifying that the model makes use of the graph structure recorded by random walks to make predictions.

As a final note, our approach is related to label propagation algorithms~\citep{zhu2002learning, zhu2005semi, grady2006random} for transductive classification, which makes predictions by running random walks and probing the distribution of visited labeled vertices.
The difference is that, in our approach, the reader NN i.e. the language model can appropriately use other information such as attributes, as well as do meaningful non-linear processing rather than simply probing the input.
As shown in Table~\ref{table:arxiv}, our approach outperforms label propagation, verifying our intuition.

% \newpage

% We choose reader neural networks as pre-trained language models, with a prioritization on long inputs in accordance to Section~\ref{sec:expressive_power}.
% For fine-tuning experiments, we use DeBERTa-base~\citep{he2021deberta} that supports up to 12,276 tokens, and for training-free experiments, we use instruction-tuned Llama~3~\citep{meta2024llama3} that support up to 8,192 tokens.

\cutparagraphup
\section{Conclusion}\label{sec:conclusion}
\cutparagraphup
We contributed a principled understanding of RWNNs, where random walks on graphs are recorded and processed by NNs to make predictions.
We analyzed invariance, expressive power, and information propagation, showing that we can design RWNNs to be invariant and universal without constraining its neural network.
Experiments support the utility of our approach.

% \subsubsection*{Author Contributions}
% If you'd like to, you may include  a section for author contributions as is done
% in many journals. This is optional and at the discretion of the authors.

\subsubsection*{Acknowledgments}
 This work was in part supported by the National Research Foundation of Korea (RS-2024-00351212 and RS-2024-00436165), the Institute of Information \& communications Technology Planning \& Evaluation (IITP) (RS-2022-II220926, RS-2024-00509279, RS-2021-II212068, RS-2022-II220959, and RS-2019-II190075) funded by the Korea government (MSIT), Artificial intelligence industrial convergence cluster development project funded by the Korea government (MSIT) \& Gwangju Metropolitan City, and NAVER-Intel Co-Lab.
 We would like to thank Zeyuan Allen-Zhu for helpful discussion regarding anonymous random walks.

{
% \footnotesize
\bibliography{iclr2025_conference}

\begin{thebibliography}{136}
\providecommand{\natexlab}[1]{#1}
\providecommand{\url}[1]{\texttt{#1}}
\expandafter\ifx\csname urlstyle\endcsname\relax
  \providecommand{\doi}[1]{doi: #1}\else
  \providecommand{\doi}{doi: \begingroup \urlstyle{rm}\Url}\fi

\bibitem[Abboud et~al.(2021)Abboud, Ceylan, Grohe, and Lukasiewicz]{abboud2021the}
Ralph Abboud, {\.I}smail~{\.I}lkan Ceylan, Martin Grohe, and Thomas Lukasiewicz.
\newblock The surprising power of graph neural networks with random node initialization.
\newblock In \emph{IJCAI}, 2021.

\bibitem[Abdullah(2012)]{abdullah2012cover}
Mohammed Abdullah.
\newblock The cover time of random walks on graphs.
\newblock \emph{arXiv}, 2012.

\bibitem[Abdullah et~al.(2015)Abdullah, Cooper, and Draief]{abdullah2015speeding}
Mohammed~Amin Abdullah, Colin Cooper, and Moez Draief.
\newblock Speeding up cover time of sparse graphs using local knowledge.
\newblock In \emph{IWOCA}, 2015.

\bibitem[Aleliunas et~al.(1979)Aleliunas, Karp, Lipton, Lov{\'{a}}sz, and Rackoff]{aleliunas1979random}
Romas Aleliunas, Richard~M. Karp, Richard~J. Lipton, L{\'{a}}szl{\'{o}} Lov{\'{a}}sz, and Charles Rackoff.
\newblock Random walks, universal traversal sequences, and the complexity of maze problems.
\newblock In \emph{Annual Symposium on Foundations of Computer Science}, 1979.

\bibitem[Alon et~al.(2007)Alon, Benjamini, Lubetzky, and Sodin]{alon2007non}
Noga Alon, Itai Benjamini, Eyal Lubetzky, and Sasha Sodin.
\newblock Non-backtracking random walks mix faster.
\newblock \emph{Communications in Contemporary Mathematics}, 2007.

\bibitem[Alvarez-Gonzalez et~al.(2024)Alvarez-Gonzalez, Kaltenbrunner, and G{\'o}mez]{alvarez2024improving}
Nurudin Alvarez-Gonzalez, Andreas Kaltenbrunner, and Vicen{\c{c}} G{\'o}mez.
\newblock Improving subgraph-{GNN}s via edge-level ego-network encodings.
\newblock \emph{Transactions on Machine Learning Research}, 2024.
\newblock ISSN 2835-8856.

\bibitem[Arrigo et~al.(2019)Arrigo, Higham, and Noferini]{arrigo2019non}
Francesca Arrigo, Desmond~J. Higham, and Vanni Noferini.
\newblock Non-backtracking pagerank.
\newblock \emph{J. Sci. Comput.}, 2019.

\bibitem[Balcilar et~al.(2021)Balcilar, H{\'{e}}roux, Ga{\"{u}}z{\`{e}}re, Vasseur, Adam, and Honeine]{balcilar2021breaking}
Muhammet Balcilar, Pierre H{\'{e}}roux, Benoit Ga{\"{u}}z{\`{e}}re, Pascal Vasseur, S{\'{e}}bastien Adam, and Paul Honeine.
\newblock Breaking the limits of message passing graph neural networks.
\newblock In \emph{ICML}, 2021.

\bibitem[Bamberger et~al.(2024)Bamberger, Barbero, Dong, and Bronstein]{bamberger2024bundle}
Jacob Bamberger, Federico Barbero, Xiaowen Dong, and Michael Bronstein.
\newblock Bundle neural networks for message diffusion on graphs.
\newblock \emph{arXiv}, 2024.

\bibitem[Barcel{\'{o}} et~al.(2020)Barcel{\'{o}}, Kostylev, Monet, P{\'{e}}rez, Reutter, and Silva]{barcelo2020the}
Pablo Barcel{\'{o}}, Egor~V. Kostylev, Mika{\"{e}}l Monet, Jorge P{\'{e}}rez, Juan~L. Reutter, and Juan~Pablo Silva.
\newblock The logical expressiveness of graph neural networks.
\newblock In \emph{ICLR}, 2020.

\bibitem[Battaglia et~al.(2018)Battaglia, Hamrick, Bapst, Sanchez{-}Gonzalez, Zambaldi, Malinowski, Tacchetti, Raposo, Santoro, Faulkner, G{\"{u}}l{\c{c}}ehre, Song, Ballard, Gilmer, Dahl, Vaswani, Allen, Nash, Langston, Dyer, Heess, Wierstra, Kohli, Botvinick, Vinyals, Li, and Pascanu]{battaglia2018relational}
Peter~W. Battaglia, Jessica~B. Hamrick, Victor Bapst, Alvaro Sanchez{-}Gonzalez, Vin{\'{\i}}cius~Flores Zambaldi, Mateusz Malinowski, Andrea Tacchetti, David Raposo, Adam Santoro, Ryan Faulkner, {\c{C}}aglar G{\"{u}}l{\c{c}}ehre, H.~Francis Song, Andrew~J. Ballard, Justin Gilmer, George~E. Dahl, Ashish Vaswani, Kelsey~R. Allen, Charles Nash, Victoria Langston, Chris Dyer, Nicolas Heess, Daan Wierstra, Pushmeet Kohli, Matthew~M. Botvinick, Oriol Vinyals, Yujia Li, and Razvan Pascanu.
\newblock Relational inductive biases, deep learning, and graph networks.
\newblock \emph{arXiv}, 2018.

\bibitem[Ben{-}Hamou et~al.(2018)Ben{-}Hamou, Oliveira, and Peres]{benhamou2018estimating}
Anna Ben{-}Hamou, Roberto~I. Oliveira, and Yuval Peres.
\newblock Estimating graph parameters via random walks with restarts.
\newblock In \emph{Annual {ACM-SIAM} Symposium on Discrete Algorithms}, 2018.

\bibitem[Benjamini et~al.(2006)Benjamini, Kozma, Lovasz, Romik, and Tardos]{benjamini2006waiting}
Itai Benjamini, Gady Kozma, Laszlo Lovasz, Dan Romik, and Gabor Tardos.
\newblock Waiting for a bat to fly by (in polynomial time).
\newblock \emph{Combinatorics, Probability and Computing}, 2006.

\bibitem[Bera \& Seshadhri(2020)Bera and Seshadhri]{bera2020count}
Suman~K Bera and C~Seshadhri.
\newblock How to count triangles, without seeing the whole graph.
\newblock In \emph{KDD}, 2020.

\bibitem[Black et~al.(2023)Black, Wan, Nayyeri, and Wang]{black2023understanding}
Mitchell Black, Zhengchao Wan, Amir Nayyeri, and Yusu Wang.
\newblock Understanding oversquashing in gnns through the lens of effective resistance.
\newblock In \emph{ICML}, 2023.

\bibitem[Bloem{-}Reddy \& Teh(2020)Bloem{-}Reddy and Teh]{bloem-reddy2020probabilistic}
Benjamin Bloem{-}Reddy and Yee~Whye Teh.
\newblock Probabilistic symmetries and invariant neural networks.
\newblock \emph{J. Mach. Learn. Res.}, 2020.

\bibitem[Bronstein et~al.(2021)Bronstein, Bruna, Cohen, and Velickovic]{bronstein2021geometric}
Michael~M. Bronstein, Joan Bruna, Taco Cohen, and Petar Velickovic.
\newblock Geometric deep learning: Grids, groups, graphs, geodesics, and gauges.
\newblock \emph{arXiv}, 2021.

\bibitem[Brown et~al.(2024)Brown, Juravsky, Ehrlich, Clark, Le, R{\'{e}}, and Mirhoseini]{brown2024large}
Bradley C.~A. Brown, Jordan Juravsky, Ryan~Saul Ehrlich, Ronald Clark, Quoc~V. Le, Christopher R{\'{e}}, and Azalia Mirhoseini.
\newblock Large language monkeys: Scaling inference compute with repeated sampling.
\newblock \emph{arXiv}, 2024.

\bibitem[Brown et~al.(2020)Brown, Mann, Ryder, Subbiah, Kaplan, Dhariwal, Neelakantan, Shyam, Sastry, Askell, Agarwal, Herbert{-}Voss, Krueger, Henighan, Child, Ramesh, Ziegler, Wu, Winter, Hesse, Chen, Sigler, Litwin, Gray, Chess, Clark, Berner, McCandlish, Radford, Sutskever, and Amodei]{brown2020language}
Tom~B. Brown, Benjamin Mann, Nick Ryder, Melanie Subbiah, Jared Kaplan, Prafulla Dhariwal, Arvind Neelakantan, Pranav Shyam, Girish Sastry, Amanda Askell, Sandhini Agarwal, Ariel Herbert{-}Voss, Gretchen Krueger, Tom Henighan, Rewon Child, Aditya Ramesh, Daniel~M. Ziegler, Jeffrey Wu, Clemens Winter, Christopher Hesse, Mark Chen, Eric Sigler, Mateusz Litwin, Scott Gray, Benjamin Chess, Jack Clark, Christopher Berner, Sam McCandlish, Alec Radford, Ilya Sutskever, and Dario Amodei.
\newblock Language models are few-shot learners.
\newblock In \emph{NeurIPS}, 2020.

\bibitem[Bussian(1996)]{bussian1996bounding}
Eric~Richard Bussian.
\newblock \emph{Bounding the edge cover time of random walks on graphs}.
\newblock Georgia Institute of Technology, 1996.

\bibitem[Cai et~al.(2021)Cai, Li, Wang, and Ji]{cai2021line}
Lei Cai, Jundong Li, Jie Wang, and Shuiwang Ji.
\newblock Line graph neural networks for link prediction.
\newblock \emph{IEEE Transactions on Pattern Analysis and Machine Intelligence}, 2021.

\bibitem[carnage(2012)]{carnage2012expected}
carnage.
\newblock Expected hitting time for simple random walk from origin to point (x,y) in 2d-integer-grid.
\newblock MathOverflow, 2012.
\newblock URL \url{https://mathoverflow.net/questions/112470}.
\newblock [Online:] \url{https://mathoverflow.net/questions/112470}.

\bibitem[Chandra et~al.(1997)Chandra, Raghavan, Ruzzo, Smolensky, and Tiwari]{chandra1997the}
Ashok~K. Chandra, Prabhakar Raghavan, Walter~L. Ruzzo, Roman Smolensky, and Prasoon Tiwari.
\newblock The electrical resistance of a graph captures its commute and cover times.
\newblock \emph{Comput. Complex.}, 1997.

\bibitem[Chen et~al.(2024{\natexlab{a}})Chen, Zhao, Jaiswal, Shah, and Wang]{chen2024llaga}
Runjin Chen, Tong Zhao, Ajay Jaiswal, Neil Shah, and Zhangyang Wang.
\newblock Llaga: Large language and graph assistant.
\newblock \emph{arXiv}, 2024{\natexlab{a}}.

\bibitem[Chen et~al.(2019)Chen, Villar, Chen, and Bruna]{chen2019on}
Zhengdao Chen, Soledad Villar, Lei Chen, and Joan Bruna.
\newblock On the equivalence between graph isomorphism testing and function approximation with gnns.
\newblock In \emph{NeurIPS}, 2019.

\bibitem[Chen et~al.(2020)Chen, Chen, Villar, and Bruna]{chen2020can}
Zhengdao Chen, Lei Chen, Soledad Villar, and Joan Bruna.
\newblock Can graph neural networks count substructures?
\newblock In \emph{NeurIPS}, 2020.

\bibitem[Chen et~al.(2023)Chen, Mao, Li, Jin, Wen, Wei, Wang, Yin, Fan, Liu, and Tang]{chen2023exploring}
Zhikai Chen, Haitao Mao, Hang Li, Wei Jin, Hongzhi Wen, Xiaochi Wei, Shuaiqiang Wang, Dawei Yin, Wenqi Fan, Hui Liu, and Jiliang Tang.
\newblock Exploring the potential of large language models (llms)in learning on graphs.
\newblock \emph{{SIGKDD} Explor.}, 2023.

\bibitem[Chen et~al.(2024{\natexlab{b}})Chen, Mao, Liu, Song, Li, Jin, Fatemi, Tsitsulin, Perozzi, Liu, and Tang]{chen2024text}
Zhikai Chen, Haitao Mao, Jingzhe Liu, Yu~Song, Bingheng Li, Wei Jin, Bahare Fatemi, Anton Tsitsulin, Bryan Perozzi, Hui Liu, and Jiliang Tang.
\newblock Text-space graph foundation models: Comprehensive benchmarks and new insights.
\newblock \emph{arXiv}, 2024{\natexlab{b}}.

\bibitem[Chenebaux(2020)]{chenebaux2020graph}
Maixent Chenebaux.
\newblock graph-walker: Fastest random walks generator on networkx graphs.
\newblock \url{https://github.com/kerighan/graph-walker}, 2020.

\bibitem[Chiericetti et~al.(2016)Chiericetti, Dasgupta, Kumar, Lattanzi, and Sarl{\'o}s]{chiericetti2016sampling}
Flavio Chiericetti, Anirban Dasgupta, Ravi Kumar, Silvio Lattanzi, and Tam{\'a}s Sarl{\'o}s.
\newblock On sampling nodes in a network.
\newblock In \emph{WWW}, 2016.

\bibitem[Coppersmith et~al.(1996)Coppersmith, Feige, and Shearer]{coppersmith1996random}
Don Coppersmith, Uriel Feige, and James~B. Shearer.
\newblock Random walks on regular and irregular graphs.
\newblock \emph{{SIAM} J. Discret. Math.}, 1996.

\bibitem[Cornish(2024)]{cornish2024stochastic}
Rob Cornish.
\newblock Stochastic neural network symmetrisation in markov categories.
\newblock \emph{arXiv}, 2024.

\bibitem[Cotta et~al.(2023)Cotta, Yehuda, Schuster, and Maddison]{cotta2023probabilistic}
Leonardo Cotta, Gal Yehuda, Assaf Schuster, and Chris~J. Maddison.
\newblock Probabilistic invariant learning with randomized linear classifiers.
\newblock In \emph{NeurIPS}, 2023.

\bibitem[Cybenko(1989)]{cybenko1989approximation}
George Cybenko.
\newblock Approximation by superpositions of a sigmoidal function.
\newblock \emph{Math. Control. Signals Syst.}, 1989.

\bibitem[Dasgupta et~al.(2014)Dasgupta, Kumar, and Sarlos]{dasgupta2014estimating}
Anirban Dasgupta, Ravi Kumar, and Tamas Sarlos.
\newblock On estimating the average degree.
\newblock In \emph{WWW}, 2014.

\bibitem[David \& Feige(2018)David and Feige]{david2018random}
Roee David and Uriel Feige.
\newblock Random walks with the minimum degree local rule have o(n\({}^{\mbox{2}}\)) cover time.
\newblock \emph{{SIAM} J. Comput.}, 2018.

\bibitem[de~Luca \& Fountoulakis(2024)de~Luca and Fountoulakis]{luca2024simulation}
Artur~Back de~Luca and Kimon Fountoulakis.
\newblock Simulation of graph algorithms with looped transformers.
\newblock \emph{arXiv}, 2024.

\bibitem[Del{\'{e}}tang et~al.(2023)Del{\'{e}}tang, Ruoss, Grau{-}Moya, Genewein, Wenliang, Catt, Cundy, Hutter, Legg, Veness, and Ortega]{deletang2023neural}
Gr{\'{e}}goire Del{\'{e}}tang, Anian Ruoss, Jordi Grau{-}Moya, Tim Genewein, Li~Kevin Wenliang, Elliot Catt, Chris Cundy, Marcus Hutter, Shane Legg, Joel Veness, and Pedro~A. Ortega.
\newblock Neural networks and the chomsky hierarchy.
\newblock In \emph{ICLR}, 2023.

\bibitem[Dettmers et~al.(2023)Dettmers, Pagnoni, Holtzman, and Zettlemoyer]{dettmers2023qlora}
Tim Dettmers, Artidoro Pagnoni, Ari Holtzman, and Luke Zettlemoyer.
\newblock Qlora: Efficient finetuning of quantized llms.
\newblock In \emph{NeurIPS}, 2023.

\bibitem[Devriendt \& Lambiotte(2022)Devriendt and Lambiotte]{devriendt2022discrete}
Karel Devriendt and Renaud Lambiotte.
\newblock Discrete curvature on graphs from the effective resistance.
\newblock \emph{Journal of Physics: Complexity}, 2022.

\bibitem[Ding et~al.(2011)Ding, Lee, and Peres]{ding2011cover}
Jian Ding, James~R Lee, and Yuval Peres.
\newblock Cover times, blanket times, and majorizing measures.
\newblock In \emph{Annual ACM symposium on Theory of computing}, 2011.

\bibitem[Doyle \& Snell(1984)Doyle and Snell]{doyle1984random}
Peter~G Doyle and J~Laurie Snell.
\newblock \emph{Random walks and electric networks}.
\newblock American Mathematical Soc., 1984.

\bibitem[Dubey et~al.(2024)Dubey, Jauhri, Pandey, Kadian, Al{-}Dahle, Letman, Mathur, Schelten, Yang, Fan, Goyal, Hartshorn, Yang, Mitra, Sravankumar, Korenev, Hinsvark, Rao, Zhang, Rodriguez, Gregerson, Spataru, Rozi{\`{e}}re, Biron, Tang, Chern, and et~al.]{meta2024llama3}
Abhimanyu Dubey, Abhinav Jauhri, Abhinav Pandey, Abhishek Kadian, Ahmad Al{-}Dahle, Aiesha Letman, Akhil Mathur, Alan Schelten, Amy Yang, Angela Fan, Anirudh Goyal, Anthony Hartshorn, Aobo Yang, Archi Mitra, Archie Sravankumar, Artem Korenev, Arthur Hinsvark, Arun Rao, Aston Zhang, Aur{\'{e}}lien Rodriguez, Austen Gregerson, Ava Spataru, Baptiste Rozi{\`{e}}re, Bethany Biron, Binh Tang, Bobbie Chern, and et~al.
\newblock The llama 3 herd of models.
\newblock \emph{arXiv}, abs/2407.21783, 2024.

\bibitem[Dumitriu et~al.(2003)Dumitriu, Tetali, and Winkler]{dumitriu2003on}
Ioana Dumitriu, Prasad Tetali, and Peter Winkler.
\newblock On playing golf with two balls.
\newblock \emph{{SIAM} J. Discret. Math.}, 2003.

\bibitem[Dwivedi et~al.(2022)Dwivedi, Ramp{\'{a}}sek, Galkin, Parviz, Wolf, Luu, and Beaini]{dwivedi2022long}
Vijay~Prakash Dwivedi, Ladislav Ramp{\'{a}}sek, Michael Galkin, Ali Parviz, Guy Wolf, Anh~Tuan Luu, and Dominique Beaini.
\newblock Long range graph benchmark.
\newblock In \emph{NeurIPS}, 2022.

\bibitem[Dym et~al.(2024)Dym, Lawrence, and Siegel]{dym2024equivariant}
Nadav Dym, Hannah Lawrence, and Jonathan~W. Siegel.
\newblock Equivariant frames and the impossibility of continuous canonicalization.
\newblock In \emph{ICML}, 2024.

\bibitem[Elesedy(2022)]{elesedy2022group}
Bryn Elesedy.
\newblock Group symmetry in pac learning.
\newblock In \emph{ICLR workshop on geometrical and topological representation learning}, 2022.

\bibitem[Elesedy(2023)]{elesedy2023symmetry}
Bryn Elesedy.
\newblock \emph{Symmetry and Generalisation in Machine Learning}.
\newblock PhD thesis, University of Oxford, 2023.

\bibitem[Errica et~al.(2023)Errica, Christiansen, Zaverkin, Maruyama, Niepert, and Alesiani]{errica2023adaptive}
Federico Errica, Henrik Christiansen, Viktor Zaverkin, Takashi Maruyama, Mathias Niepert, and Francesco Alesiani.
\newblock Adaptive message passing: A general framework to mitigate oversmoothing, oversquashing, and underreaching.
\newblock \emph{arXiv}, 2023.

\bibitem[Falcon(2019)]{falcon2019lightning}
William Falcon.
\newblock Pytorch lightning.
\newblock \url{https://github.com/Lightning-AI/lightning}, 2019.

\bibitem[Fasino et~al.(2021)Fasino, Tonetto, and Tudisco]{fasino2021hitting}
Dario Fasino, Arianna Tonetto, and Francesco Tudisco.
\newblock Hitting times for non-backtracking random walks.
\newblock \emph{arXiv}, 2021.

\bibitem[Fatemi et~al.(2023)Fatemi, Halcrow, and Perozzi]{fatemi2023talk}
Bahare Fatemi, Jonathan Halcrow, and Bryan Perozzi.
\newblock Talk like a graph: Encoding graphs for large language models.
\newblock \emph{arXiv}, 2023.

\bibitem[Feige(1995)]{feige1997a}
Uriel Feige.
\newblock A tight upper bound on the cover time for random walks on graphs.
\newblock \emph{Random Struct. Algorithms}, 1995.

\bibitem[Fey \& Lenssen(2019)Fey and Lenssen]{fey2019fast}
Matthias Fey and Jan~E. Lenssen.
\newblock Fast graph representation learning with {PyTorch Geometric}.
\newblock In \emph{ICLR Workshop on Representation Learning on Graphs and Manifolds}, 2019.

\bibitem[Fitzner \& van~der Hofstad(2013)Fitzner and van~der Hofstad]{fitzner2013non}
Robert Fitzner and Remco van~der Hofstad.
\newblock Non-backtracking random walk.
\newblock \emph{Journal of Statistical Physics}, 2013.

\bibitem[Gasteiger et~al.(2019)Gasteiger, Bojchevski, and G{\"{u}}nnemann]{gasteiger2019predict}
Johannes Gasteiger, Aleksandar Bojchevski, and Stephan G{\"{u}}nnemann.
\newblock Predict then propagate: Graph neural networks meet personalized pagerank.
\newblock In \emph{ICLR}, 2019.

\bibitem[Georgakopoulos \& Winkler(2014)Georgakopoulos and Winkler]{georgakopoulos2014new}
Agelos Georgakopoulos and Peter Winkler.
\newblock New bounds for edge-cover by random walk.
\newblock \emph{Comb. Probab. Comput.}, 2014.

\bibitem[Gilmer et~al.(2017)Gilmer, Schoenholz, Riley, Vinyals, and Dahl]{gilmer2017neural}
Justin Gilmer, Samuel~S. Schoenholz, Patrick~F. Riley, Oriol Vinyals, and George~E. Dahl.
\newblock Neural message passing for quantum chemistry.
\newblock In \emph{ICML}, 2017.

\bibitem[Giovanni et~al.(2023)Giovanni, Rusch, Bronstein, Deac, Lackenby, Mishra, and Velickovic]{giovanni2023how}
Francesco~Di Giovanni, T.~Konstantin Rusch, Michael~M. Bronstein, Andreea Deac, Marc Lackenby, Siddhartha Mishra, and Petar Velickovic.
\newblock How does over-squashing affect the power of gnns?
\newblock \emph{arXiv}, 2023.

\bibitem[Giraldo et~al.(2023)Giraldo, Skianis, Bouwmans, and Malliaros]{giraldo2023on}
Jhony~H. Giraldo, Konstantinos Skianis, Thierry Bouwmans, and Fragkiskos~D. Malliaros.
\newblock On the trade-off between over-smoothing and over-squashing in deep graph neural networks.
\newblock In \emph{CIKM}, 2023.

\bibitem[Grady(2006)]{grady2006random}
Leo~J. Grady.
\newblock Random walks for image segmentation.
\newblock \emph{{IEEE} Trans. Pattern Anal. Mach. Intell.}, 2006.

\bibitem[Grover \& Leskovec(2016)Grover and Leskovec]{grover2016node2vec}
Aditya Grover and Jure Leskovec.
\newblock node2vec: Scalable feature learning for networks.
\newblock In \emph{KDD}, 2016.

\bibitem[Hamilton et~al.(2017)Hamilton, Ying, and Leskovec]{hamilton2017inductive}
William~L. Hamilton, Zhitao Ying, and Jure Leskovec.
\newblock Inductive representation learning on large graphs.
\newblock In \emph{NeurIPS}, 2017.

\bibitem[He et~al.(2021)He, Liu, Gao, and Chen]{he2021deberta}
Pengcheng He, Xiaodong Liu, Jianfeng Gao, and Weizhu Chen.
\newblock Deberta: decoding-enhanced bert with disentangled attention.
\newblock In \emph{ICLR}, 2021.

\bibitem[He et~al.(2023)He, Bresson, Laurent, Perold, LeCun, and Hooi]{he2023harnessing}
Xiaoxin He, Xavier Bresson, Thomas Laurent, Adam Perold, Yann LeCun, and Bryan Hooi.
\newblock Harnessing explanations: Llm-to-lm interpreter for enhanced text-attributed graph representation learning.
\newblock In \emph{ICLR}, 2023.

\bibitem[Hein()]{heinwald}
Jeffery Hein.
\newblock Wald’s identity.

\bibitem[Hornik et~al.(1989)Hornik, Stinchcombe, and White]{hornik1989multilayer}
Kurt Hornik, Maxwell~B. Stinchcombe, and Halbert White.
\newblock Multilayer feedforward networks are universal approximators.
\newblock \emph{Neural Networks}, 1989.

\bibitem[Hu et~al.(2022)Hu, Shen, Wallis, Allen{-}Zhu, Li, Wang, Wang, and Chen]{hu2022lora}
Edward~J. Hu, Yelong Shen, Phillip Wallis, Zeyuan Allen{-}Zhu, Yuanzhi Li, Shean Wang, Lu~Wang, and Weizhu Chen.
\newblock Lora: Low-rank adaptation of large language models.
\newblock In \emph{ICLR}, 2022.

\bibitem[Hu et~al.(2020)Hu, Fey, Zitnik, Dong, Ren, Liu, Catasta, and Leskovec]{hu2020open}
Weihua Hu, Matthias Fey, Marinka Zitnik, Yuxiao Dong, Hongyu Ren, Bowen Liu, Michele Catasta, and Jure Leskovec.
\newblock Open graph benchmark: Datasets for machine learning on graphs.
\newblock In \emph{NeurIPS}, 2020.

\bibitem[Huang et~al.(2020)Huang, He, Singh, Lim, and Benson]{huang2020combining}
Qian Huang, Horace He, Abhay Singh, Ser-Nam Lim, and Austin~R Benson.
\newblock Combining label propagation and simple models out-performs graph neural networks.
\newblock \emph{arXiv}, 2020.

\bibitem[Ikeda et~al.(2009)Ikeda, Kubo, and Yamashita]{ikeda2009the}
Satoshi Ikeda, Izumi Kubo, and Masafumi Yamashita.
\newblock The hitting and cover times of random walks on finite graphs using local degree information.
\newblock \emph{Theor. Comput. Sci.}, 2009.

\bibitem[Ivanov \& Burnaev(2018)Ivanov and Burnaev]{ivanov2018anonymous}
Sergey Ivanov and Evgeny Burnaev.
\newblock Anonymous walk embeddings.
\newblock In \emph{ICML}, 2018.

\bibitem[Janson \& Peres(2012)Janson and Peres]{janson2012hitting}
Svante Janson and Yuval Peres.
\newblock Hitting times for random walks with restarts.
\newblock \emph{{SIAM} J. Discret. Math.}, 2012.

\bibitem[Kahn et~al.(1989)Kahn, Linial, Nisan, and Saks]{kahn1989cover}
Jeff~D Kahn, Nathan Linial, Noam Nisan, and Michael~E Saks.
\newblock On the cover time of random walks on graphs.
\newblock \emph{Journal of Theoretical Probability}, 1989.

\bibitem[Kempton(2016)]{kempton2016non}
Mark Kempton.
\newblock Non-backtracking random walks and a weighted ihara's theorem.
\newblock \emph{arXiv}, 2016.

\bibitem[Kim et~al.(2022)Kim, Nguyen, Min, Cho, Lee, Lee, and Hong]{kim2022pure}
Jinwoo Kim, Dat Nguyen, Seonwoo Min, Sungjun Cho, Moontae Lee, Honglak Lee, and Seunghoon Hong.
\newblock Pure transformers are powerful graph learners.
\newblock In \emph{NeurIPS}, 2022.

\bibitem[Kim et~al.(2023)Kim, Nguyen, Suleymanzade, An, and Hong]{kim2023learning}
Jinwoo Kim, Dat Nguyen, Ayhan Suleymanzade, Hyeokjun An, and Seunghoon Hong.
\newblock Learning probabilistic symmetrization for architecture agnostic equivariance.
\newblock In \emph{NeurIPS}, 2023.

\bibitem[Kipf \& Welling(2017)Kipf and Welling]{kipf2017semi}
Thomas~N. Kipf and Max Welling.
\newblock Semi-supervised classification with graph convolutional networks.
\newblock In \emph{ICLR}, 2017.

\bibitem[Klubicka et~al.(2019)Klubicka, Maldonado, Mahalunkar, and Kelleher]{flubicka2019synthetic}
Filip Klubicka, Alfredo Maldonado, Abhijit Mahalunkar, and John~D. Kelleher.
\newblock Synthetic, yet natural: Properties of wordnet random walk corpora and the impact of rare words on embedding performance.
\newblock In \emph{GWC}, 2019.

\bibitem[Klubicka et~al.(2020)Klubicka, Maldonado, Mahalunkar, and Kelleher]{flubicka2020english}
Filip Klubicka, Alfredo Maldonado, Abhijit Mahalunkar, and John~D. Kelleher.
\newblock English wordnet random walk pseudo-corpora.
\newblock In \emph{LREC}, 2020.

\bibitem[Kwon et~al.(2023)Kwon, Li, Zhuang, Sheng, Zheng, Yu, Gonzalez, Zhang, and Stoica]{kwon2023efficient}
Woosuk Kwon, Zhuohan Li, Siyuan Zhuang, Ying Sheng, Lianmin Zheng, Cody~Hao Yu, Joseph~E. Gonzalez, Hao Zhang, and Ion Stoica.
\newblock Efficient memory management for large language model serving with pagedattention.
\newblock In \emph{Proceedings of the ACM SIGOPS 29th Symposium on Operating Systems Principles}, 2023.

\bibitem[Liu et~al.(2024)Liu, Feng, Kong, Liang, Tao, Chen, and Zhang]{liu2024one}
Hao Liu, Jiarui Feng, Lecheng Kong, Ningyue Liang, Dacheng Tao, Yixin Chen, and Muhan Zhang.
\newblock One for all: Towards training one graph model for all classification tasks.
\newblock In \emph{ICLR}, 2024.

\bibitem[Loshchilov \& Hutter(2019)Loshchilov and Hutter]{loshchilov2019decoupled}
Ilya Loshchilov and Frank Hutter.
\newblock Decoupled weight decay regularization.
\newblock In \emph{ICLR}, 2019.

\bibitem[Loukas(2020)]{loukas2020what}
Andreas Loukas.
\newblock What graph neural networks cannot learn: depth vs width.
\newblock In \emph{ICLR}, 2020.

\bibitem[Lov{\'a}sz(1993)]{lovasz1993random}
L{\'a}szl{\'o} Lov{\'a}sz.
\newblock Random walks on graphs.
\newblock \emph{Combinatorics, Paul erdos is eighty}, 1993.

\bibitem[Lu et~al.(2022)Lu, Grover, Abbeel, and Mordatch]{lu2022pretrained}
Kevin Lu, Aditya Grover, Pieter Abbeel, and Igor Mordatch.
\newblock Frozen pretrained transformers as universal computation engines.
\newblock In \emph{AAAI}, 2022.

\bibitem[Lyle et~al.(2020)Lyle, van~der Wilk, Kwiatkowska, Gal, and Bloem{-}Reddy]{lyle2020on}
Clare Lyle, Mark van~der Wilk, Marta Kwiatkowska, Yarin Gal, and Benjamin Bloem{-}Reddy.
\newblock On the benefits of invariance in neural networks.
\newblock \emph{arXiv}, 2020.

\bibitem[Martinkus et~al.(2023)Martinkus, Papp, Schesch, and Wattenhofer]{martinkus2023agent}
Karolis Martinkus, P{\'{a}}l~Andr{\'{a}}s Papp, Benedikt Schesch, and Roger Wattenhofer.
\newblock Agent-based graph neural networks.
\newblock In \emph{ICLR}, 2023.

\bibitem[McNew(2013)]{mcnew2013random}
Nathan McNew.
\newblock Random walks with restarts, 3 examples, 2013.

\bibitem[Micali \& Zhu(2016)Micali and Zhu]{micali2016reconstructing}
Silvio Micali and Zeyuan~Allen Zhu.
\newblock Reconstructing markov processes from independent and anonymous experiments.
\newblock \emph{Discret. Appl. Math.}, 2016.

\bibitem[Morris et~al.(2019)Morris, Ritzert, Fey, Hamilton, Lenssen, Rattan, and Grohe]{morris2019weisfeiler}
Christopher Morris, Martin Ritzert, Matthias Fey, William~L. Hamilton, Jan~Eric Lenssen, Gaurav Rattan, and Martin Grohe.
\newblock Weisfeiler and leman go neural: Higher-order graph neural networks.
\newblock In \emph{AAAI}, 2019.

\bibitem[Murphy et~al.(2019)Murphy, Srinivasan, Rao, and Ribeiro]{murphy2019relational}
Ryan~L. Murphy, Balasubramaniam Srinivasan, Vinayak~A. Rao, and Bruno Ribeiro.
\newblock Relational pooling for graph representations.
\newblock In \emph{ICML}, 2019.

\bibitem[Nguyen et~al.(2023)Nguyen, Hieu, Nguyen, Ho, Osher, and Nguyen]{nguyen2023revisiting}
Khang Nguyen, Nong~Minh Hieu, Vinh~Duc Nguyen, Nhat Ho, Stanley~J. Osher, and Tan~Minh Nguyen.
\newblock Revisiting over-smoothing and over-squashing using ollivier-ricci curvature.
\newblock In \emph{ICML}, 2023.

\bibitem[Oliveira(2012)]{oliveira2012mixing}
Roberto Oliveira.
\newblock Mixing and hitting times for finite markov chains, 2012.

\bibitem[Ollivier(2009)]{ollivier2009ricci}
Yann Ollivier.
\newblock Ricci curvature of markov chains on metric spaces.
\newblock \emph{Journal of Functional Analysis}, 2009.

\bibitem[Page et~al.(1999)Page, Brin, Motwani, Winograd, et~al.]{page1999pagerank}
Lawrence Page, Sergey Brin, Rajeev Motwani, Terry Winograd, et~al.
\newblock The pagerank citation ranking: Bringing order to the web, 1999.

\bibitem[Panotopoulou(2013)]{panotopoulou2013bounds}
Athina Panotopoulou.
\newblock Bounds for edge-cover by random walks, 2013.

\bibitem[Papp et~al.(2021)Papp, Martinkus, Faber, and Wattenhofer]{papp2021dropgnn}
P{\'a}l~Andr{\'a}s Papp, Karolis Martinkus, Lukas Faber, and Roger Wattenhofer.
\newblock Dropgnn: Random dropouts increase the expressiveness of graph neural networks.
\newblock \emph{NeurIPS}, 2021.

\bibitem[Park et~al.(2023)Park, Ryu, Kim, Woo, Yun, and Ahn]{park2023non}
Seonghyun Park, Narae Ryu, Gahee Kim, Dongyeop Woo, Se-Young Yun, and Sungsoo Ahn.
\newblock Non-backtracking graph neural networks.
\newblock In \emph{NeurIPS Workshop: New Frontiers in Graph Learning}, 2023.

\bibitem[Paszke et~al.(2019)Paszke, Gross, Massa, Lerer, Bradbury, Chanan, Killeen, Lin, Gimelshein, Antiga, Desmaison, Kopf, Yang, DeVito, Raison, Tejani, Chilamkurthy, Steiner, Fang, Bai, and Chintala]{adam2019pytorch}
Adam Paszke, Sam Gross, Francisco Massa, Adam Lerer, James Bradbury, Gregory Chanan, Trevor Killeen, Zeming Lin, Natalia Gimelshein, Luca Antiga, Alban Desmaison, Andreas Kopf, Edward Yang, Zachary DeVito, Martin Raison, Alykhan Tejani, Sasank Chilamkurthy, Benoit Steiner, Lu~Fang, Junjie Bai, and Soumith Chintala.
\newblock Pytorch: An imperative style, high-performance deep learning library.
\newblock In \emph{NeurIPS}, 2019.

\bibitem[Peres \& Sousi(2015)Peres and Sousi]{peres2015mixing}
Yuval Peres and Perla Sousi.
\newblock Mixing times are hitting times of large sets.
\newblock \emph{Journal of Theoretical Probability}, 2015.

\bibitem[Perozzi et~al.(2014)Perozzi, Al{-}Rfou, and Skiena]{perozzi2014deepwalk}
Bryan Perozzi, Rami Al{-}Rfou, and Steven Skiena.
\newblock Deepwalk: online learning of social representations.
\newblock In \emph{KDD}, 2014.

\bibitem[Platonov et~al.(2023{\natexlab{a}})Platonov, Kuznedelev, Babenko, and Prokhorenkova]{platonov2023characterizing}
Oleg Platonov, Denis Kuznedelev, Artem Babenko, and Liudmila Prokhorenkova.
\newblock Characterizing graph datasets for node classification: Homophily-heterophily dichotomy and beyond.
\newblock In \emph{NeurIPS}, 2023{\natexlab{a}}.

\bibitem[Platonov et~al.(2023{\natexlab{b}})Platonov, Kuznedelev, Diskin, Babenko, and Prokhorenkova]{platonov2023critical}
Oleg Platonov, Denis Kuznedelev, Michael Diskin, Artem Babenko, and Liudmila Prokhorenkova.
\newblock A critical look at the evaluation of gnns under heterophily: Are we really making progress?
\newblock \emph{arXiv}, 2023{\natexlab{b}}.

\bibitem[Puny et~al.(2020)Puny, Ben-Hamu, and Lipman]{puny2020global}
Omri Puny, Heli Ben-Hamu, and Yaron Lipman.
\newblock Global attention improves graph networks generalization.
\newblock \emph{arXiv}, 2020.

\bibitem[Puny et~al.(2022)Puny, Atzmon, Smith, Misra, Grover, Ben{-}Hamu, and Lipman]{puny2022frame}
Omri Puny, Matan Atzmon, Edward~J. Smith, Ishan Misra, Aditya Grover, Heli Ben{-}Hamu, and Yaron Lipman.
\newblock Frame averaging for invariant and equivariant network design.
\newblock In \emph{ICLR}, 2022.

\bibitem[Rappaport et~al.(2017)Rappaport, Gamage, and Aeron]{rappaport2017faster}
Brian Rappaport, Anuththari Gamage, and Shuchin Aeron.
\newblock Faster clustering via non-backtracking random walks.
\newblock \emph{arXiv}, 2017.

\bibitem[Rothermel et~al.(2021)Rothermel, Li, Rocktäschel, and Foerster]{rothermel2021dont}
Danielle Rothermel, Margaret Li, Tim Rocktäschel, and Jakob Foerster.
\newblock Don't sweep your learning rate under the rug: A closer look at cross-modal transfer of pretrained transformers.
\newblock \emph{arXiv}, 2021.

\bibitem[Sanford et~al.(2024)Sanford, Fatemi, Hall, Tsitsulin, Kazemi, Halcrow, Perozzi, and Mirrokni]{sanford2024understanding}
Clayton Sanford, Bahare Fatemi, Ethan Hall, Anton Tsitsulin, Mehran Kazemi, Jonathan Halcrow, Bryan Perozzi, and Vahab Mirrokni.
\newblock Understanding transformer reasoning capabilities via graph algorithms.
\newblock \emph{arXiv}, 2024.

\bibitem[Sato(2024)]{sato2024training}
Ryoma Sato.
\newblock Training-free graph neural networks and the power of labels as features.
\newblock \emph{arXiv}, 2024.

\bibitem[Sieradzki et~al.(2022)Sieradzki, Hodos, Yehuda, and Schuster]{sieradzki2022coin}
Yuval Sieradzki, Nitzan Hodos, Gal Yehuda, and Assaf Schuster.
\newblock Coin flipping neural networks.
\newblock In \emph{ICML}, 2022.

\bibitem[Spielman(2019)]{spielman2019spectral}
Daniel Spielman.
\newblock Spectral and algebraic graph theory.
\newblock \emph{Yale lecture notes, draft of December}, 2019.

\bibitem[Tahmasebi et~al.(2023)Tahmasebi, Lim, and Jegelka]{tahmasebi2023the}
Behrooz Tahmasebi, Derek Lim, and Stefanie Jegelka.
\newblock The power of recursion in graph neural networks for counting substructures.
\newblock In \emph{AISTATS}, 2023.

\bibitem[Tan et~al.(2023)Tan, Zhou, Lv, Liu, and Yang]{tan2023walklm}
Yanchao Tan, Zihao Zhou, Hang Lv, Weiming Liu, and Carl Yang.
\newblock Walklm: {A} uniform language model fine-tuning framework for attributed graph embedding.
\newblock In \emph{NeurIPS}, 2023.

\bibitem[T{\"o}nshoff et~al.(2023)T{\"o}nshoff, Ritzert, Wolf, and Grohe]{tonshoff2023walking}
Jan T{\"o}nshoff, Martin Ritzert, Hinrikus Wolf, and Martin Grohe.
\newblock Walking out of the weisfeiler leman hierarchy: Graph learning beyond message passing.
\newblock \emph{Transactions on Machine Learning Research}, 2023.

\bibitem[T{\"{o}}nshoff et~al.(2024)T{\"{o}}nshoff, Ritzert, Rosenbluth, and Grohe]{tonshoff2024where}
Jan T{\"{o}}nshoff, Martin Ritzert, Eran Rosenbluth, and Martin Grohe.
\newblock Where did the gap go? reassessing the long-range graph benchmark.
\newblock \emph{Trans. Mach. Learn. Res.}, 2024, 2024.

\bibitem[Topping et~al.(2022)Topping, Giovanni, Chamberlain, Dong, and Bronstein]{topping2022understanding}
Jake Topping, Francesco~Di Giovanni, Benjamin~Paul Chamberlain, Xiaowen Dong, and Michael~M. Bronstein.
\newblock Understanding over-squashing and bottlenecks on graphs via curvature.
\newblock In \emph{ICLR}, 2022.

\bibitem[Wang et~al.(2023)Wang, Feng, He, Tan, Han, and Tsvetkov]{wang2023can}
Heng Wang, Shangbin Feng, Tianxing He, Zhaoxuan Tan, Xiaochuang Han, and Yulia Tsvetkov.
\newblock Can language models solve graph problems in natural language?
\newblock In \emph{NeurIPS}, 2023.

\bibitem[Wang et~al.(2019)Wang, Zheng, Ye, Gan, Li, Song, Zhou, Ma, Yu, Gai, et~al.]{wang2019deep}
Minjie Wang, Da~Zheng, Zihao Ye, Quan Gan, Mufei Li, Xiang Song, Jinjing Zhou, Chao Ma, Lingfan Yu, Yu~Gai, et~al.
\newblock Deep graph library: A graph-centric, highly-performant package for graph neural networks.
\newblock \emph{arXiv}, 2019.

\bibitem[Wang \& Cho(2024)Wang and Cho]{wang2024non}
Yuanqing Wang and Kyunghyun Cho.
\newblock Non-convolutional graph neural networks.
\newblock \emph{arXiv}, 2024.

\bibitem[Weiss et~al.(2021)Weiss, Goldberg, and Yahav]{weiss2021thinking}
Gail Weiss, Yoav Goldberg, and Eran Yahav.
\newblock Thinking like transformers.
\newblock In Marina Meila and Tong Zhang (eds.), \emph{ICML}, 2021.

\bibitem[Wolf et~al.(2019)Wolf, Debut, Sanh, Chaumond, Delangue, Moi, Cistac, Rault, Louf, Funtowicz, and Brew]{wolf2019huggingface}
Thomas Wolf, Lysandre Debut, Victor Sanh, Julien Chaumond, Clement Delangue, Anthony Moi, Pierric Cistac, Tim Rault, R{\'{e}}mi Louf, Morgan Funtowicz, and Jamie Brew.
\newblock Huggingface's transformers: State-of-the-art natural language processing.
\newblock \emph{arXiv}, 2019.

\bibitem[Wu et~al.(2023)Wu, Ajorlou, Wu, and Jadbabaie]{wu2023demystifying}
Xinyi Wu, Amir Ajorlou, Zihui Wu, and Ali Jadbabaie.
\newblock Demystifying oversmoothing in attention-based graph neural networks.
\newblock In \emph{NeurIPS}, 2023.

\bibitem[Xu et~al.(2019)Xu, Hu, Leskovec, and Jegelka]{xu2019how}
Keyulu Xu, Weihua Hu, Jure Leskovec, and Stefanie Jegelka.
\newblock How powerful are graph neural networks?
\newblock In \emph{ICLR}, 2019.

\bibitem[Yan et~al.(2022)Yan, Hashemi, Swersky, Yang, and Koutra]{yan2022two}
Yujun Yan, Milad Hashemi, Kevin Swersky, Yaoqing Yang, and Danai Koutra.
\newblock Two sides of the same coin: Heterophily and oversmoothing in graph convolutional neural networks.
\newblock In \emph{2022 IEEE International Conference on Data Mining (ICDM)}, 2022.

\bibitem[Ye et~al.(2024)Ye, Zhang, Wang, Xu, and Zhang]{ye2024language}
Ruosong Ye, Caiqi Zhang, Runhui Wang, Shuyuan Xu, and Yongfeng Zhang.
\newblock Language is all a graph needs.
\newblock In \emph{Findings of EACL}, 2024.

\bibitem[Yun et~al.(2020)Yun, Bhojanapalli, Rawat, Reddi, and Kumar]{yun2020are}
Chulhee Yun, Srinadh Bhojanapalli, Ankit~Singh Rawat, Sashank~J. Reddi, and Sanjiv Kumar.
\newblock Are transformers universal approximators of sequence-to-sequence functions?
\newblock In \emph{ICLR}, 2020.

\bibitem[Zhang et~al.(2024)Zhang, Gai, Du, Ye, He, and Wang]{zhang2024beyond}
Bohang Zhang, Jingchu Gai, Yiheng Du, Qiwei Ye, Di~He, and Liwei Wang.
\newblock Beyond weisfeiler-lehman: A quantitative framework for gnn expressiveness.
\newblock \emph{arXiv}, 2024.

\bibitem[Zhao et~al.(2022{\natexlab{a}})Zhao, Qu, Li, Yan, Liu, Li, Xie, and Tang]{zhao2022learning}
Jianan Zhao, Meng Qu, Chaozhuo Li, Hao Yan, Qian Liu, Rui Li, Xing Xie, and Jian Tang.
\newblock Learning on large-scale text-attributed graphs via variational inference.
\newblock \emph{arXiv}, 2022{\natexlab{a}}.

\bibitem[Zhao et~al.(2023)Zhao, Zhuo, Shen, Qu, Liu, Bronstein, Zhu, and Tang]{zhao2023graphtext}
Jianan Zhao, Le~Zhuo, Yikang Shen, Meng Qu, Kai Liu, Michael~M. Bronstein, Zhaocheng Zhu, and Jian Tang.
\newblock Graphtext: Graph reasoning in text space.
\newblock \emph{arXiv}, 2023.

\bibitem[Zhao \& Akoglu(2019)Zhao and Akoglu]{zhao2019pairnorm}
Lingxiao Zhao and Leman Akoglu.
\newblock Pairnorm: Tackling oversmoothing in gnns.
\newblock \emph{arXiv}, 2019.

\bibitem[Zhao et~al.(2022{\natexlab{b}})Zhao, Jin, Akoglu, and Shah]{zhao2022from}
Lingxiao Zhao, Wei Jin, Leman Akoglu, and Neil Shah.
\newblock From stars to subgraphs: Uplifting any {GNN} with local structure awareness.
\newblock In \emph{ICLR}, 2022{\natexlab{b}}.

\bibitem[Zhu \& Ghahramani(2002)Zhu and Ghahramani]{zhu2002learning}
Xiaojin Zhu and Zoubin Ghahramani.
\newblock Learning from labeled and unlabeled data with label propagation.
\newblock \emph{ProQuest number: information to all users}, 2002.

\bibitem[Zhu(2005)]{zhu2005semi}
Xiaojin~Jerry Zhu.
\newblock Semi-supervised learning literature survey, 2005.

\bibitem[Zhu et~al.(2024)Zhu, Wang, Shi, and Tang]{zhu2024efficient}
Yun Zhu, Yaoke Wang, Haizhou Shi, and Siliang Tang.
\newblock Efficient tuning and inference for large language models on textual graphs.
\newblock In \emph{IJCAI}, 2024.

\bibitem[Zuckerman(1991)]{zuckerman1991on}
David Zuckerman.
\newblock On the time to traverse all edges of a graph.
\newblock \emph{Inf. Process. Lett.}, 1991.

\end{thebibliography}
\bibliographystyle{iclr2025_conference}
}

\newpage
\appendix
\section{Appendix}\label{sec:appendix}

\subsection{Second-Order Random Walks~(Section~\ref{sec:algorithm})}\label{sec:invariance_second_order}

In second-order random walks, the probability of choosing $v_{t+1}$ depends not only on $v_t$ but also on $v_{t-1}$.
We consider non-backtracking~\citep{alon2007non, fitzner2013non} and node2vec~\citep{grover2016node2vec} random walks as representatives.

A non-backtracking random walk is defined upon a first-order random walk algorithm, e.g. one in \Eqref{eq:first_order_random_walk}, by enforcing $v_{t+1}\neq v_{t-1}$:
\begin{align}\label{eq:non_backtracking_random_walk}
    \mathrm{Prob}[v_{t+1}=x|v_t=j,v_{t-1}=i] \coloneq
    \begin{dcases*}
        \frac{\mathrm{Prob}[v_{t+1}=x|v_t=j]}{\sum_{y\in N(j)\setminus\{i\}}\mathrm{Prob}[v_{t+1}=y|v_t=j]} & for $x\neq i$, \\
        0\vphantom{\frac{0}{0}} & for $x=i$,
    \end{dcases*}
\end{align}
where $\mathrm{Prob}[v_{t+1}=x|v_t=j]$ is the probability of walking $j\to x$ given by the underlying first-order random walk.
Notice that the probabilities are renormalized over $N(j)\setminus\{i\}$.
This is ill-defined in the case the walk traverses $i\to j$ and reaches a dangling vertex $j$ which has $i$ as its only neighbor, since $N(j)\setminus\{i\}=\emptyset$.
In such cases, we allow the random walk to "begrudgingly" backtrack~\citep{rappaport2017faster}, $i\to j$ and then $j\to i$, given that it is the only possible choice due to the dangling of $j$.

In case of node2vec random walk~\citep{grover2016node2vec}, a weighting term $\alpha(v_{t-1}, v_{t+1})$ with two (hyper)parameters, return $p$ and in-out $q$, is introduced to modify the behavior of a first-order random walk:
\begin{align}\label{eq:node2vec_random_walk}
    \mathrm{Prob}[v_{t+1}=x|v_t=j,v_{t-1}=i] \coloneq \frac{\alpha(i, x)\,\mathrm{Prob}[v_{t+1}=x|v_t=j]}{\sum_{y\in N(j)}\alpha(i, y)\,\mathrm{Prob}[v_{t+1}=y|v_t=j]},
\end{align}
where the probability $\mathrm{Prob}[v_{t+1}=x|v_t=j]$ is given by the underlying first-order random walk and $\alpha(v_{t-1}, v_{t+1})$ is defined as follows using the shortest path distance $d(v_{t-1}, v_{t+1})$:
\begin{align}\label{eq:node2vec_weighting}
    \alpha(v_{t-1}, v_{t+1}) \coloneq
    \begin{dcases*}
        1/p & for $d(v_{t-1}, v_{t+1}) = 0$, \\
        1 & for $d(v_{t-1}, v_{t+1}) = 1$, \\
        1/q & for $d(v_{t-1}, v_{t+1}) = 2$. \\
    \end{dcases*}
\end{align}
Choosing a large return parameter $p$ reduces backtracking since it decreases the probability of walking from $v_t$ to $v_{t-1}$.
Choosing a small in-out parameter $q$ has a similar effect of avoiding $v_{t-1}$, with a slight difference that it avoids the neighbors of $v_{t-1}$ as well.

We now show an extension of Proposition~\ref{theorem:first_order_random_walk_invariance} to the above second-order random walks:

\begin{proposition}\label{theorem:second_order_random_walk_invariance}
    The non-backtracking random walk in \Eqref{eq:non_backtracking_random_walk} and the node2vec random walk in \Eqref{eq:node2vec_random_walk} are invariant if their underlying first-order random walk algorithm is invariant.
\end{proposition}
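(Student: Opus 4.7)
The plan is to factor the joint law of a second-order walk as a product of an initial distribution and step-wise conditionals, and then verify invariance of each factor under the isomorphism $\pi$. Concretely, for any walk $v_0\to\cdots\to v_l$, the Markov structure (of memory two) gives the factorization $\mathrm{Prob}[v_0,\ldots,v_l] = \mathrm{Prob}[v_0]\,\mathrm{Prob}[v_1\mid v_0]\prod_{t=1}^{l-1}\mathrm{Prob}[v_{t+1}\mid v_t,v_{t-1}]$, and the analogous factorization holds for $u_0\to\cdots\to u_l$ on $H$. Since the underlying first-order walk is invariant by assumption, the factors $\mathrm{Prob}[v_0]$ and $\mathrm{Prob}[v_1\mid v_0]$ match those on $H$ after applying $\pi$ (here I would treat the starting distribution either as the stationary one of the first-order chain, or as a fixed starting vertex that we also transport through $\pi$). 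It therefore suffices to prove that the second-order conditionals $\mathrm{Prob}[v_{t+1}=x\mid v_t=j,v_{t-1}=i]$ in \Eqref{eq:non_backtracking_random_walk} and \Eqref{eq:node2vec_random_walk} are themselves invariant in the sense $\mathrm{Prob}^G[\,\cdot\mid j,i] = \mathrm{Prob}^H[\pi(\cdot)\mid\pi(j),\pi(i)]$.

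For the non-backtracking case, I would first use that any isomorphism $\pi$ satisfies $\pi(N_G(j))=N_H(\pi(j))$, so $\pi$ induces a bijection $N_G(j)\setminus\{i\}\leftrightarrow N_H(\pi(j))\setminus\{\pi(i)\}$. Then, applying the invariance of the underlying first-order probabilities to both the numerator and every term in the denominator of \Eqref{eq:non_backtracking_random_walk}, the ratio transports across $\pi$ unchanged. The dangling-vertex corner case (where $N(j)\setminus\{i\}=\emptyset$ and the walk is forced to backtrack) needs a separate sentence: the condition $N_G(j)=\{i\}$ is itself preserved by $\pi$, and in both graphs the deterministic step $j\to i$ respectively $\pi(j)\to\pi(i)$ occurs with probability one, so invariance holds trivially.

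For node2vec, I would show invariance of the weighting $\alpha$ before plugging back into \Eqref{eq:node2vec_random_walk}. Since $\alpha(v_{t-1},v_{t+1})$ depends only on the shortest path distance $d_G(v_{t-1},v_{t+1})$ (\Eqref{eq:node2vec_weighting}), and graph isomorphisms preserve shortest path distances, one obtains $\alpha_G(i,x)=\alpha_H(\pi(i),\pi(x))$ for every pair. Combined again with the bijection $\pi:N_G(j)\to N_H(\pi(j))$ and the assumed first-order invariance, both the numerator and each summand of the denominator in \Eqref{eq:node2vec_random_walk} match under $\pi$, so the full conditional is invariant. Putting (a)-(c) together, every factor of the joint law agrees, proving $\pi(v_0)\to\cdots\to\pi(v_l)\overset{d}{=}u_0\to\cdots\to u_l$.

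I expect the only subtle point to be the non-backtracking dangling case, since its definition is conditional on a graph-structural property rather than a pure formula; the key observation is simply that this property is itself isomorphism-invariant, so it can be handled in one line. Everything else reduces to reusing Proposition~\ref{theorem:first_order_random_walk_invariance} and the fact that isomorphisms preserve neighborhoods and shortest path distances.
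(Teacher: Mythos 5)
Your proposal is correct and follows essentially the same route as the paper's proof: factor the joint law via the second-order Markov structure (the paper packages this as a short induction lemma, you write the chain rule directly, but these are equivalent), reduce to invariance of the single-step conditionals, handle the non-backtracking case by transporting numerator and denominator through the bijection $N_G(j)\setminus\{i\}\leftrightarrow N_H(\pi(j))\setminus\{\pi(i)\}$ with a separate one-line argument for the dangling (begrudging backtrack) case, and handle node2vec by first proving $\alpha_G(i,x)=\alpha_H(\pi(i),\pi(x))$ from preservation of shortest-path distances. The only cosmetic difference is that the paper fixes the starting distribution to $\mathrm{Uniform}(V(G))$ whereas you leave it as either stationary or a transported fixed vertex; both are isomorphism-invariant choices, so this does not affect the argument.
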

\begin{proof}
    The proof is given in Appendix~\ref{sec:proof_invariance_second_order}.
\end{proof}

\newpage

\subsection{Main Algorithm}\label{sec:pseudocode}
We outline the main algorithms for the random walk and the recording function described in Section~\ref{sec:algorithm} and used in Section~\ref{sec:experiments}.
Algorithm~\ref{alg:random_walk} shows the random walk algorithm, and Algorithms~\ref{alg:recording_function_anonymization}, \ref{alg:recording_function_anonymization_neighborhoods}, and \ref{alg:recording_function_ogbn_arxiv} show the recording functions.
For the random walk algorithm, we use non-backtracking described in Appendix~\ref{sec:invariance_second_order} and the minimum degree local rule conductance $c_G(\cdot)$ given in \Eqref{eq:minimum_degree_local_rule} by default.

Based on the algorithms, in Figures~\ref{fig:csl_text}, \ref{fig:sr16_text}, and \ref{fig:sr25_text} we illustrate the task formats used for graph separation experiments in Section~\ref{sec:graph_separation} by providing examples of input graphs, text records of random walks on them, and prediction targets for the language model that processes the records.
Likewise, in Figures~\ref{fig:arxiv_walk_text}, \ref{fig:arxiv_0shot_text}, and \ref{fig:arxiv_1shot_text} we show task formats for transductive classification on arXiv citation network in Section~\ref{sec:arxiv}.

\begin{algorithm}
\DontPrintSemicolon
\SetNoFillComment
\caption{Random walk algorithm}\label{alg:random_walk}
    \KwData{Input graph $G$, optional input vertex $v$, conductance function $c_G:E(G)\to\mathbb{R}_+$, walk length $l$, optional restart probability $\alpha\in(0, 1)$ or period $k>1$}
    \KwResult{Random walk $v_0\to\cdots\to v_l$, restart flags $r_1, ..., r_l$}
    \tcc{transition probabilities $p:E(G)\to\mathbb{R}_+$}
    \For{$(u, x)\in E(G)$}
    {
        $p(u, x)\gets c_G(u, x)/\sum_{y\in N(u)}c_G(u, y)$ \tcp*{\Eqref{eq:first_order_random_walk}}
    }
    \tcc{starting vertex $v_0$}
    \eIf{$v$ is given}
    {
        \tcc{query starting vertex}
        $v_0 \gets v$\;
    }{
        \tcc{sample starting vertex}
        $v_0\sim\mathrm{Uniform}(V(G))$\;
    }
    \tcc{random walk $v_1\to\cdots\to v_l$, restart flags $r_1, ..., r_l$}
    $v_1\sim\mathrm{Categorical}(\{p(v_0, x): x\in N(v_0)\})$\;
    $r_1\gets 0$\;
    \For{$t\gets 2$ \KwTo $l$}
    {
        \tcc{restart flag $r_t$}
        $r_t\gets 0$\;
        \uIf{$\alpha$ is given}
        {
            \tcc{if restarted at $t-1$, do not restart}
            \If{$r_{t-1}=0$}
            {
                $r_t\sim\mathrm{Bernoulli}(\alpha)$\;
            }
        }\ElseIf{$k$ is given}
        {
            \If{$t\equiv 0 \pmod k$}
            {
                $r_t\gets 1$\;
            }
        }
        \tcc{random walk $v_t$}
        \eIf{$r_t=0$}
        {
            $S\gets N(v_{t-1})\setminus\{v_{t-2}\}$\;
            \eIf{$S\neq\emptyset$}
            {
                \tcc{non-backtracking}
                $v_t\sim\mathrm{Categorical}(\{p(v_{t-1},x)/\sum_{y\in S}p(v_{t-1},y):x\in S\})$ \tcp*{\Eqref{eq:non_backtracking_random_walk}}
            }{
                \tcc{begrudgingly backtracking}
                $v_t\gets v_{t-2}$\;
            }
        }{
            \tcc{restart}
            $v_t\gets v_0$\;
        }
    }
\end{algorithm}

\begin{algorithm}
\DontPrintSemicolon
\SetNoFillComment
\caption{Recording function, using anonymization}\label{alg:recording_function_anonymization}
    \KwData{Random walk $v_0\to\cdots\to v_l$, restart flags $r_1, ..., r_l$}
    \KwResult{Text record $\mathbf{z}$}
    \tcc{named vertices $S$, namespace $\mathrm{id}(\cdot)$}
    $S\gets \{v_0\}$\;
    $\mathrm{id}(v_0)\gets 1$\;
    \tcc{text record $\mathbf{z}$}
    $\mathbf{z}\gets\mathrm{id}(v_0)$\;
    \For{$t\gets 1$ \KwTo $l$}
    {
        \tcc{anonymization $v_t\mapsto\mathrm{id}(v_t)$}
        \If{$v_t\notin S$}
        {
            $S\gets S\cup\{v_t\}$\;
            $\mathrm{id}(v_t)\gets |S|$\;
        }
        \tcc{text record $\mathbf{z}$}
        \eIf{$r_t=0$}
        {
            \tcc{record walk $v_{t-1}\to v_t$}
            $\mathbf{z}\gets\mathbf{z} + \texttt{"-"} + \mathrm{id}(v_t)$\;
        }{
            \tcc{record restart $v_t=v_0$}
            $\mathbf{z}\gets\mathbf{z} + \texttt{";"} + \mathrm{id}(v_t)$\;
        }
    }
\end{algorithm}

\begin{algorithm}
\DontPrintSemicolon
\SetNoFillComment
\caption{Recording function, using anonymization and named neighbors}\label{alg:recording_function_anonymization_neighborhoods}
    \KwData{Random walk $v_0\to\cdots\to v_l$, restart flags $r_1, ..., r_l$, input graph $G$}
    \KwResult{Text record $\mathbf{z}$}
    \tcc{named vertices $S$, namespace $\mathrm{id}(\cdot)$, recorded edges $T$}
    $S\gets \{v_0\}$\;
    $\mathrm{id}(v_0)\gets 1$\;
    $T\gets\emptyset$\;
    \tcc{text record $\mathbf{z}$}
    $\mathbf{z}\gets\mathrm{id}(v_0)$\;
    \For{$t\gets 1$ \KwTo $l$}
    {
        \tcc{anonymization $v_t\mapsto\mathrm{id}(v_t)$}
        \If{$v_t\notin S$}
        {
            $S\gets S\cup\{v_t\}$\;
            $\mathrm{id}(v_t)\gets |S|$\;
        }
        \tcc{text record $\mathbf{z}$}
        \eIf{$r_t=0$}
        {
            \tcc{record walk $v_{t-1}\to v_t$}
            $\mathbf{z}\gets\mathbf{z} + \texttt{"-"} + \mathrm{id}(v_t)$\;
            $T\gets T\cup\{(v_{t-1}, v_t), (v_t, v_{t-1})\}$\;
            \tcc{named neighbors $U$}
            $U\gets N(v_t)\cap S$\;
            \If{$U\neq\emptyset$}
            {
                \tcc{sort in ascending order $\mathrm{id}(u_1) < \cdots < \mathrm{id}(u_{|U|})$}
                $[u_1, ..., u_{|U|}]\gets \mathrm{SortByKey}(U, \mathrm{id})$\;
                \For{$u\gets u_1$ \KwTo $u_{|U|}$}
                {
                    \tcc{record named neighbors $v_t\to u$}
                    \If{$(v_t, u)\notin T$}
                    {
                        $\mathbf{z}\gets\mathbf{z} + \texttt{"\#"} + \mathrm{id}(u)$\;
                        $T\gets T\cup\{(v_t, u), (u, v_t)\}$\;
                    }
                }
            }
        }{
            \tcc{record restart $v_t=v_0$}
            $\mathbf{z}\gets\mathbf{z} + \texttt{";"} + \mathrm{id}(v_t)$\;
        }
    }
\end{algorithm}

\begin{algorithm}
\DontPrintSemicolon
\SetNoFillComment
\caption{Recording function for transductive classification on arXiv network (Section~\ref{sec:arxiv})}\label{alg:recording_function_ogbn_arxiv}
    \KwData{Directed input graph $G$, random walk $v_0\to\cdots\to v_l$ and restart flags $r_1, ..., r_l$ on the undirected copy of $G$, paper titles $\{\mathbf{t}_v\}_{v\in V(G)}$ and abstracts $\{\mathbf{a}_v\}_{v\in V(G)}$, target category labels $\{\mathbf{y}_v\}_{v\in L}$ for labeled vertices $L\subset V(G)$}
    \KwResult{Text record $\mathbf{z}$}
    \tcc{named vertices $S$, namespace $\mathrm{id}(\cdot)$, recorded edges $T$}
    $S\gets \{v_0\}$\;
    $\mathrm{id}(v_0)\gets 1$\;
    $T\gets\emptyset$\;
    \tcc{text record $\mathbf{z}$}
    \tcc{record starting vertex}
    $\mathbf{z}\gets\texttt{"Paper 1 - Title: \{}\mathbf{t}_{v_0}\texttt{\}", Abstract: \{}\mathbf{a}_{v_0}\texttt{\}"}$\;
    \For{$t\gets 1$ \KwTo $l$}
    {
        \tcc{anonymization $v_t\mapsto\mathrm{id}(v_t)$}
        \If{$v_t\notin S$}
        {
            $S\gets S\cup\{v_t\}$\;
            $\mathrm{id}(v_t)\gets |S|$\;
        }
        \tcc{text record $\mathbf{z}$}
        \eIf{$r_t=0$}
        {
            \tcc{record walk $v_{t-1}\to v_t$ with direction}
            \eIf{$(v_{t-1}, v_t)\in E(G)$}
            {
                $\mathbf{z}\gets\mathbf{z} + \texttt{" Paper \{}\mathrm{id}(v_{t-1})\texttt{\} cites Paper \{}\mathrm{id}(v_t)\texttt{\}"}$\;
            }{
                $\mathbf{z}\gets\mathbf{z} + \texttt{" Paper \{}\mathrm{id}(v_{t-1})\texttt{\} is cited by Paper \{}\mathrm{id}(v_t)\texttt{\}"}$\;
            }
            $T\gets T\cup\{(v_{t-1}, v_t), (v_t, v_{t-1})\}$\;
            \tcc{record title $\mathbf{t}_v$, abstract $\mathbf{a}_v$, and label $\mathbf{y}_v$ if $v$ is labeled}
            \eIf{$\mathrm{id}(v_t) = |S|$}
            {
                $\mathbf{z}\gets\mathbf{z} + \texttt{" - \{}\mathbf{t}_v\texttt{\}"}$\;
                \If{$v\in L$}
                {
                    $\mathbf{z}\gets\mathbf{z} + \texttt{", Category: \{}\mathbf{y}_v\texttt{\}"}$\;
                }
                $\mathbf{z}\gets\mathbf{z} + \texttt{", Abstract: \{}\mathbf{a}_v\texttt{\}"}$\;
            }{
                $\mathbf{z}\gets\mathbf{z} + \texttt{"."}$\;
            }
            \tcc{named neighbors $U$}
            $U\gets N(v_t)\cap S$\;
            \If{$U\neq\emptyset$}
            {
                \tcc{sort in ascending order $\mathrm{id}(u_1) < \cdots < \mathrm{id}(u_{|U|})$}
                $[u_1, ..., u_{|U|}]\gets \mathrm{SortByKey}(U, \mathrm{id})$\;
                \For{$u\gets u_1$ \KwTo $u_{|U|}$}
                {
                    \tcc{record named neighbors $v_t \to u$ with directions}
                    \If{$(v_t, u)\notin T$}
                    {
                        \eIf{$(v_t, u)\in E(G)$}
                        {
                            $\mathbf{z}\gets\mathbf{z} + \texttt{" Paper \{}\mathrm{id}(v_t)\texttt{\} cites Paper \{}\mathrm{id}(u)\texttt{\}."}$\;
                        }{
                            $\mathbf{z}\gets\mathbf{z} + \texttt{" Paper \{}\mathrm{id}(v_t)\texttt{\} is cited by Paper \{}\mathrm{id}(u)\texttt{\}."}$\;
                        }
                        $T\gets T\cup\{(v_t, u), (u, v_t)\}$\;
                    }
                }
            }
        }{
            \tcc{record restart}
            $\mathbf{z}\gets\mathbf{z} + \texttt{" Restart at Paper 1."}$\;
        }
    }
\end{algorithm}

\begin{figure}[!hbtp]{
\tt
\scriptsize
\begin{tabularx}{\linewidth}{r X}
\toprule
\multicolumn{2}{c}{
\begin{minipage}{\linewidth}
\begin{subfigure}[b]{0.47\linewidth}
\centering
\includegraphics[width=\linewidth, trim={38 38 38 38}, clip]{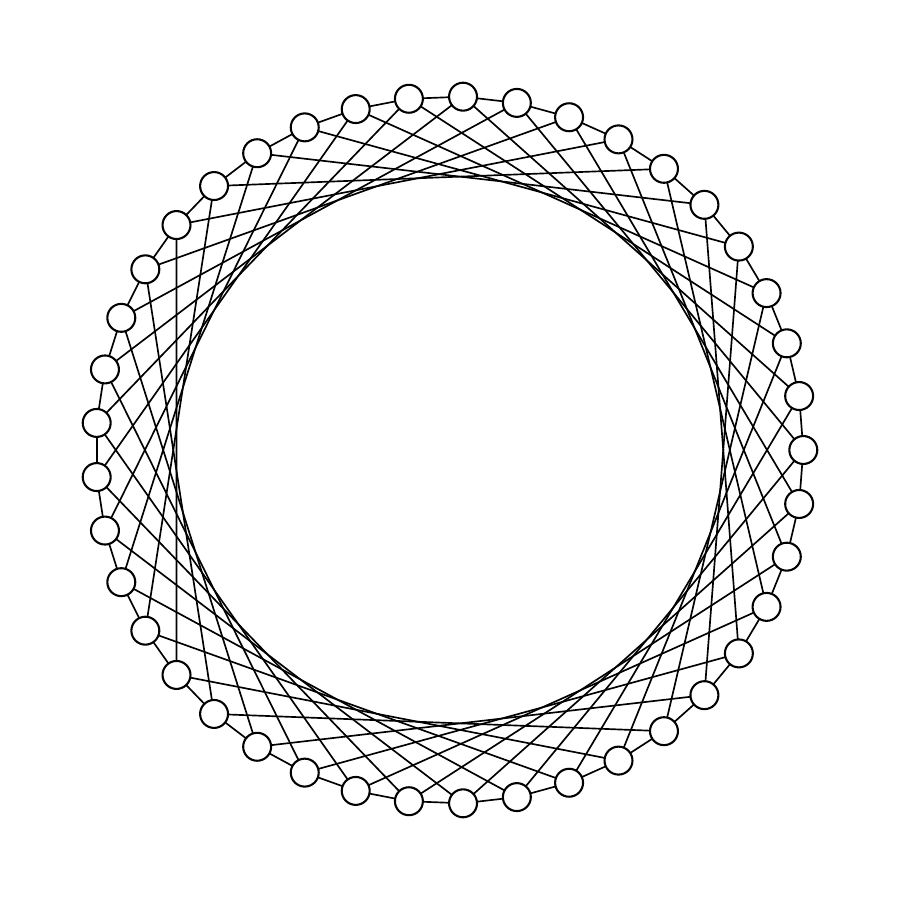}
\\ \tt{Graph}
\end{subfigure}\hfill%
\begin{subfigure}[b]{0.47\linewidth}
\centering
\includegraphics[width=\linewidth, trim={38 38 38 38}, clip]{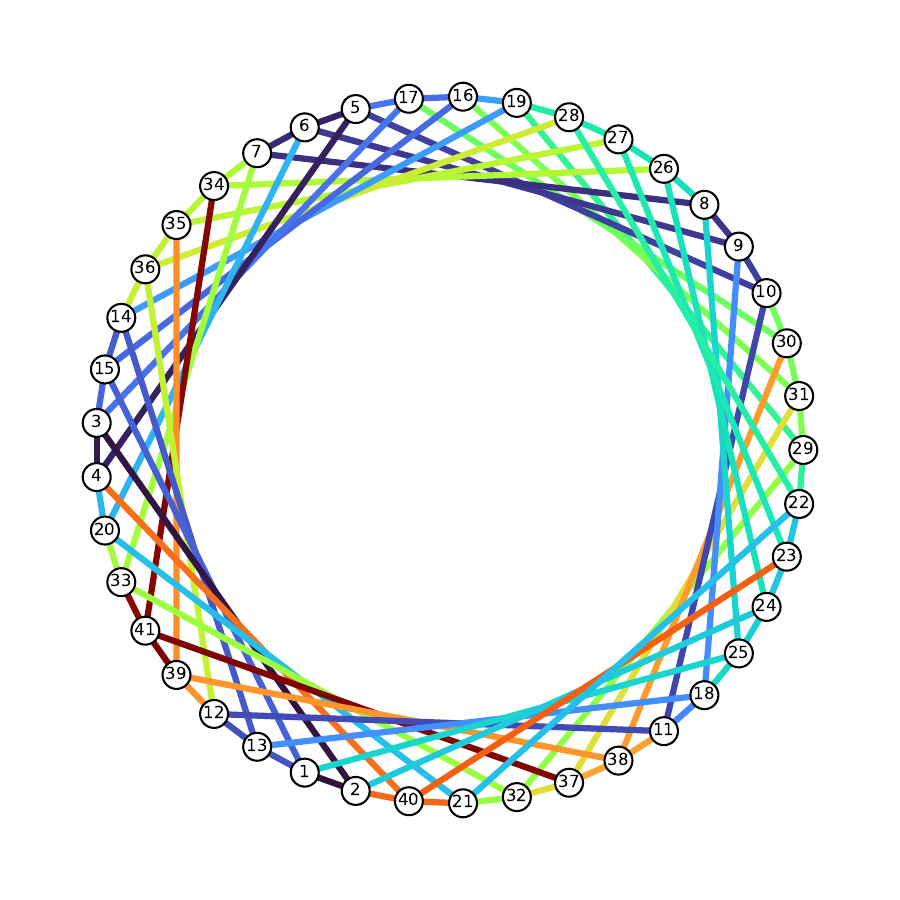}
\\ \tt{Random Walk}
\end{subfigure}%
\begin{subfigure}[b]{0.04\linewidth}
\includegraphics[width=\linewidth]{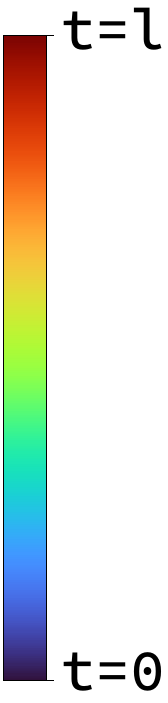}
\\
\end{subfigure}%
\end{minipage}
} \\
\midrule Record & \input{figures/csl/graph_6_pred_6_time_seed_0.txt}
\\
\midrule Answer & csl(41, 9) \\
\midrule
\multicolumn{2}{c}{
\begin{minipage}{\linewidth}
\begin{subfigure}[b]{0.47\linewidth}
\centering
\includegraphics[width=\linewidth, trim={38 38 38 38}, clip]{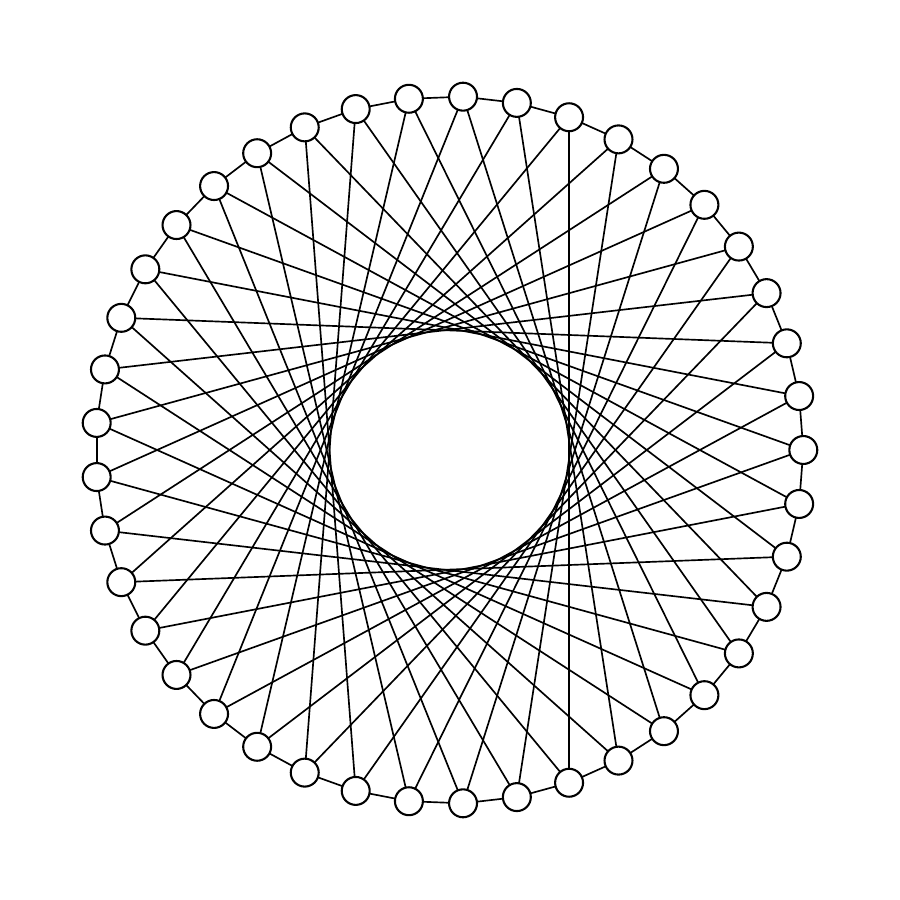}
\\ \tt{Graph}
\end{subfigure}\hfill%
\begin{subfigure}[b]{0.47\linewidth}
\centering
\includegraphics[width=\linewidth, trim={38 38 38 38}, clip]{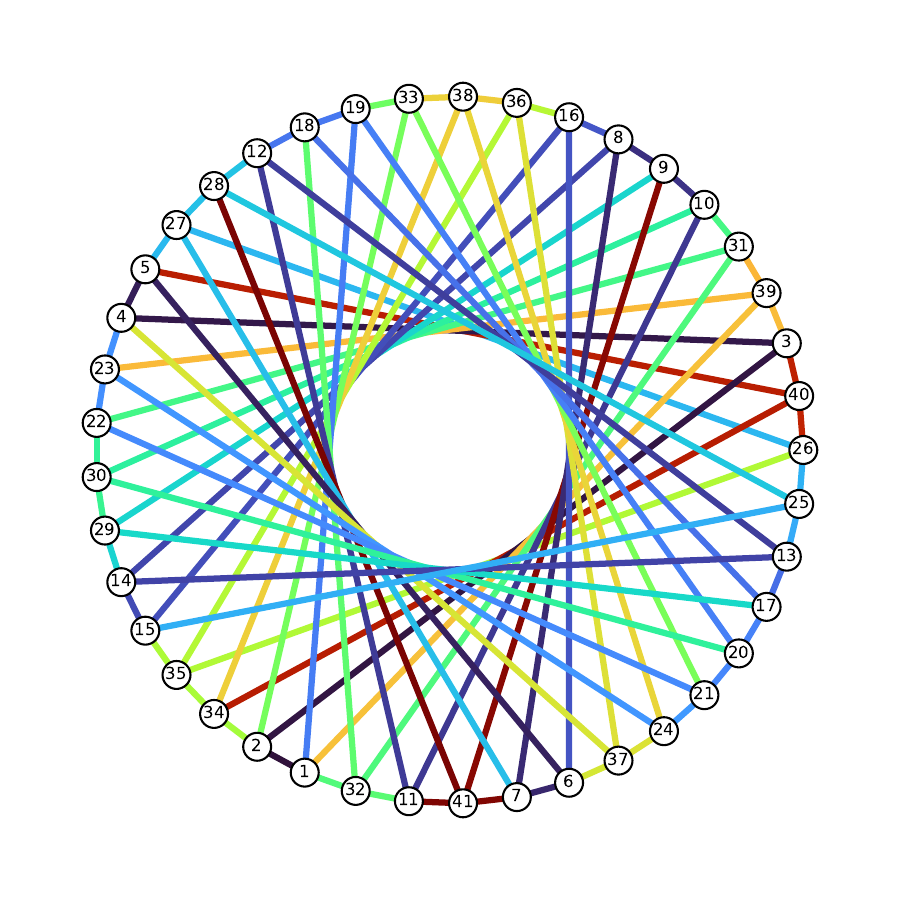}
\\ \tt{Random Walk}
\end{subfigure}%
\begin{subfigure}[b]{0.04\linewidth}
\includegraphics[width=\linewidth]{figures/colorbar.pdf}
\\
\end{subfigure}%
\end{minipage}
} \\
\midrule Record & \input{figures/csl/graph_10_pred_10_time_seed_0.txt}
\\
\midrule Answer & csl(41, 16) \\
\bottomrule
\end{tabularx}
}
\caption{Two CSL graphs and text records of random walks from Algorithms~\ref{alg:random_walk} and \ref{alg:recording_function_anonymization_neighborhoods}.
Task is graph classification into 10 isomorphism types $\mathrm{csl}(41, s)$ for skip length $s\in \{2, 3, 4, 5, 6, 9, 11, 12, 13, 16\}$.
In random walks, we label vertices by anonymization and color edges by their time of discovery.}
\label{fig:csl_text}
\end{figure}

\begin{figure}[!hbtp]{
\tt
\scriptsize
\begin{tabularx}{\linewidth}{r X}
\toprule
\multicolumn{2}{c}{
\begin{minipage}{\linewidth}
\begin{subfigure}[b]{0.47\linewidth}
\centering
\includegraphics[width=\linewidth, trim={38 38 38 38}, clip]{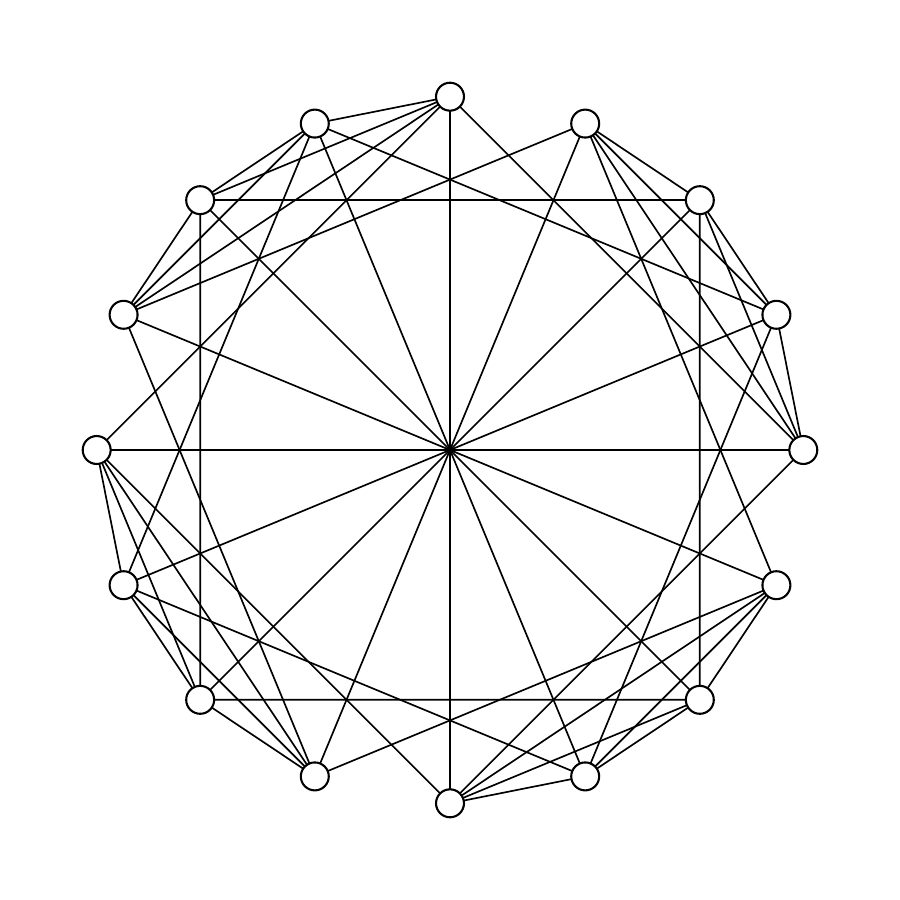}
\\ \tt{Graph}
\end{subfigure}\hfill%
\begin{subfigure}[b]{0.47\linewidth}
\centering
\includegraphics[width=\linewidth, trim={38 38 38 38}, clip]{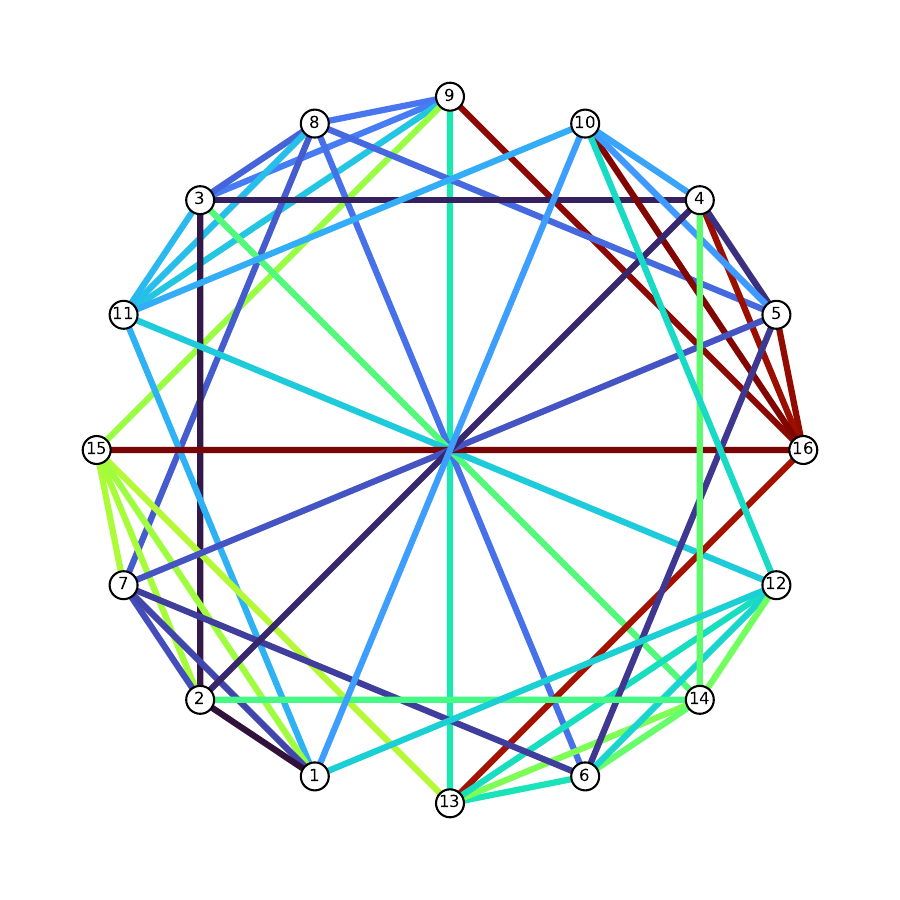}
\\ \tt{Random Walk}
\end{subfigure}%
\begin{subfigure}[b]{0.04\linewidth}
\includegraphics[width=\linewidth]{figures/colorbar.pdf}
\\
\end{subfigure}%
\end{minipage}
} \\
\midrule Record & \input{figures/sr16/graph_2_pred_2_time_seed_0.txt}
\\
\midrule Answer & $4\times4$ rook's graph \\
\midrule
\multicolumn{2}{c}{
\begin{minipage}{\linewidth}
\begin{subfigure}[b]{0.47\linewidth}
\centering
\includegraphics[width=\linewidth, trim={38 38 38 38}, clip]{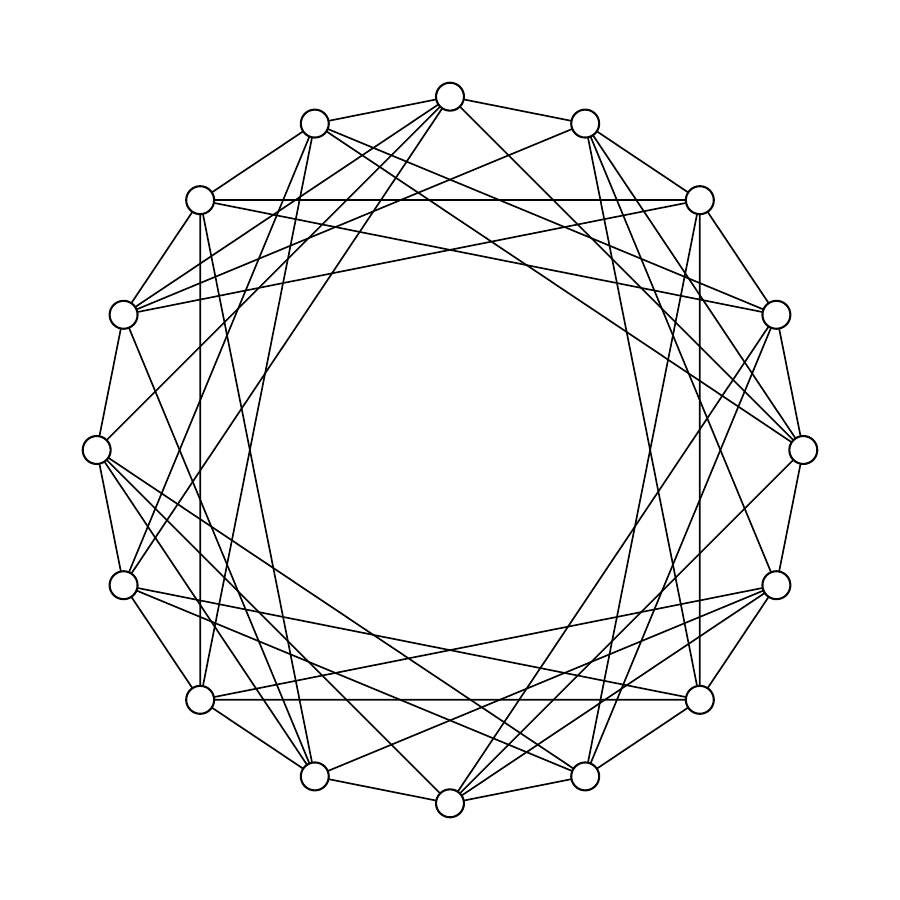}
\\ \tt{Graph}
\end{subfigure}\hfill%
\begin{subfigure}[b]{0.47\linewidth}
\centering
\includegraphics[width=\linewidth, trim={38 38 38 38}, clip]{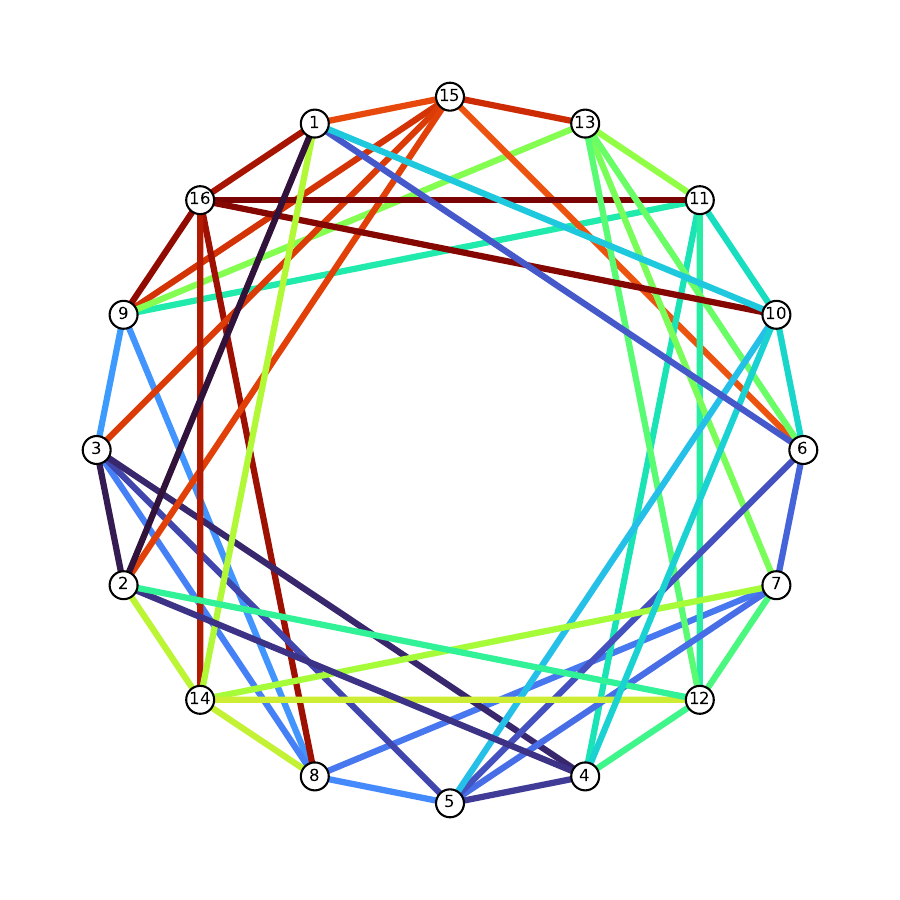}
\\ \tt{Random Walk}
\end{subfigure}%
\begin{subfigure}[b]{0.04\linewidth}
\includegraphics[width=\linewidth]{figures/colorbar.pdf}
\\
\end{subfigure}%
\end{minipage}
} \\
\midrule Record & \input{figures/sr16/graph_1_pred_1_time_seed_0.txt}
\\
\midrule Answer & Shrikhande graph \\
\bottomrule
\end{tabularx}
}
\caption{SR16 graphs and text records of random walks from Algorithms~\ref{alg:random_walk} and \ref{alg:recording_function_anonymization_neighborhoods}.
The task is graph classification into two isomorphism types $\{4\times4\text{ rook's graph}, \text{Shrikhande graph}\}$.
In random walks, we label vertices by anonymization and color edges by their time of discovery.}
\label{fig:sr16_text}
\end{figure}

\begin{figure}[!hbtp]{
\tt
\scriptsize
\begin{tabularx}{\linewidth}{r X}
\toprule 
\multicolumn{2}{c}{
\begin{minipage}{\linewidth}
\begin{subfigure}[b]{0.47\linewidth}
\centering
\includegraphics[width=\linewidth, trim={38 38 38 38}, clip]{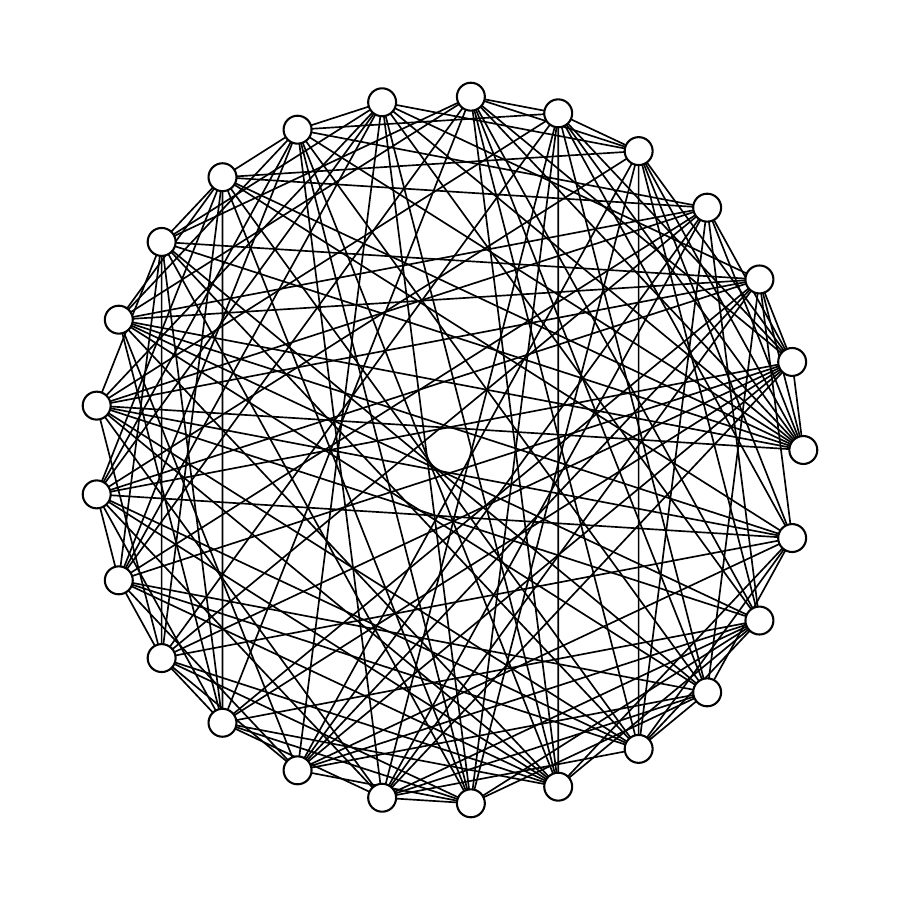}
\\ \tt{Graph}
\end{subfigure}\hfill%
\begin{subfigure}[b]{0.47\linewidth}
\centering
\includegraphics[width=\linewidth, trim={38 38 38 38}, clip]{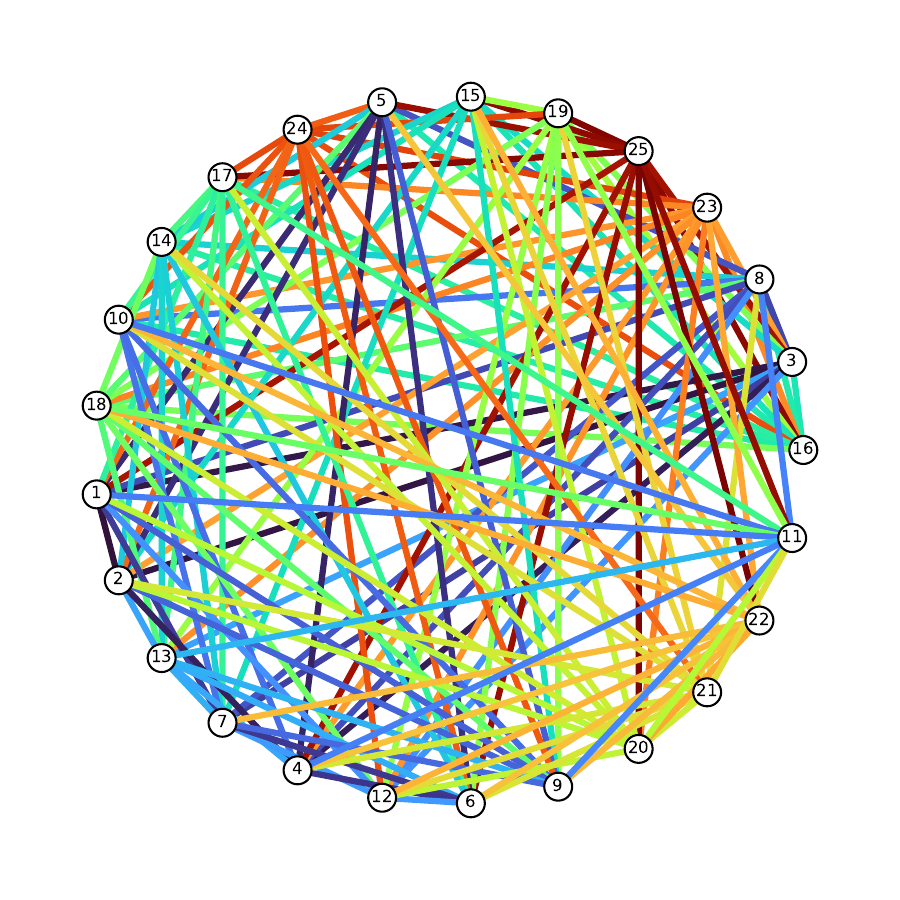}
\\ \tt{Random Walk}
\end{subfigure}%
\begin{subfigure}[b]{0.04\linewidth}
\includegraphics[width=\linewidth]{figures/colorbar.pdf}
\\
\end{subfigure}%
\end{minipage}
} \\
\midrule Record & \input{figures/sr25/graph_1_pred_1_time_seed_0.txt}
\\
\midrule Answer & Graph 1 \\
\midrule
\multicolumn{2}{c}{
\begin{minipage}{\linewidth}
\begin{subfigure}[b]{0.47\linewidth}
\centering
\includegraphics[width=\linewidth, trim={38 38 38 38}, clip]{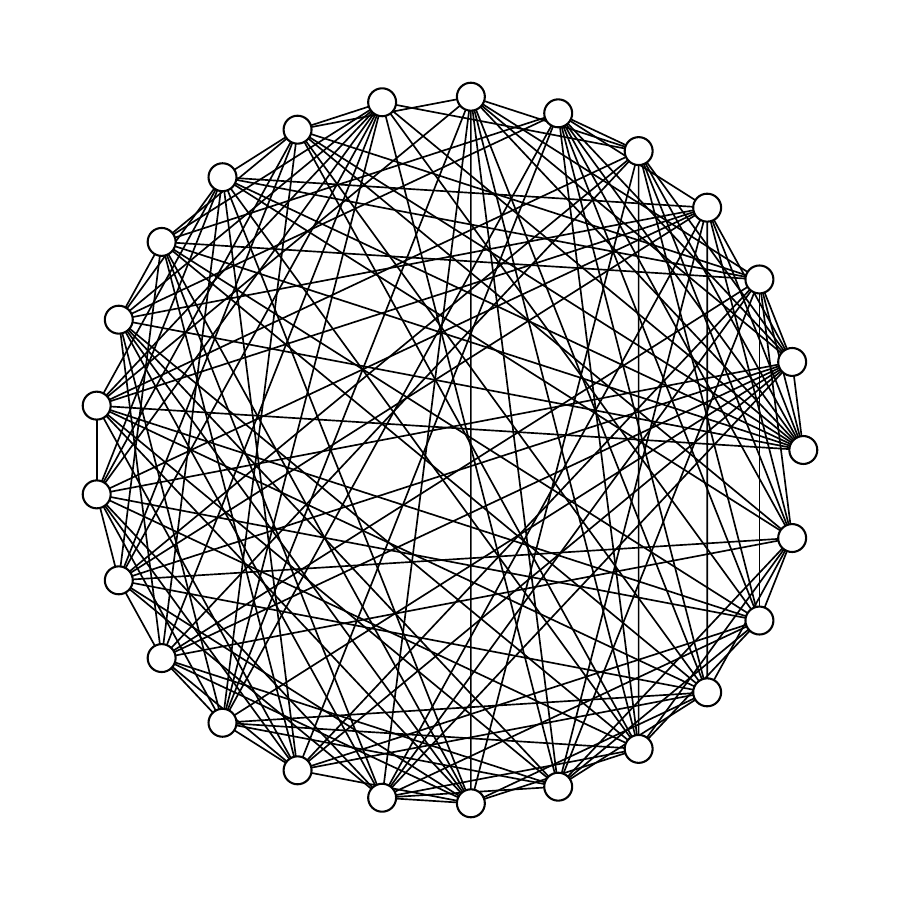}
\\ \tt{Graph}
\end{subfigure}\hfill%
\begin{subfigure}[b]{0.47\linewidth}
\centering
\includegraphics[width=\linewidth, trim={38 38 38 38}, clip]{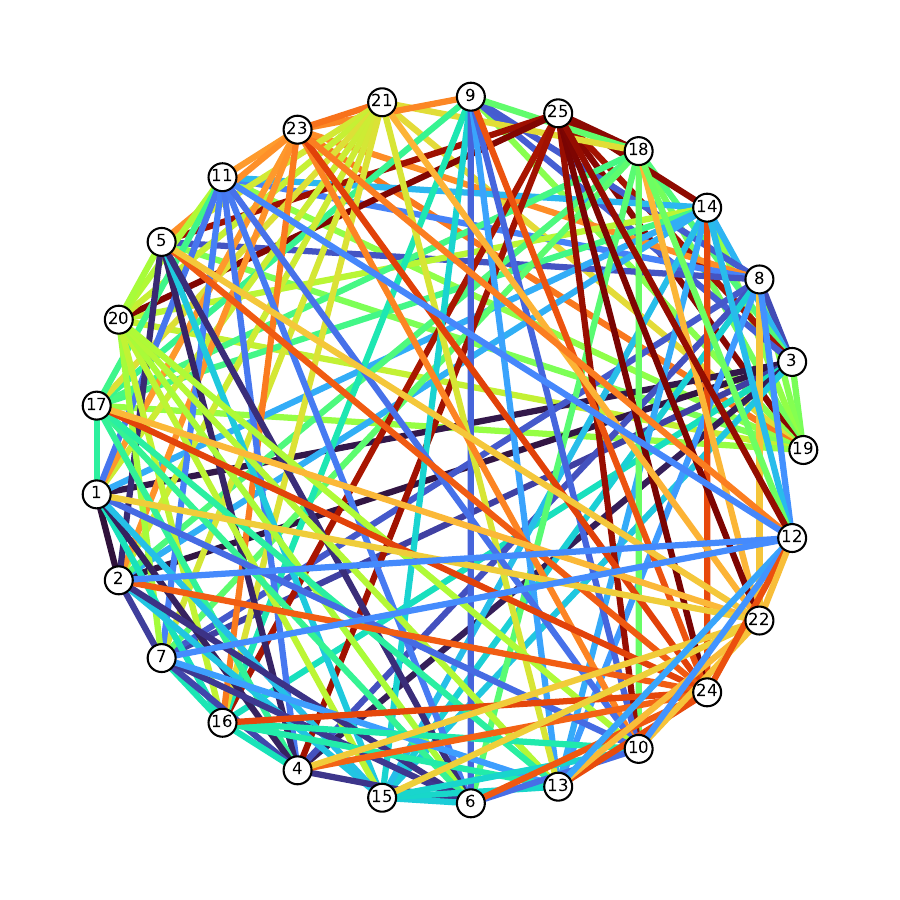}
\\ \tt{Random Walk}
\end{subfigure}%
\begin{subfigure}[b]{0.04\linewidth}
\includegraphics[width=\linewidth]{figures/colorbar.pdf}
\\
\end{subfigure}%
\end{minipage}
} \\
\midrule Record & \input{figures/sr25/graph_8_pred_8_time_seed_0.txt}
\\
\midrule Answer & Graph 8 \\
\bottomrule
\end{tabularx}
}
\caption{Two SR25 graphs and text records of random walks from Algorithms~\ref{alg:random_walk} and \ref{alg:recording_function_anonymization_neighborhoods}.
The task is graph classification into 15 isomorphism types.
In random walks, we label vertices by anonymization and color edges by their time of discovery.}
\label{fig:sr25_text}
\end{figure}

\definecolor{\#ffb000}{RGB}{255,176,0}
\definecolor{\#fe6100}{RGB}{254,97,0}
\definecolor{\#dc267f}{RGB}{220,38,127}
\definecolor{\#785ef0}{RGB}{120,94,240}
\definecolor{\#648fff}{RGB}{100,143,255}
\colorlet{label}{\#ffb000!50}
\colorlet{walk}{\#dc267f!50}
\colorlet{instruction}{\#648fff!50}

\begin{figure}[!hbtp]{
\tt
\scriptsize
\begin{tabularx}{\linewidth}{r X}
% 739_2.json
\toprule System & \ctext{instruction}{A walk on the arXiv citation network will be given. Predict the arXiv CS sub-category Paper 1 belongs to. It is one of the following:} \ctext{label}{Numerical Analysis (cs.NA)}, \ctext{label}{Multimedia (cs.MM)}, \ctext{label}{Logic in Computer Science (cs.LO)}, ... \ctext{label}{Discrete Mathematics (cs.DM)}. \ctext{instruction}{Only respond with the answer, do not say any word or explain.} \\
\midrule Record & \ctext{instruction}{A walk on arXiv citation network is as follows:} \\
& \ctext{walk}{Paper 1} - Title: Safety guided deep reinforcement learning via online gaussian process estimation, Abstract: An important facet of reinforcement learning (rl) has to do with how the agent goes about exploring the environment. traditional exploration strategies typically focus on efficiency and ignore safety. however, for practical applications, ensuring safety of the agent during exploration is crucial since performing an unsafe action or reaching an unsafe state could result in irreversible damage to the agent. the main challenge of safe exploration is that characterizing the unsafe states and actions... \ctext{walk}{Restart at 1}. \ctext{walk}{Paper 1 is cited by 2} - Learning to walk in the real world with minimal human effort, Abstract: Reliable and stable locomotion has been one of the most fundamental challenges for legged robots. deep reinforcement learning (deep rl) has emerged as a promising method for developing such control policies... \ctext{walk}{Restart at 1}. \ctext{walk}{Paper 1 cites 3} - Deep q learning from demonstrations, \ctext{label}{Category: Artificial Intelligence (cs.AI)}, Abstract: Deep reinforcement learning (rl) has achieved several high profile successes in difficult decision-making problems. however, these algorithms typically require a huge amount of data before they reach... \ctext{walk}{Restart at 1}. \ctext{walk}{Paper 1 cites 4} - Constrained policy optimization, \ctext{label}{Category: Machine Learning (cs.LG)}, Abstract: For many applications of reinforcement learning it can be more convenient to specify both a reward function and constraints, rather than trying to design behavior through the reward function. for example,... \ctext{walk}{Paper 4 is cited by 2}. \ctext{walk}{Paper 4 is cited by 5} - Artificial intelligence values and alignment, Abstract: This paper looks at philosophical questions that arise in the context of ai alignment. it defends three propositions. first, normative and technical aspects of the ai alignment problem are interrelated,... \ctext{walk}{Paper 5 cites 6} - Agi safety literature review, \ctext{label}{Category: Artificial Intelligence (cs.AI)}, Abstract: The development of artificial general intelligence (agi) promises to be a major event. along with its many potential benefits, it also raises serious safety concerns (bostrom, 2014). the intention of... \ctext{walk}{Paper 6 is cited by 7} - Modeling agi safety frameworks with causal influence diagrams, Abstract: Proposals for safe agi systems are typically made at the level of frameworks, specifying how the components of the proposed system should be trained and interact with each other. in this paper, we model... \ctext{walk}{Restart at 1}. \ctext{walk}{Paper 1 cites 8} - Safe learning of regions of attraction for uncertain nonlinear systems with gaussian processes, \ctext{label}{Category: Systems and Control (cs.SY)}, Abstract: Control theory can provide useful insights into the properties of controlled, dynamic systems. one important property of nonlinear systems is the region of attraction (roa), a safe subset of the state... \ctext{walk}{Paper 8 is cited by 9} - Control theory meets pomdps a hybrid systems approach, Abstract: Partially observable markov decision processes (pomdps) provide a~modeling framework for a variety of sequential decision making under uncertainty scenarios in artificial intelligence (ai). since the... \ctext{walk}{Restart at 1}. \\
% Paper 1 cites 8. Paper 8 is cited by 10 - Deep learning for robotic mass transport cloaking, Category: Robotics (cs.RO), Abstract: We consider the problem of mass transport cloaking using mobile robots. the robots carry sources that collectively counteract a chemical agent released in the environment. the goal is to steer the mass... Restart at 1. Paper 1 cites 3. Restart at 1. Paper 1 cites 11 - Safe exploration in markov decision processes, Category: Machine Learning (cs.LG), Abstract: In environments with uncertain dynamics exploration is necessary to learn how to perform well. existing reinforcement learning algorithms provide strong exploration guarantees, but they tend to rely on... Paper 11 is cited by 4. Paper 11 is cited by 12 - Safe exploration for identifying linear systems via robust optimization, Category: Machine Learning (cs.LG), Abstract: Safely exploring an unknown dynamical system is critical to the deployment of reinforcement learning (rl) in physical systems where failures may have catastrophic consequences. in scenarios where one... Restart at 1.
& \\
& ... \\
& \\
& \ctext{instruction}{Which arXiv CS sub-category does Paper 1 belong to? Only respond with the answer, do not say any word or explain.}
\\
\midrule Answer & \ctext{label}{Machine Learning (cs.LG)} \\
% \midrule Prediction $\to$ & \ctext{label}{cs.LG} \\
\bottomrule
\end{tabularx}
}
\caption{
Transductive classification on arXiv citation network using text record of random walk from Algorithms~\ref{alg:random_walk} and \ref{alg:recording_function_ogbn_arxiv}.
Colors indicate \protect\ctext{instruction}{task instruction}, \protect\ctext{walk}{walk information}, and \protect\ctext{label}{label information}.}
\label{fig:arxiv_walk_text}
\end{figure}

\newpage
\begin{figure}[!hbtp]{
\tt
\scriptsize
\begin{tabularx}{\linewidth}{r X}
% 739_0.json
\toprule System & \ctext{instruction}{Title and abstract of an arXiv paper will be given. Predict the arXiv CS sub-category the paper belongs to. It is one of the following:} \ctext{label}{Numerical Analysis (cs.NA)}, \ctext{label}{Multimedia (cs.MM)}, \ctext{label}{Logic in Computer Science (cs.LO)}, ... \ctext{label}{Discrete Mathematics (cs.DM)}. \ctext{instruction}{Only respond with the answer, do not say any word or explain.} \\
\midrule Context & Title: Safety guided deep reinforcement learning via online gaussian process estimation \\
& Abstract: An important facet of reinforcement learning (rl) has to do with how the agent goes about exploring the environment. traditional exploration strategies typically focus on efficiency and ignore safety. however, for practical applications, ensuring safety of the agent during exploration is crucial since performing an unsafe action or reaching an unsafe state could result in irreversible damage to the agent. the main challenge of safe exploration is that characterizing the unsafe states and actions... \\
& \ctext{instruction}{Which arXiv CS sub-category does this paper belong to?}
\\
\midrule Answer & \ctext{label}{Machine Learning (cs.LG)} \\
% \midrule Prediction $\to$ & \ctext{label}{cs.AI} \\
\bottomrule
\end{tabularx}
}
\caption{Zero-shot format for vertex classification on arXiv citation network.
\protect\ctext{instruction}{Task instruction} and \protect\ctext{label}{label information} are colored.}
\label{fig:arxiv_0shot_text}
\end{figure}

\begin{figure}[!hbtp]{
\tt
\scriptsize
\begin{tabularx}{\linewidth}{r X}
% 739_0.json
\toprule System & \ctext{instruction}{Title and abstract of an arXiv paper will be given. Predict the arXiv CS sub-category the paper belongs to. It is one of the following:} \ctext{label}{Numerical Analysis (cs.NA)}, \ctext{label}{Multimedia (cs.MM)}, \ctext{label}{Logic in Computer Science (cs.LO)}, ... \ctext{label}{Discrete Mathematics (cs.DM)}. \\
\midrule Context & Title: Deeptrack learning discriminative feature representations online for robust visual tracking \\
& Abstract: Deep neural networks, albeit their great success on feature learning in various computer vision tasks, are usually considered as impractical for online visual tracking, because they require very long... \\
& \ctext{label}{Category: cs.CV} \\
& \\
& Title: Perceived audiovisual quality modelling based on decison trees genetic programming and neural networks... \\
& Abstract: Our objective is to build machine learning based models that predict audiovisual quality directly from a set of correlated parameters that are extracted from a target quality dataset. we have used the... \\
& \ctext{label}{Category: cs.MM} \\
& \\
& ... \\
& \\
& Title: Safety guided deep reinforcement learning via online gaussian process estimation \\
& Abstract: An important facet of reinforcement learning (rl) has to do with how the agent goes about exploring the environment. traditional exploration strategies typically focus on efficiency and ignore safety. however, for practical applications, ensuring safety of the agent during exploration is crucial since performing an unsafe action or reaching an unsafe state could result in irreversible damage to the agent. the main challenge of safe exploration is that characterizing the unsafe states and actions... \\
& \ctext{instruction}{Which arXiv CS sub-category does this paper belong to?}
\\
\midrule Answer & \ctext{label}{cs.LG} \\
% \midrule Prediction $\to$ & \ctext{label}{cs.AI} \\
\bottomrule
\end{tabularx}
}
\caption{One-shot format for transductive classification on arXiv citation network.
40 labeled examples are given in form of multi-turn dialogue.
\protect\ctext{instruction}{Task instruction} and \protect\ctext{label}{label information} are colored.}
\label{fig:arxiv_1shot_text}
\end{figure}

\newpage

\subsection{Attention Visualizations}\label{sec:attention_visualization}

In Figure~\ref{fig:arxiv_attention}, we show an example of self-attention in the frozen Llama 3 8B model applied to arXiv transductive classification (Section~\ref{sec:arxiv}).
We show attention weights on text record of random walk from the generated \texttt{cs} token as query, which is just before finishing the prediction, e.g., \texttt{cs.CV}.
We color the strongest activation with \cctext{224,82,6}{orange}, and do not color values below $1\%$ of it.
The model invests a nontrivial amount of attention on walk information such as {\tt\underline{\cctext{255,255,255}{ Paper}\cctext{253,158,84}{ }\cctext{254,236,217}{1}\cctext{254,241,228}{ cites}\cctext{254,242,230}{ }\cctext{254,236,218}{12}}}, while also making use of titles and labels of labeled vertices.
This indicates the model is utilizing the graph structure recorded by the random walk in conjunction with other information to make predictions.

In Figures~\ref{fig:csl_attention}, \ref{fig:sr16_attention}, and \ref{fig:sr25_attention}, we show examples of self-attention in DeBERTa models trained for graph separation (Section~\ref{sec:graph_separation}).
We first show attention weights on text record of random walk from the \texttt{[CLS]} query token.
Then, we show an alternative visualization where the attention weights are mapped onto the original input graph.
For example, for each attention on $v_{t+1}$ where the walk has traversed $v_t\to v_{t+1}$, we regard it as an attention on the edge $(v_t, v_{t+1})$ in the input graph.
We ignore \texttt{[CLS]} and \texttt{[SEP]} tokens since they do not appear on the input graph.
We observe that the models often focus on sparse, connected substructures, which presumably provide discriminative information on the isomorphism types.
In particular, for CSL graphs (Figure~\ref{fig:csl_attention}) we observe an interpretable pattern of approximate cycles composed of skip links.
This is presumably related to measuring the lengths of skip links, which provides sufficient information of isomorphism types of CSL graphs.

\begin{figure}[!t]{
\tt
\scriptsize
\begin{tabularx}{\linewidth}{r X}
% 2035_2.json
\toprule \multicolumn{2}{p{\dimexpr\linewidth-2\tabcolsep}}{\textit{[...]} \input{figures/arxiv_attention} \textit{[...]} \vspace{0.1cm}}
\\
\multicolumn{2}{p{\dimexpr\linewidth-2\tabcolsep}}{\makecell[c]{\tt{Attention (Layer 13/30)}}}
\\
\midrule Answer & cs.CV
\\
\midrule Prediction & cs.CV
\\
\bottomrule
\end{tabularx}
}
\caption{Example of attention in a frozen Llama 3 8B applied on arXiv transductive classification.
Attention weights are averaged over all 30 heads.}
\label{fig:arxiv_attention}
\end{figure}

\begin{figure}[!hbtp]{
\tt
\scriptsize
\begin{tabularx}{\linewidth}{X}
\toprule \input{figures/csl/graph_9_pred_9_layer_12_head_12_seed_1.txt} \vspace{0.1cm}
\\ \makecell[c]{\tt{Attention (Head 12/12, Layer 12/12)}}
\\
\midrule
\begin{minipage}{\linewidth}
\begin{subfigure}[b]{0.49\linewidth}
\centering
\includegraphics[width=\linewidth, trim={38 38 38 38}, clip]{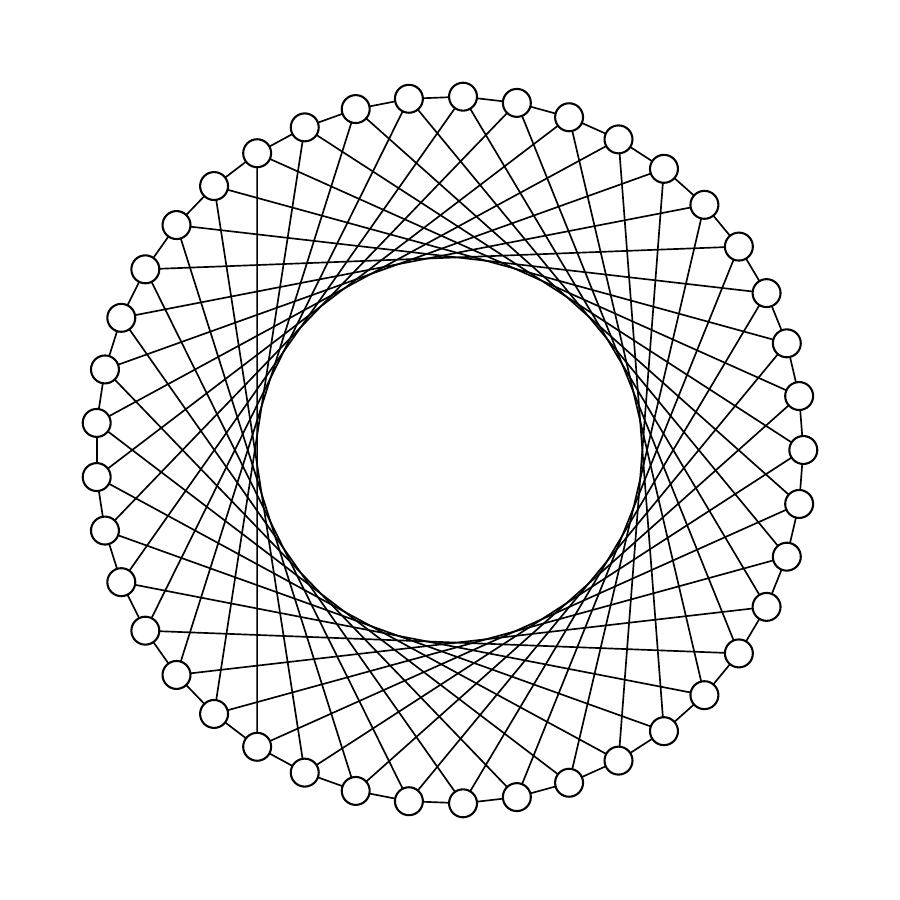}
\\ \tt{Graph}
\end{subfigure}\hfill%
\begin{subfigure}[b]{0.485\linewidth}
\centering
\includegraphics[width=\linewidth, trim={35 35 35 35}, clip]{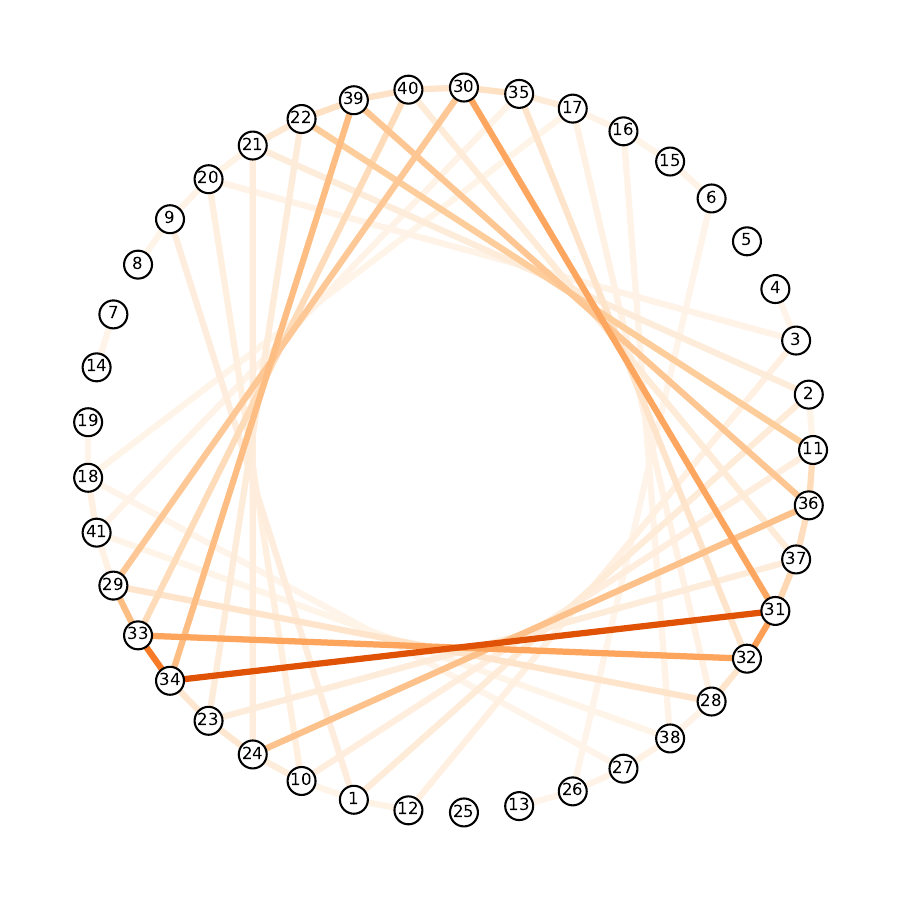}
\\ \tt{Attention Mapped onto Graph}
\end{subfigure}%
\end{minipage}
\\
\midrule \input{figures/csl/graph_10_pred_10_layer_12_head_5_seed_42.txt} \vspace{0.1cm}
\\ \makecell[c]{\tt{Attention (Head 5/12, Layer 12/12)}}
\\
\midrule
\begin{minipage}{\linewidth}
\begin{subfigure}[b]{0.49\linewidth}
\centering
\includegraphics[width=\linewidth, trim={38 38 38 38}, clip]{figures/csl/graph_10.pdf}
\\ \tt{Graph}
\end{subfigure}\hfill%
\begin{subfigure}[b]{0.49\linewidth}
\centering
\includegraphics[width=\linewidth, trim={38 38 38 38}, clip]{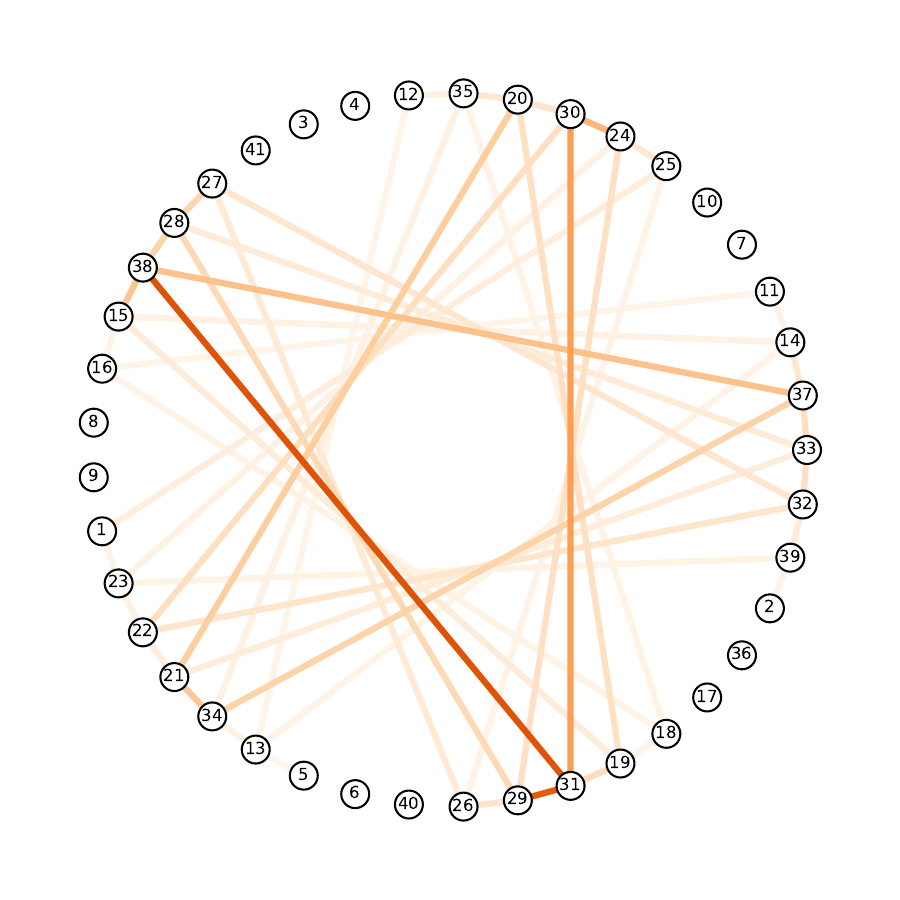}
\\ \tt{Attention Mapped onto Graph}
\end{subfigure}%
\end{minipage}
\\
\bottomrule
\end{tabularx}
}
\caption{
Examples of attention from \texttt{[CLS]} token in a DeBERTa trained on CSL graph separation, forming approximate cycles using skip links.
This is presumably related to measuring the lengths of skip links, which provides sufficient information to infer isomorphism types of CSL graphs.
}
\label{fig:csl_attention}
\end{figure}

\begin{figure}[!hbtp]{
\tt
\scriptsize
\begin{tabularx}{\linewidth}{X}
\toprule \input{figures/sr16/graph_2_pred_2_layer_10_head_11_seed_0.txt} \vspace{0.1cm}
\\ \makecell[c]{\tt{Attention (Head 11/12, Layer 10/12)}}
\\
\midrule
\begin{minipage}{\linewidth}
\begin{subfigure}[b]{0.49\linewidth}
\centering
\includegraphics[width=\linewidth, trim={38 38 38 38}, clip]{figures/sr16/graph_2.pdf}
\\ \tt{Graph}
\end{subfigure}\hfill%
\begin{subfigure}[b]{0.49\linewidth}
\centering
\includegraphics[width=\linewidth, trim={38 38 38 38}, clip]{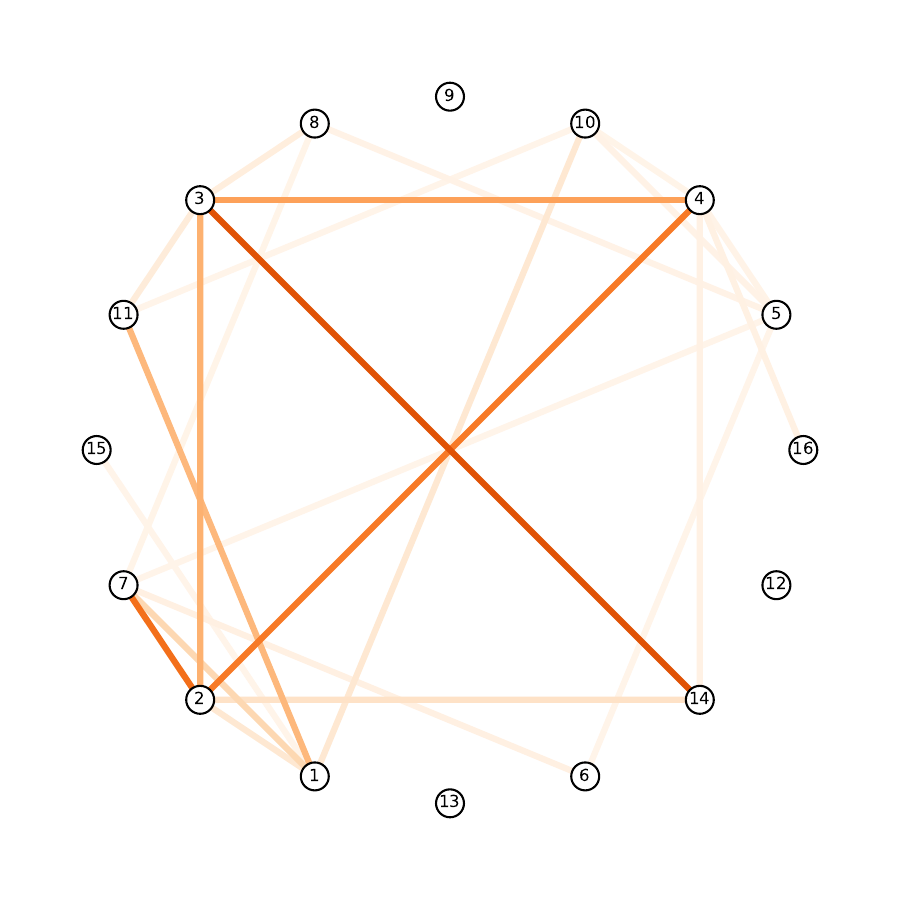}
\\ \tt{Attention Mapped onto Graph}
\end{subfigure}%
\end{minipage}
\\
\midrule \input{figures/sr16/graph_1_pred_1_layer_7_head_11_seed_0.txt} \vspace{0.1cm}
\\ \makecell[c]{\tt{Attention (Head 11/12, Layer 7/12)}}
\\
\midrule
\begin{minipage}{\linewidth}
\begin{subfigure}[b]{0.49\linewidth}
\centering
\includegraphics[width=\linewidth, trim={38 38 38 38}, clip]{figures/sr16/graph_1.pdf}
\\ \tt{Graph}
\end{subfigure}\hfill%
\begin{subfigure}[b]{0.49\linewidth}
\centering
\includegraphics[width=\linewidth, trim={33 33 33 33}, clip]{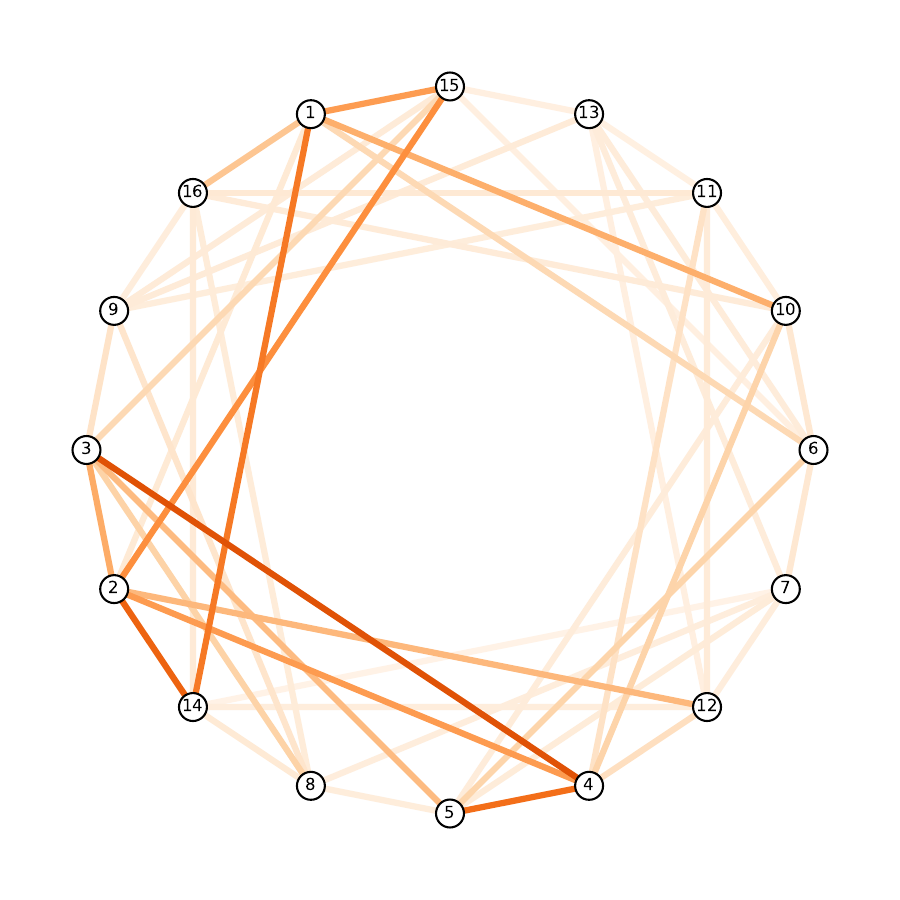}
\\ \tt{Attention Mapped onto Graph}
\end{subfigure}%
\end{minipage}
\\
\bottomrule
\end{tabularx}
}
\caption{
Examples of attention from \texttt{[CLS]} token in a DeBERTa trained on SR16 graph separation.
The model tend to focus on named neighbor records that form a sparse connected substructure, which we conjecture to provide discriminative information on the isomorphism type of SR16 graphs.
}
\label{fig:sr16_attention}
\end{figure}

\begin{figure}[!hbtp]{
\tt
\scriptsize
\begin{tabularx}{\linewidth}{X}
\toprule \input{figures/sr25/graph_1_pred_1_layer_9_head_4_seed_0.txt} \vspace{0.1cm}
\\ \makecell[c]{\tt{Attention (Head 4/12, Layer 9/12)}}
\\
\midrule
\begin{minipage}{\linewidth}
\begin{subfigure}[b]{0.49\linewidth}
\centering
\includegraphics[width=\linewidth, trim={38 38 38 38}, clip]{figures/sr25/graph_1.pdf}
\\ \tt{Graph}
\end{subfigure}\hfill%
\begin{subfigure}[b]{0.49\linewidth}
\centering
\includegraphics[width=\linewidth, trim={38 38 38 38}, clip]{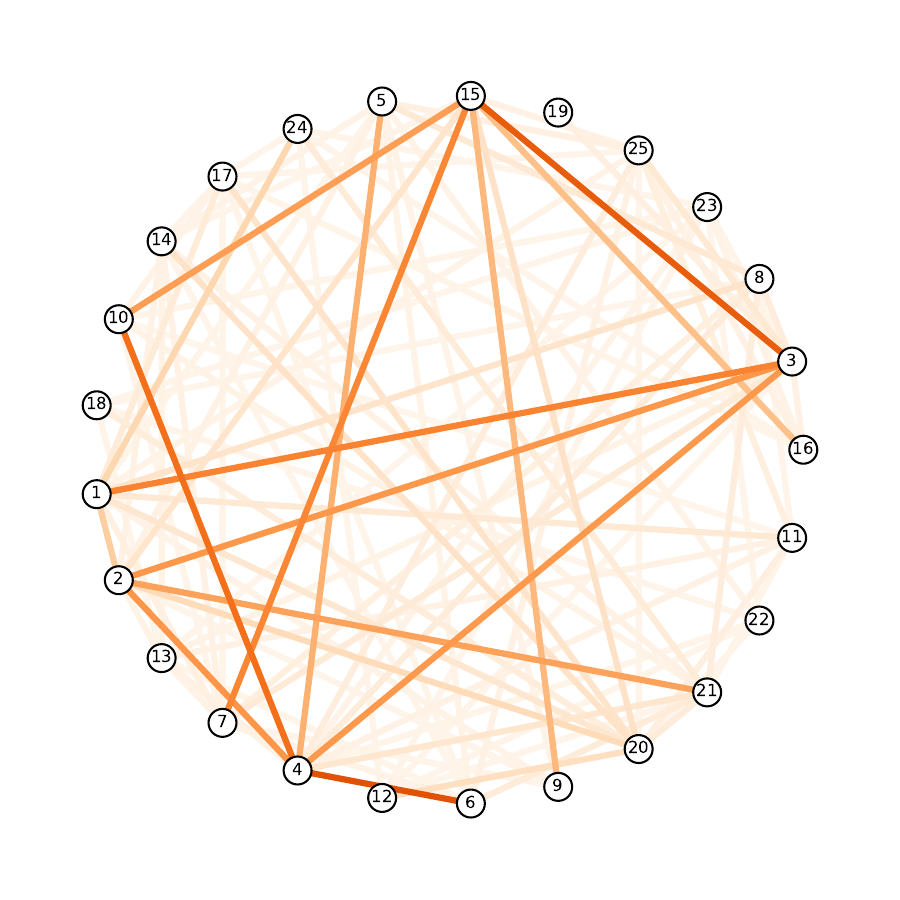}
\\ \tt{Attention Mapped onto Graph}
\end{subfigure}%
\end{minipage}
\\
\midrule \input{figures/sr25/graph_8_pred_8_layer_8_head_10_seed_0.txt} \vspace{0.1cm}
\\ \makecell[c]{\tt{Attention (Head 10/12, Layer 8/12)}}
\\
\midrule
\begin{minipage}{\linewidth}
\begin{subfigure}[b]{0.49\linewidth}
\centering
\includegraphics[width=\linewidth, trim={38 38 38 38}, clip]{figures/sr25/graph_8.pdf}
\\ \tt{Graph}
\end{subfigure}\hfill%
\begin{subfigure}[b]{0.49\linewidth}
\centering
\includegraphics[width=\linewidth, trim={33 33 33 33}, clip]{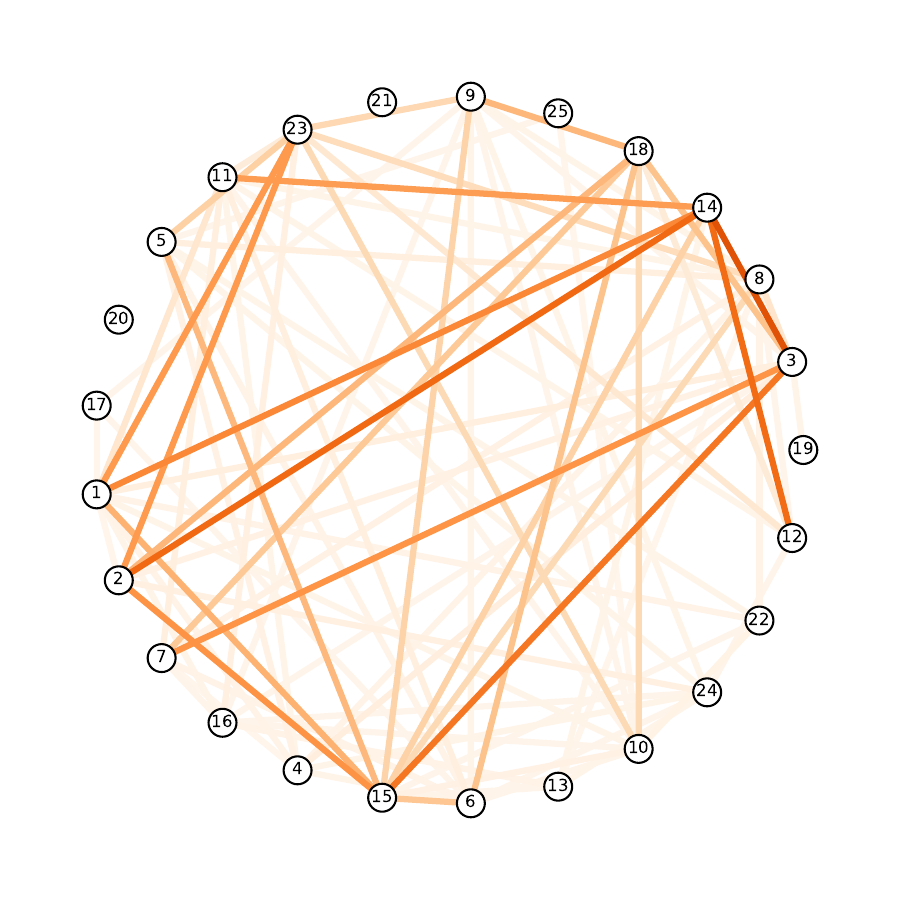}
\\ \tt{Attention Mapped onto Graph}
\end{subfigure}%
\end{minipage}
\\
\bottomrule
\end{tabularx}
}
\caption{
Examples of attention from \texttt{[CLS]} token in a DeBERTa trained on SR25 graph separation.
The model tend to focus on named neighbor records that form a sparse connected substructure, which we conjecture to provide discriminative information on the isomorphism type of SR25 graphs.
}
\label{fig:sr25_attention}
\end{figure}

\newpage

\subsection{Supplementary Materials for Synthetic Experiments (Section~\ref{sec:synthetic})}\label{sec:supp_synthetic}

\begin{table}[!t]
    \vspace{-0.4cm}
    \caption{
    Over-smoothing and over-squashing results, extending Table~\ref{table:smoothing} (Section~\ref{sec:synthetic}).
    We report test MSE aggregated for 4 randomized runs, except for CRaWl which took $>$3 days per run.
    }
    \vspace{-0.4cm}
    \centering
    \footnotesize
    \begin{adjustbox}{max width=0.65\textwidth}
        \begin{tabular}{lccccc}\label{table:apdx_smoothing}
        \\\Xhline{2\arrayrulewidth}\\[-1em]
        Method & Clique (over-smoothing) & Barbell (over-squashing) \\
        \Xhline{2\arrayrulewidth}\\[-1em]
        GCN & 29.65 ± 0.34 & 1.05 ± 0.08 \\
        SAGE & 0.86 ± 0.10 & 0.90 ± 0.29 \\
        GAT & 20.97 ± 0.40 & 1.07 ± 0.09 \\
        \Xhline{2\arrayrulewidth}\\[-1em]
        NSD & 0.08 ± 0.02 & 1.09 ± 0.15 \\
        BuNN & 0.03 ± 0.01 & 0.01 ± 0.07 \\
        \Xhline{2\arrayrulewidth}\\[-1em]
        $l = 100$ \\
        \hline\\[-1em]
        RWNN-MLP & 0.082 ± 0.018 & 0.326 ± 0.035 \\
        \hyperlink{cite.tan2023walklm}{WalkLM}-transformer & 0.096 ± 0.018 & 0.388 ± 0.048 \\
        RWNN-transformer (Ours) & 0.049 ± 0.007 & 0.297 ± 0.050 \\
        \Xhline{2\arrayrulewidth}\\[-1em]
        $l = 1000$ \\
        \hline\\[-1em]
        \hyperlink{cite.tonshoff2023walking}{CRaWl} & 0.103 ± N/A & 0.289 ± N/A \\
        \hyperlink{cite.tonshoff2023walking}{CRaWl}* & 0.121 ± N/A & 0.103 ± N/A \\
        RWNN-MLP & 0.028 ± 0.026 & 0.015 ± 0.005 \\
        \hyperlink{cite.tan2023walklm}{WalkLM}-transformer & \textbf{0.023 ± 0.003} & 0.037 ± 0.009 \\
        RWNN-transformer (Ours) & \textbf{0.023 ± 0.006} & \textbf{0.007 ± 0.001} \\
        \Xhline{2\arrayrulewidth}
    \end{tabular}
    \end{adjustbox}
    \vspace{-0.4cm}
\end{table}

We provide experimental details and additional baseline comparisons for the Clique and Barbell experiments in Section~\ref{sec:synthetic} we could not include in the main text due to space constraints.

\paragraph{Dataset}
In Clique and Barbell~\citep{bamberger2024bundle}, each vertex of a graph $G$ is given a random scalar value.
The task at each vertex $v$ is to regress the mean of all vertices with an opposite sign.
In Clique, $G$ is a 20-clique, and its vertices are randomly partitioned into two sets of size 10, and one set is given positive values and the other is given negative values.
In Barbell, two 10-cliques are linked to form $G$ and one clique is given positive values and the other is given negative values.

\paragraph{Models}
Our RWNN uses MDLR walks with non-backtracking, and since the task is vertex-level, for each $v$ we fix the starting vertex as $v_0=v$.
The recording function $q:(v_0\to\cdots\to v_l, G)\mapsto\mathbf{z}$ takes a walk and produces a record $\mathbf{z}\coloneqq(\mathrm{id}(v_0), \mathbf{x}_{v_0}, \mathbf{x}_{v_0})\to\cdots\to(\mathrm{id}(v_l), \mathbf{x}_{v_l}, \mathbf{x}_{v_0})$, where $\mathrm{id}(v)$ is the anonymization of $v$ and $\mathbf{x}_v$ is the scalar value of $v$.
We find that recording $\mathbf{x}_{v_0}$ at each step improves training.
The reader NN is a 1-layer transformer encoder, with 128 feature dimensions, sinusoidal positional embedding, and average pooling readout.
To map the recording at each step $(\mathrm{id}(v_t), \mathbf{x}_{v_t}, \mathbf{x}_{v_0})$ to a 128-dimensional feature, we use a learnable vocabulary for $\mathrm{id}(v_t)$, a learnable linear layer for $\mathbf{x}_{v_t}$, and a 3-layer MLP with 32 hidden dimensions and tanh activation for $\mathbf{x}_{v_0}$.

We test additional baselines that use random walks.
We test CRaWl~\citep{tonshoff2023walking} using the official code, and only remove batch normalization since it led to severe overfitting for Clique and Barbell.
Since CRaWl uses vertex-level average pooling, while RWNN-transformer uses walk-level, we also test a variant that uses walk-level pooling and denote it by CRaWl*.
We also test RWNN-MLP, which uses an MLP on the concatenated walk features instead of a transformer.
The MLP has 3 layers with 128 hidden dimensions and ReLU activation.
We also test an adaptation of WalkLM~\citep{tan2023walklm}.
While the work entails two ideas, representation of walks and fine-tuning by masked auto-encoding, we focus on testing its choice of walks and their representation since the fine-tuning method is orthogonal to our work.
We do this by modifying RWNN-transformer to use uniform walks and omit anonymization $\mathrm{id}(v_t)$ from the walk record, and denote the model by WalkLM-transformer.

We train all models with Adam optimizer with batch size 1 and learning rate 1e-4 for 500 epochs, closely following \citet{bamberger2024bundle} except for the lower learning rate which we found necessary for stabilizing training due to the stochasticity of walk based methods.
For all RWNN variants, we perform 8 predictions per vertex during training, and average 32 predictions per vertex at test-time.

\paragraph{Results}
The results are in Table~\ref{table:apdx_smoothing}.
All random walk based methods consistently perform better than GCN, SAGE, and GAT, and RWNN-transformer achieves the lowest test MSE.
This supports our hypothesis that RWNNs can, in general, be resistant to over-smoothing and over-squashing (for the latter, if the walk is long), but also shows that proper design choices are important to obtain the best performance.
For example, CRaWl and CRaWl* are limited by their window size of information processing (see Appendix~\ref{sec:apdx_extended_related_work}), and RWNN-MLP is bottlenecked by the fixed-sized hidden feature.
While WalkLM-transformer performs on par with RWNN-transformer on Clique, its error is higher than RWNN-MLP on Barbell, showing the benefits of our design of walks and their records.

\subsection{Supplementary Materials for Graph Isomorphism Learning (Section~\ref{sec:graph_separation})}\label{sec:supp_graph_separation}

\begin{figure}[!t]
\vspace{-0.4cm}
\includegraphics[width=0.8\linewidth]{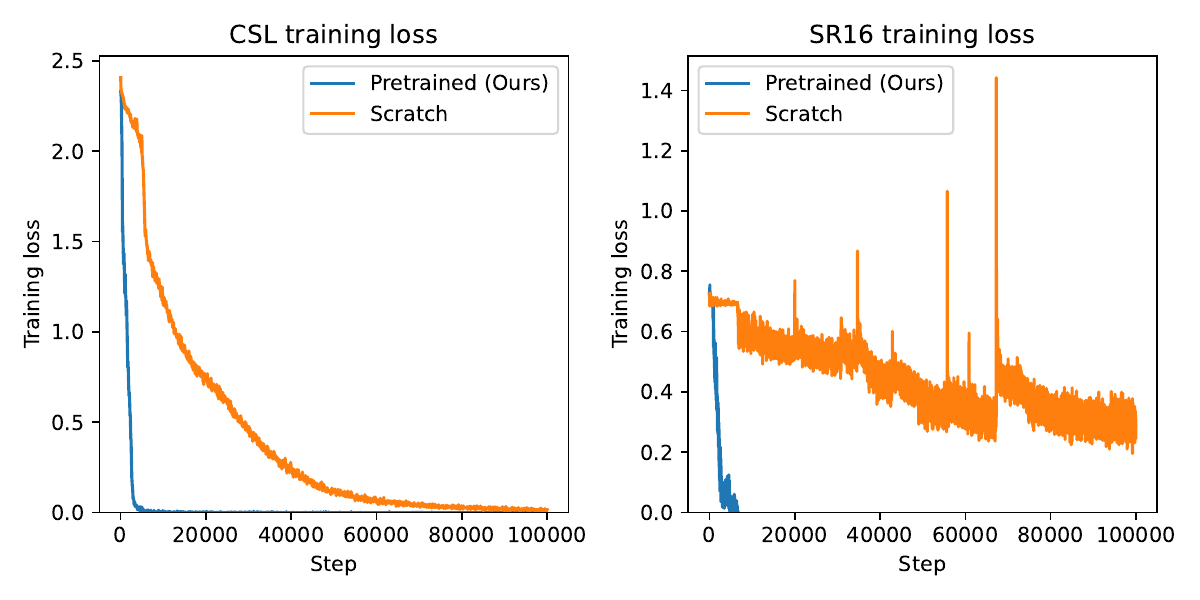}
\vspace{-0.4cm}
\centering
\caption{
The impact of language pre-training on graph isomorphism learning (Section~\ref{sec:graph_separation}).
}
\label{fig:pretraining}
\end{figure}

\begin{table}[!t]
\caption{
Fine-tuning for arXiv, extending Table~\ref{table:arxiv} (Section~\ref{sec:arxiv}).
$\dag$ uses validation labels as in Table~\ref{table:arxiv}.
}
\vspace{-0.4cm}
\centering
\footnotesize
\begin{adjustbox}{max width=0.6\textwidth}
\begin{tabular}{lccc}\label{table:arxiv_lora}
    \\\Xhline{2\arrayrulewidth}\\[-1em]
    Method & Training-free? & Accuracy \\
    \Xhline{2\arrayrulewidth}\\[-1em]
    Llama3-8b zero-shot & $\bigcirc$ & 51.31\% \\
    Llama3-8b one-shot & $\bigcirc$ & 52.81\% \\
    Llama3-8b one-shot$\dag$ & $\bigcirc$ & 52.82\% \\
    Llama3-70b zero-shot & $\bigcirc$ & 65.30\% \\
    Llama3-70b one-shot & $\bigcirc$ & 67.65\% \\
    \Xhline{2\arrayrulewidth}\\[-1em]
    RWNN-Llama3-8b (Ours) & $\bigcirc$ & 71.06\% \\
    RWNN-Llama3-8b (Ours)$\dag$ & $\bigcirc$ & 73.17\% \\
    RWNN-Llama3-70b (Ours)$\dag$ & $\bigcirc$ & 74.85\% \\
    RWNN-Llama3-8b (Ours), fine-tuned & $\times$ & \textbf{74.95\%} \\
    \Xhline{2\arrayrulewidth}
\end{tabular}
\end{adjustbox}
\end{table}

\begin{table}[!t]
\caption{
Transductive classification test accuracy on homophilic (Cora, Citeseer) and heterophilic (Amazon Ratings) datasets.
The baselines scores are from \citet{sato2024training, zhao2023graphtext, liu2024one, chen2024llaga, chen2024text, platonov2023critical}.
$\dag$ denotes using validation labels as in Table~\ref{table:arxiv}.
}
\vspace{-0.4cm}
\centering
\footnotesize
\begin{adjustbox}{max width=\textwidth}
    \begin{tabular}{lccccc}\label{table:real_world}
    \\\Xhline{2\arrayrulewidth}\\[-1em]
    Method & Training-free? & Cora 20-shot & Cora & Citeseer & Amazon Ratings \\
    \Xhline{2\arrayrulewidth}\\[-1em]
    \hyperlink{cite.sato2024training}{Training-free GNN} & $\bigcirc$ & 60.00\% & - & - & - \\
    GCN & $\times$ & 81.40\% & 89.13\% & 74.92\% & 48.70\% \\
    GAT & $\times$ & 80.80\% & 89.68\% & 75.39\% & 49.09\% \\
    \Xhline{2\arrayrulewidth}\\[-1em]
    \hyperlink{cite.zhao2023graphtext}{GraphText} & $\bigcirc$ & 68.30\% & 67.77\% & 68.98\% & - \\
    \hyperlink{cite.chen2024llaga}{LLaGA} & $\bigcirc$ & - & 59.59\% & - & - \\
    \hyperlink{cite.zhao2023graphtext}{GraphText} & $\times$ & - & 87.11\% & 74.77\% & - \\
    \hyperlink{cite.chen2024llaga}{LLaGA} & $\times$ & - & 88.86\% & - & 28.20\% \\
    \hyperlink{cite.liu2024one}{OFA} & $\times$ & 75.90\% & - & - & 51.44\% \\
    \Xhline{2\arrayrulewidth}\\[-1em]
    RWNN-Llama3-8b (Ours) & $\bigcirc$ & 72.29\% & 85.42\% & 79.62\% & 42.71\% \\
    RWNN-Llama3-8b (Ours)$\dag$ & $\bigcirc$ & 79.84\% & 87.08\% & 80.41\% & 38.66\% \\
    RWNN-Llama3-8b (Ours), fine-tuned & $\times$ & 79.45\% & 86.72\% & 80.72\% & 55.76\% \\
    \Xhline{2\arrayrulewidth}
\end{tabular}
\end{adjustbox}
\end{table}

\paragraph{The impact of language pre-training}
In Section~\ref{sec:graph_separation}, we have used language pre-trained DeBERTa for graph isomorphism learning, even though the records of random walks do not resemble natural language (Appendix~\ref{sec:pseudocode} and \ref{sec:attention_visualization}).
To test if language pre-training is useful, we additionally trained RWNN-DeBERTa on CSL and SR16 datasets using the same configurations to our reported models but from random initialization.
The training curves are shown in Figure~\ref{fig:pretraining}.
We find that language pre-training has significant benefits over random initialization for isomorphism learning tasks, as randomly initialized models converge very slowly for CSL, and fails to converge for SR16.

We conjecture that certain representations or computational circuits acquired in language pre-training are useful for processing text records of walks to recognize graphs.
For example, some circuits specialized in recognizing certain syntactic structures in natural language could be repurposed during fine-tuning to detect skip links of CSL graphs from text records of walks.
Our results are also consistent with prior observations on cross-model transfer of pre-trained language models to different domains such as image recognition~\citep{lu2022pretrained, rothermel2021dont}.

\subsection{Supplementary Materials for Transductive Classification (Section~\ref{sec:arxiv})}\label{sec:supp_arxiv}

\paragraph{Fine-tuning}
In Section~\ref{sec:arxiv}, our RWNNs are training-free.
We demonstrate fine-tuning the Llama 3 8b reader NN using the training labels with quantized low-rank adaptation (QLoRA)~\citep{hu2022lora, dettmers2023qlora}.
The result is in Table~\ref{table:arxiv_lora}, showing a promising performance.

\paragraph{Additional real-world transductive classification}

We provide additional experiments on real-world transductive classification datasets Cora, Citeseer, and Amazon Ratings.
Our RWNNs are designed similarly to the ones used for arXiv.
We compile the baseline scores from \citet{sato2024training, zhao2023graphtext, liu2024one, chen2024llaga, chen2024text, platonov2023critical}, which includes MPNNs and language models on graphs.
The results are in Table~\ref{table:real_world}.
Llama 3 based RWNN is competitive among training-free methods, and performs reasonably well when fine-tuned with QLoRA.
Especially, on Amazon Ratings which is heterophilic \citep{platonov2023critical}, our fine-tuned RWNN-Llama3-8b achieves the highest accuracy (55.76\%).
This supports our results in Section~\ref{sec:feature_mixing} that RWNNs avoid over-smoothing, as avoiding it is known to be important for handling heterophily~\citep{yan2022two}.

\subsection{Supplementary Tasks}\label{sec:additional_experiments}

We provide supplementary experimental results on real-world graph classification, substructure counting, and link prediction tasks we could not include in the main text due to space constraints.

\begin{table}[!t]
\vspace{-0.4cm}
\caption{
Peptides-func graph classification.
The baseline scores are from \citet{tonshoff2024where}.
}
\vspace{-0.4cm}
\centering
\footnotesize
\begin{adjustbox}{max width=0.45\textwidth}
    \begin{tabular}{lc}\label{table:peptides_func}
        \\\Xhline{2\arrayrulewidth}\\[-1em]
        Method & Test AP \\
        \Xhline{2\arrayrulewidth}\\[-1em]
        GCN & 0.6860 \\
        GINE & 0.6621 \\
        GatedGCN & 0.6765 \\
        Transformer & 0.6326 \\
        SAN & 0.6439 \\
        CRaWl & 0.7074 \\
        \Xhline{2\arrayrulewidth}\\[-1em]
        RWNN-DeBERTa (Ours) & \textbf{0.7123 ± 0.0016} \\
        \Xhline{2\arrayrulewidth}
    \end{tabular}
\end{adjustbox}
\end{table}

\begin{figure}[!t]{
\tt
\scriptsize
\begin{tabularx}{\linewidth}{r X}
\toprule Record & \input{figures/peptides_func.txt}
\\
\bottomrule
\end{tabularx}
}
\vspace{-0.2cm}
\caption{An example text record for Peptides-func graph classification.}
\label{fig:peptides_func_text}
\vspace{-0.2cm}
\end{figure}

\paragraph{Real-world graph classification}
We conduct a preliminary demonstration of RWNN on real-world graph classification, using the Peptides-func dataset~\citep{dwivedi2022long} used in \citet{tonshoff2023walking}.
Our model is a pre-trained DeBERTa-base (identical to Section~\ref{sec:graph_separation}) fine-tuned on text records of non-backtracking MDLR walks with anonymization and named neighbor recording.
The recording function is designed to properly incorporate the vertex and edge attributes provided in the dataset, including the atom and bond types.
An example text is provided in Figure~\ref{fig:peptides_func_text}.

In Table~\ref{table:peptides_func}, we report the test average precision (AP) at best validation accuracy, with 40 random predicted logits averaged at test time.
We report the mean and standard deviation for five repeated tests.
The baseline scores, including CRaWl, are from \citet{tonshoff2023walking, tonshoff2024where}.
We see that RWNN-DeBERTa, which has been successful in graph isomorphism learning (Section~\ref{sec:graph_separation}), also shows a promising result in real-world protein graph classification, despite its simplicity of recording random walks in plain text and processing them with a fine-tuned language model.

\begin{table}[!t]
\caption{
Substructure (8-cycle) counting.
We report normalized test mean absolute error (MAE) at best validation.
We trained MP, Subgraph GNN, Local 2-GNN, and Local 2-FGNN using configurations in \citet{zhang2024beyond}, and trained MP-RNI, MP-Dropout, AgentNet, and CRaWl as in Table~\ref{table:graph_separation}.
}
\vspace{-0.4cm}
\centering
\footnotesize
\begin{adjustbox}{max width=0.5\textwidth}
\begin{tabular}{lcc}\label{table:counting}
    \\\Xhline{2\arrayrulewidth}\\[-1em]
    Method & Graph-level & Vertex-level \\
    \Xhline{2\arrayrulewidth}\\[-1em]
    \hyperlink{cite.zhang2024beyond}{MP} & {0.0917} & {0.1156} \\
    \hyperlink{cite.zhang2024beyond}{Subgraph GNN} & {0.0515} & {0.0715} \\
    \hyperlink{cite.zhang2024beyond}{Local 2-GNN} & {0.0433} & {0.0478} \\
    \hyperlink{cite.zhang2024beyond}{Local 2-FGNN} & {0.0422} & {0.0438} \\
    Local 2-FGNN (large) & {0.0932} & {0.1121} \\
    \Xhline{2\arrayrulewidth}\\[-1em]
    \hyperlink{cite.abboud2021the}{MP-RNI} & {0.3610} & {0.3650} \\
    \hyperlink{cite.papp2021dropgnn}{MP-Dropout} & {0.1359} & {0.1393} \\
    \Xhline{2\arrayrulewidth}\\[-1em]
    \hyperlink{cite.martinkus2023agent}{AgentNet} & {0.1107} & N/A \\
    \hyperlink{cite.tonshoff2023walking}{CRaWl} & {0.0526} & {0.0725} \\
    \Xhline{2\arrayrulewidth}\\[-1em]
    RWNN-DeBERTa (Ours) & {0.0459} & {0.0465} \\
    \Xhline{2\arrayrulewidth}
\end{tabular}
\end{adjustbox}
\end{table}

\paragraph{Substructure counting}
To supplement Section~\ref{sec:graph_separation}, we demonstrate RWNNs on challenging tasks that require both expressive power and generalization to unseen graphs.
We use a synthetic dataset of 5,000 random regular graphs from \citet{chen2020can} and consider the task of counting substructures.
We choose 8-cycles since they require a particularly high expressive power~\citep{zhang2024beyond}.
We test both graph- and vertex-level counting (i.e., counting 8-cycles containing the queried vertex) following \citet{zhang2024beyond}.
We use the data splits from \citet{chen2020can, zhao2022from, zhang2024beyond}, while excluding disconnected graphs following our assumption in Section~\ref{sec:graph_level}.
Our RWNN uses the same design to Section~\ref{sec:graph_separation}, and we train them with L1 loss for $\leq$250k steps using batch size of 128 graphs for graph-level counting and 8 graphs for vertex-level.
At test-time, we ensemble 64 and 32 predictions by averaging for graph- and vertex-level tasks, respectively.

The results are in Table~\ref{table:counting}, and overall in line with Section~\ref{sec:graph_separation}.
While MPNNs show a low performance, variants with higher expressive powers such as local 2-GNN and local 2-FGNN achieve high performances.
This is consistent with the observation of \citet{zhang2024beyond}.
On the other hand, MPNNs with stochastic symmetry-breaking often show learning difficulties and does not achieve good performance.
AgentNet was not directly applicable for vertex-level counting since its vertex encoding depends on which vertices the agent visits, and a learning difficulty was observed for graph-level counting.
Our approach based on DeBERTa demonstrates competitive performance, outperforming random walk baselines and approaching performances of local 2-GNN and 2-FGNN.

\begin{table}[!t]
\caption{
Real-world link prediction.
The baseline scores are from Table 5 of \citet{cai2021line} and we reproduced LGLP using the official code.
We report the aggregated test AP for 3 randomized runs.
}
\vspace{-0.4cm}
\centering
\footnotesize
\begin{adjustbox}{max width=0.65\textwidth}
    \begin{tabular}{lccc}\label{table:link_pred}
        \\\Xhline{2\arrayrulewidth}\\[-1em]
        Method & YST & KHN & ADV \\
        \Xhline{2\arrayrulewidth}\\[-1em]
        Katz & 81.63 ± 0.41 & 83.04 ± 0.38 & 91.76 ± 0.15 \\
        PR & 82.08 ± 0.46 & 87.18 ± 0.26 & 92.43 ± 0.17 \\
        SR & 76.02 ± 0.49 & 75.87 ± 0.66 & 83.22 ± 0.20 \\
        node2vec & 76.61 ± 0.94 & 80.60 ± 0.74 & 76.70 ± 0.82 \\
        SEAL & 86.45 ± 0.25 & 90.37 ± 0.16 & 93.52 ± 0.13 \\
        LGLP & 88.54 ± 0.77 & 90.80 ± 0.33 & 93.51 ± 0.32 \\
        \Xhline{2\arrayrulewidth}\\[-1em]
        RWNN-transformer (Ours) & 88.13 ± 0.45 & 90.90 ± 0.35 & 93.70 ± 0.21 \\
        \Xhline{2\arrayrulewidth}
    \end{tabular}
\end{adjustbox}
\end{table}

\paragraph{Link prediction}
We demonstrate an application of RWNNs to real-world link prediction, based on \citet{cai2021line} where link prediction is cast as vertex classification on line graphs.
We follow the experimental setup of \citet{cai2021line} and only change their 3-layer 32-dim DGCNN with an RWNN that uses a 3-layer 32-dim transformer encoder.
As the task is vertex-level, we use periodic restarts with $k=5$.
The results in Table~\ref{table:link_pred} shows that RWNN performs reasonably well.

\subsection{Supplementary Analysis (Section~\ref{sec:experiments} and Appendix~\ref{sec:additional_experiments})}\label{sec:apdx_more_analysis}

\begin{table}[!t]
\caption{
The performance of RWNN-DeBERTa on SR16 graph separation (Table~\ref{table:graph_separation}, Section~\ref{sec:graph_separation}) for different required cover times controlled by the use of named neighbor recording.
}
\vspace{-0.4cm}
\centering
\footnotesize
\begin{tabular}{rccc}\label{table:sr16_cover_time}
    \\\Xhline{2\arrayrulewidth}\\[-1em]
     & Test accuracy & Cover time \\
    \Xhline{2\arrayrulewidth}\\[-1em]
    w/o named neighbor recording & 50\% & $C_E(G)=$ 490.00 \\
    w/ named neighbor recording & \textbf{100\%} & $C_V(G)=$ \textbf{48.25} \\
    \Xhline{2\arrayrulewidth}
\end{tabular}
\end{table}

\begin{table}[!t]
\caption{
The performance of RWNN-Llama3-70b$\dagger$ on arXiv transductive classification (Table~\ref{table:arxiv}, Section~\ref{sec:arxiv}) for different local cover times controlled by the restart probability $\alpha$ of the random walk.
}
\vspace{-0.4cm}
\centering
\footnotesize
\begin{tabular}{rccc}\label{table:arxiv_cover_time}
    \\\Xhline{2\arrayrulewidth}\\[-1em]
     & Test accuracy & Cover time $C_V(B_{r=1}(v))$ & Unique vertices observed per walk \\
    \Xhline{2\arrayrulewidth}\\[-1em]
    $\alpha=0.3$ & 74.40\% & 161.2 & \textbf{45.28} \\
    $\alpha=0.7$ & \textbf{74.85\%} & \textbf{96.84} & 31.13 \\
    \Xhline{2\arrayrulewidth}
\end{tabular}
\end{table}

We provide supplementary analysis and discussion on the impact of cover times, test-time ensembling, and the effective walk lengths, which we could not include in the main text due to space constraints.

\paragraph{The impact of cover times of walks}
In Section~\ref{sec:algorithm}, we use cover times as a key tool for the analysis and comparison of different random walk algorithms.
In the analysis, cover times provide the worst-case upper bounds of walk length required to achieve universality, which is in line with the literature on the cover times of Markov chains~\citep{feige1997a}.
To demonstrate how the differences in cover times may translate to different model behaviors, we provide two examples where performing interventions that change cover times leads to substantial differences in task performances.

First, in SR16 graph separation (Section~\ref{sec:graph_separation}), we test the use of named neighbor recording, which changes the required cover time from vertex cover time $C_V(G)$ (when used) to edge cover time $C_E(G)$ (when not used).
The result is in Table~\ref{table:sr16_cover_time}.
We can see that not using named neighbor recording has a drastic effect on the required cover time (vertex cover time 48.25 $\to$ edge cover time\footnote{As in Section~\ref{sec:synthetic}, while the strict definition of edge cover requires traversing each edge in both directions, our measurements only require traversing one direction, which suffices for the universality of RWNNs.} 490.00), and the test performance deteriorates from perfect to random accuracy (100\% $\to$ 50\%).

Second, in arXiv transductive classification (Section~\ref{sec:arxiv}), due to the high homophily~\citep{platonov2023characterizing} we can assume that the cover time of the local neighborhoods, i.e., $B_r(v)$ for small $r$, has an important influence on performance.
Thus, we set $r=1$ and check if the walk strategy that lowers the cover time of $B_r(v)$ leads to a better performance.
To test different local cover times, we use two choices of the restart probability $\alpha=0.3$ and $0.7$.
Since we use named neighbor recording, we measure vertex cover times.
Due to the large size of $G$, we estimate the cover times by fixing a random set of 100 test vertices and running 100 walks from each of them.
The result is in Table~\ref{table:arxiv_cover_time}.
Increasing the restart probability (0.3 $\to$ 0.7) simultaneously results in a lower cover time ($161.2 \to 96.84$) and a higher test accuracy (74.40\% $\to$ 74.85\%), even though the average number of observed vertices decreases (45.28 $\to$ 31.13).
This implies that lower cover time is correlated with better performance, while the number of observed vertices itself is not directly correlated with better performance.

While the above shows cases where interventions in cover times lead to different task performances, in general, there are factors that make it not always trivial to equate task performances and cover times.
This is because a substantial amount of useful information in practice could be recovered from possibly non-covering walks.
We discuss this aspect in depth in Appendix~\ref{sec:apdx_limit}.

\paragraph{The impact of test-time ensembling}

\begin{table}[!t]
    \caption{
    Test performances of RWNN-Llama3-70b$\dag$ (Table~\ref{table:arxiv}, Section~\ref{sec:arxiv}) and RWNN-DeBERTa (Table~\ref{table:peptides_func}, Appendix~\ref{sec:additional_experiments}) for different numbers of randomized predictions for ensembling by voting and averaging, respectively. For *, we were unable to run repeated experiments due to sampling costs.
    }\label{table:test_time_scaling}
    \vspace{-0.4cm}
    \begin{minipage}[!t]{0.48\textwidth}
        \centering
        \footnotesize
        \begin{tabular}{rc}
            \\\Xhline{2\arrayrulewidth}\\[-1em]
            \# samples & arXiv test accuracy \\
            \Xhline{2\arrayrulewidth}\\[-1em]
            1 & 74.45 ± 0.049\% \\
            3 & 74.75 ± 0.035\% \\
            5 & 74.83 ± 0.028\% \\
            7 & 74.86\% ± N/A* \\
            10 & \textbf{74.87\% ± N/A*} \\
            \Xhline{2\arrayrulewidth}
        \end{tabular}
        \label{tab:first}
    \end{minipage}%
    \hfill
    \begin{minipage}[!t]{0.48\textwidth}
        \centering
        \footnotesize
        \begin{tabular}{rc}
            \\\Xhline{2\arrayrulewidth}\\[-1em]
            \# samples & Peptides-func test AP \\
            \Xhline{2\arrayrulewidth}\\[-1em]
            1 & 0.5300 ± 0.0092 \\
            10 & 0.6937 ± 0.0039 \\
            20 & 0.7056 ± 0.0026 \\
            40 & 0.7123 ± 0.0016 \\
            60 & 0.7145 ± 0.0015 \\
            80 & 0.7155 ± 0.0014 \\
            100 & 0.7161 ± 0.0009 \\
            160 & 0.7176 ± 0.0013 \\
            320 & \textbf{0.7177 ± N/A*} \\
            \Xhline{2\arrayrulewidth}
        \end{tabular}
        \label{tab:second}
    \end{minipage}
\end{table}

For each test input to an RWNN $X_\theta$, we often sample several randomized predictions $\hat{\mathbf{y}}\sim X_\theta(\cdot)$ and ensemble them by averaging (or voting, for Llama~3 models where we cannot directly obtain continuous logits).
As discussed in Section~\ref{sec:algorithm}, this can be viewed as MC estimation of the underlying invariant mean predictor $(\cdot)\mapsto\mathbb{E}[X_\theta(\cdot)]$ (or the invariant limiting classifier in the case of voting~\citep{cotta2023probabilistic}).
From another perspective, we can also interpret this as a type of test-time scaling with repeated sampling, which has been shown beneficial in language models~\citep{brown2024large} and randomized algorithms (called amplification~\citep{cotta2023probabilistic, sieradzki2022coin}).
To see if similar perspectives can be held for RWNNs, we measure the model performances with increased numbers of samples for averaging or voting.
The results are in Table~\ref{table:test_time_scaling}.
We see that increasing sample size consistently leads to better performances and reduced variances overall.
The point of plateau seems to differ per application and model architecture, and we leave obtaining a better understanding of the scaling behavior for future work.

\paragraph{Effective walk lengths}

\begin{table}[!t]
\caption{
Effective random walk lengths for the experiments in Sections~\ref{sec:graph_separation} and \ref{sec:arxiv}.
}
\vspace{-0.4cm}
\centering
\footnotesize
\begin{tabular}{cc}\label{table:lengths}
    \\\Xhline{2\arrayrulewidth}\\[-1em]
    Dataset & Length \\
    \Xhline{2\arrayrulewidth}\\[-1em]
    CSL (Section~\ref{sec:graph_separation}) & 214.03 ± 2.48 \\
    SR16 (Section~\ref{sec:graph_separation}) & 222.00 ± 0.00 \\
    SR25 (Section~\ref{sec:graph_separation}) & 129.54 ± 2.26 \\
    arXiv (Section~\ref{sec:arxiv}) & 194.52 ± 10.59 \\
    \Xhline{2\arrayrulewidth}
\end{tabular}
\end{table}

For Sections~\ref{sec:graph_separation} and \ref{sec:arxiv}, the effective lengths of walks are affected by the language model tokenizers since text records exceeding the maximum number of tokens are truncated.
The maximum number of tokens is a hyperparameter which we determine from memory constraints; for the DeBERTa tokenizer used in Section~\ref{sec:graph_separation}, we set it to 512, and for the Llama 3 tokenizer used in Section~\ref{sec:arxiv}, we set it to 4,096.
The effective lengths of random walks for each dataset are measured and provided in Table~\ref{table:lengths}.
The effective length is shorter (129.54 on average) for SR25 graphs compared to CSL and SR16 (214.03 and 222.00 on average, respectively) although using the same tokenizer configurations, which is because the higher average degrees of SR25 graphs (Figure~\ref{fig:sr25_text}) compared to CSL and SR16 graphs (Figures~\ref{fig:csl_text} and \ref{fig:sr16_text}, respectively) has led to a higher number of text characters invested in recording named neighbors ("\texttt{\#}" symbols in the figures).

\newpage

\subsection{Proofs}\label{sec:proofs}

We recall that an isomorphism between two graphs $G$ and $H$ is a bijection $\pi:V(G)\to V(H)$ such that any two vertices $u$ and $v$ are adjacent in $G$ if and only if $\pi(u)$ and $\pi(v)$ are adjacent in $H$.
If an isomorphism $\pi$ exists between $G$ and $H$, the graphs are isomorphic and written $G\simeq H$ or $G\overset{\pi}{\simeq}H$.
By $X\overset{d}{=}Y$ we denote the equality of two random variables $X$ and $Y$ in distributions.

In the proofs, we write by $\mathrm{id}(\cdot)$ the namespace obtained by anonymization of a walk $v_0\to\cdots\to v_l$ (Section~\ref{sec:algorithm}) that maps each vertex $v_t$ to its unique integer name $\mathrm{id}(v_t)$ starting from $\mathrm{id}(v_0)=1$.

Let us define some notations related to cover times.
These will be used from Appendix~\ref{sec:proof_infinite_local_cover_time}.
We denote by $H(u, v)$ the expected number of steps a random walk starting from $u$ takes until reaching $v$.
This is called the hitting time between $u$ and $v$.
For a graph $G$, let $C_V(G, u)$ be the expected number of steps a random walk starting from $u$ takes until visiting every vertices of $G$.
The vertex cover time $C_V(G)$, or the cover time, is this quantity given by the worst possible $u$:
\begin{align}\label{eq:cover_time}
    C_V(G)\coloneq \max_{u\in V(G)} C_V(G, u).
\end{align}
Likewise, let $C_E(G, u)$ be the expected number of steps a random walk starting from $u$ takes until traversing every edge of $G$.
The edge cover time $C_E(G)$ is given by the worst possible $u$:
\begin{align}\label{eq:edge_cover_time}
    C_E(G)\coloneq \max_{u\in V(G)} C_E(G, u).
\end{align}
For local balls $B_r(u)$, we always set the starting vertex of the random walk to $u$ (Section~\ref{sec:vertex_level}).
Thus, we define their cover times as follows:
\begin{align}
    C_V(B_r(u))\coloneq C_V(B_r(u), u),\label{eq:local_cover_time}\\
    C_E(B_r(u))\coloneq C_E(B_r(u), u).\label{eq:local_edge_cover_time}
\end{align}

\subsubsection{Proof of Proposition~\ref{theorem:invariance}~(Section~\ref{sec:algorithm})}\label{sec:proof_invariance}

\begingroup
\def\thetheorem{\ref{theorem:invariance}}
\begin{proposition}
    $X_\theta(\cdot)$ is invariant in probability, if its random walk algorithm is invariant in probability and its recording function is invariant.
\end{proposition}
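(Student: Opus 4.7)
The plan is a direct chain of distributional equalities that threads the input random walk through the isomorphism $\pi$, then through $q$, then through $f_\theta$. Fix $G \overset{\pi}{\simeq} H$, let $W_G = (v_0 \to \cdots \to v_l)$ be the random walk on $G$ produced by the algorithm, let $W_H = (u_0 \to \cdots \to u_l)$ be the independent random walk on $H$, and write $\pi(W_G) \coloneq (\pi(v_0) \to \cdots \to \pi(v_l))$ for the pushforward of $W_G$ along $\pi$. Then by definition $X_\theta(G) = f_\theta(q(W_G, G))$ and $X_\theta(H) = f_\theta(q(W_H, H))$, and the goal is $X_\theta(G) \overset{d}{=} X_\theta(H)$.

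The proof assembles three steps. First, the invariance of the recording function (Eq.~\ref{eq:recording_invariance}) is a pointwise identity, so it yields a sample-path identity
\begin{equation*}
    q(W_G, G) = q(\pi(W_G), H) \quad \text{almost surely},
\end{equation*}
which in particular implies equality in distribution. Second, the assumed probabilistic invariance of the random walk (Eq.~\ref{eq:random_walk_invariance}) gives $\pi(W_G) \overset{d}{=} W_H$. Applying the measurable map $q(\cdot, H)$ to both sides preserves this equality in distribution, so $q(\pi(W_G), H) \overset{d}{=} q(W_H, H)$. Chaining, $q(W_G, G) \overset{d}{=} q(W_H, H)$. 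Third, $f_\theta$ is a deterministic (measurable) function, so pushing both sides forward yields $f_\theta(q(W_G, G)) \overset{d}{=} f_\theta(q(W_H, H))$, i.e., $X_\theta(G) \overset{d}{=} X_\theta(H)$.

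The only real subtlety worth handling carefully is bookkeeping around the pushforward $\pi(W_G)$: one has to note that $\pi$ is a bijection of vertex sets that maps edges to edges, so $\pi(W_G)$ is a bona fide walk on $H$, and that the recording invariance in Eq.~\ref{eq:recording_invariance} is quantified over every walk on $G$, hence it applies pointwise to the random object $W_G$. Once that is spelled out, the remaining steps are just ``a measurable function of equally-distributed inputs produces equally-distributed outputs,'' used twice. I do not expect any genuine obstacle; the statement is essentially a lemma about composing a probabilistically invariant stochastic map with two deterministic invariant maps, extending the deterministic invariant-projection principle of \citet{puny2022frame, dym2024equivariant} to the probabilistic setting as advertised in the main text.
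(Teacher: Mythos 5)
Your argument is essentially identical to the paper's: both combine recording-function invariance with probabilistic invariance of the walk to get $q(W_G,G)\overset{d}{=}q(W_H,H)$, then push forward through the deterministic $f_\theta$. Your version is slightly more explicit about the intermediate bookkeeping (the almost-sure identity $q(W_G,G)=q(\pi(W_G),H)$ followed by a pushforward along the measurable map $q(\cdot,H)$), where the paper simply says ``combining,'' but there is no substantive difference in the route.
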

\addtocounter{theorem}{-1}
\endgroup
\begin{proof}
    We recall the probabilistic invariance of random walk algorithm in \Eqref{eq:random_walk_invariance}:
    \begin{align}\label{eq:apdx_random_walk_invariance}
        \pi(v_0)\to\cdots\to\pi(v_l)\overset{d}{=}u_0\to\cdots\to u_l,\quad\forall G\overset{\pi}{\simeq}H,
    \end{align}
    where $v_{[\cdot]}$ is a random walk on $G$ and $u_{[\cdot]}$ is a random walk on $H$.
    We further recall the invariance of recording function $q: (v_0\to\cdots\to v_l, G)\mapsto\mathbf{z}$ in \Eqref{eq:recording_invariance}:
    \begin{align}\label{eq:apdx_recording_invariance}
        q(v_0\to\cdots\to v_l, G) = q(\pi(v_0)\to\cdots\to\pi(v_l), H),\quad\forall G\overset{\pi}{\simeq}H,
    \end{align}
    for any given random walk $v_{[\cdot]}$ on $G$.
    Combining \Eqref{eq:apdx_random_walk_invariance} and \eqref{eq:apdx_recording_invariance}, we have the following:
    \begin{align}
        q(v_0\to\cdots\to v_l, G)\overset{d}{=}q(u_0\to\cdots\to u_l, H),\quad\forall G\simeq H.
    \end{align}
    Then, since the reader neural network $f_\theta$ is a deterministic map, we have:
    \begin{align}
        f_\theta(q(v_0\to\cdots\to v_l, G))\overset{d}{=}f_\theta(q(u_0\to\cdots\to u_l, H)),\quad\forall G\simeq H,
    \end{align}
    which leads to:
    \begin{align}
        X_\theta(G)\overset{d}{=}X_\theta(H),\quad\forall G\simeq H.
    \end{align}
    This shows \Eqref{eq:graph_level_probabilistic_invariance} and completes the proof.
\end{proof}

\subsubsection{Proof of Proposition~\ref{theorem:first_order_random_walk_invariance}~(Section~\ref{sec:algorithm})}\label{sec:proof_first_order_random_walk_invariance}

We first show a useful lemma.

\begin{lemma}\label{lemma:first_order_random_walk_invariance_from_transitions}
    The random walk in \Eqref{eq:first_order_random_walk} is invariant in probability, if the probability distributions of the starting vertex $v_0$ and each transition $v_{t-1}\to v_t$ are invariant.
\end{lemma}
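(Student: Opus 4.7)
The plan is to exploit the Markov structure of the walk and factorize its joint law into a product of the starting distribution and one-step transitions, then apply the hypothesized invariances factor by factor. Concretely, for any isomorphism $G \overset{\pi}{\simeq} H$ and any fixed sequence $(v_0, \ldots, v_l)$ in $V(G)$, I would write
\begin{align*}
    \mathrm{Prob}[v_0 \to \cdots \to v_l]
    &= \mathrm{Prob}[v_0] \prod_{t=1}^{l} \mathrm{Prob}[v_t \mid v_{t-1}], \\
    \mathrm{Prob}[\pi(v_0) \to \cdots \to \pi(v_l)]
    &= \mathrm{Prob}[\pi(v_0)] \prod_{t=1}^{l} \mathrm{Prob}[\pi(v_t) \mid \pi(v_{t-1})],
\end{align*}
where the first line is computed on $G$ and the second on $H$. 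The first step uses only that \Eqref{eq:first_order_random_walk} defines a first-order Markov chain, so its joint law factorizes as above.

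Next, I would invoke the two hypotheses in turn: invariance of the starting distribution gives $\mathrm{Prob}[v_0] = \mathrm{Prob}[\pi(v_0)]$, and invariance of each transition gives $\mathrm{Prob}[v_t \mid v_{t-1}] = \mathrm{Prob}[\pi(v_t) \mid \pi(v_{t-1})]$ for every $t \geq 1$. Multiplying these equalities factor by factor yields equality of the two joint probabilities for every realization $(v_0, \ldots, v_l)$. Since $\pi$ is a bijection, ranging over all realizations in $V(G)$ is the same as ranging over all realizations in $V(H)$, so this equality of probability mass functions is exactly the statement $\pi(v_0)\to\cdots\to\pi(v_l) \overset{d}{=} u_0\to\cdots\to u_l$ required by \Eqref{eq:apdx_random_walk_invariance}.

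There is no real obstacle here; the lemma is essentially a bookkeeping consequence of the Markov property together with the definition of invariant kernels. The only points to be careful about are: (i) stating precisely what "invariant transition" means, i.e.\ $\mathrm{Prob}[v_t = x \mid v_{t-1} = u]$ on $G$ equals $\mathrm{Prob}[v_t = \pi(x) \mid v_{t-1} = \pi(u)]$ on $H$, and (ii) noting that non-walk sequences (those using non-edges of $G$) carry zero probability on both sides because $\pi$ sends edges to edges, so the argument covers all $(v_0,\ldots,v_l) \in V(G)^{l+1}$ uniformly and no separate case analysis is needed.
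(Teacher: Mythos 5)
Your proof is correct and takes essentially the same approach as the paper: both rely on the chain rule and first-order Markov property to reduce equality in distribution of the full walk to invariance of the initial distribution and one-step transitions. The only stylistic difference is that you multiply out the full factorization in one step while the paper phrases the same argument as an induction on the walk length; these are equivalent.
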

\begin{proof}
    We prove by induction.
    Let us write probabilistic invariance of the starting vertex $v_0$ as:
    \begin{align}\label{eq:first_order_random_walk_initial_invariance}
        \pi(v_0)\overset{d}{=} u_0,\quad\forall G\overset{\pi}{\simeq}H,
    \end{align}
    and probabilistic invariance of each transition $v_{t-1}\to v_t$ as follows, by fixing $v_{t-1}$ and $u_{t-1}$:
    \begin{align}\label{eq:first_order_random_walk_transition_invariance}
        \pi(v_{t-1})\to\pi(v_t)\overset{d}{=}u_{t-1}\to u_t,\quad\forall G\overset{\pi}{\simeq}H, v_{t-1}\in V(G), u_{t-1}\coloneq\pi(v_{t-1}).
    \end{align}
    Let us assume the following for some $t\geq1$:
    \begin{align}
        \pi(v_0)\to\cdots\to\pi(v_{t-1})\overset{d}{=}u_0\to\cdots\to u_{t-1},\quad G\overset{\pi}{\simeq}H.
    \end{align}
    Then, from the chain rule of joint distributions, the first-order Markov property of random walk in \Eqref{eq:first_order_random_walk}, and probabilistic invariance in \Eqref{eq:first_order_random_walk_transition_invariance}, we obtain the following:
    \begin{align}
        \pi(v_0)\to\cdots\to\pi(v_t)\overset{d}{=}u_0\to\cdots\to u_t,\quad G\overset{\pi}{\simeq}H.
    \end{align}
    By induction from the initial condition $\pi(v_0)\overset{d}{=} u_0$ in \Eqref{eq:first_order_random_walk_initial_invariance}, the following holds $\forall l>0$:
    \begin{align}
        \pi(v_0)\to\cdots\to\pi(v_l)\overset{d}{=}u_0\to\cdots\to u_l,\quad\forall G\overset{\pi}{\simeq}H.
    \end{align}
    This shows \Eqref{eq:random_walk_invariance} and completes the proof.
\end{proof}

We now prove Proposition~\ref{theorem:first_order_random_walk_invariance}.

\begingroup
\def\thetheorem{\ref{theorem:first_order_random_walk_invariance}}
\begin{proposition}
    The random walk in \Eqref{eq:first_order_random_walk} is invariant in probability if its conductance $c_{[\cdot]}(\cdot)$ is invariant:
    \begin{align}\label{eq:apdx_conductance_invariance}
        c_G(u, v) = c_H(\pi(u), \pi(v)),\quad\forall G\overset{\pi}{\simeq}H.
    \end{align}
    It includes constant conductance, and any choice that only uses degrees of endpoints $\mathrm{deg}(u)$, $\mathrm{deg}(v)$.
\end{proposition}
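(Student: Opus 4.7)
The plan is to apply Lemma~\ref{lemma:first_order_random_walk_invariance_from_transitions} (just proved), which reduces the task to verifying two things: (i) the distribution of the starting vertex $v_0$ is invariant, and (ii) each one-step transition distribution is invariant. Once both are established, the full-walk invariance in \Eqref{eq:random_walk_invariance} follows immediately by the lemma.

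For (i), I would note that Algorithm~\ref{alg:random_walk} samples $v_0$ uniformly over $V(G)$ when no query vertex is given (and deterministically when one is given, which is trivially invariant under the isomorphism $\pi$ by construction). Since $\pi:V(G)\to V(H)$ is a bijection, $\pi(v_0)$ is uniform on $V(H)$, matching the law of $u_0$. For (ii), the key observation is that any isomorphism $\pi:G\to H$ induces a bijection between neighborhoods, $\pi(N_G(u)) = N_H(\pi(u))$ for every $u\in V(G)$. Fixing $v_{t-1}=u$ and letting $u_{t-1}=\pi(u)$, I would compute, for any $x\in N_G(u)$,
\begin{align*}
\mathrm{Prob}[u_t=\pi(x)\mid u_{t-1}=\pi(u)]
&= \frac{c_H(\pi(u),\pi(x))}{\sum_{y'\in N_H(\pi(u))} c_H(\pi(u),y')} \\
&= \frac{c_G(u,x)}{\sum_{y\in N_G(u)} c_G(u,y)}
= \mathrm{Prob}[v_t=x\mid v_{t-1}=u],
\end{align*}
where the middle equality uses the conductance invariance \Eqref{eq:apdx_conductance_invariance} in the numerator and reindexes the denominator sum via $y'=\pi(y)$ using the neighborhood bijection. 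This gives the transition invariance \eqref{eq:first_order_random_walk_transition_invariance} required by the lemma.

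Finally, I would verify the two claims about admissible conductances. Constant $c_G\equiv c$ trivially satisfies \Eqref{eq:apdx_conductance_invariance}. For conductances depending only on endpoint degrees, i.e., $c_G(u,v)=\phi(\mathrm{deg}_G(u),\mathrm{deg}_G(v))$ for some symmetric $\phi$, the fact that graph isomorphisms preserve degrees ($\mathrm{deg}_G(u)=\mathrm{deg}_H(\pi(u))$) immediately yields $c_G(u,v)=c_H(\pi(u),\pi(v))$; this covers the MDLR choice of \Eqref{eq:minimum_degree_local_rule} as a special case.

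There is no real obstacle here: the proof is essentially a bookkeeping argument built on the neighborhood-bijection property of isomorphisms, and the only care needed is to reindex the normalizing sum correctly so that the conductance invariance hypothesis can be applied term-by-term. The main conceptual content was already absorbed into Lemma~\ref{lemma:first_order_random_walk_invariance_from_transitions}, which localizes invariance to per-step transitions.
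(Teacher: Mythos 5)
Your proof is correct and follows essentially the same route as the paper: invoke Lemma~\ref{lemma:first_order_random_walk_invariance_from_transitions} to reduce to per-step transition invariance, establish it by reindexing the normalizing sum via the neighborhood bijection $\pi(N_G(u))=N_H(\pi(u))$ and applying the conductance hypothesis term-by-term, then check that constant and degree-based conductances satisfy \Eqref{eq:apdx_conductance_invariance}. No gaps.
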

\addtocounter{theorem}{-1}
\endgroup
\begin{proof}
    We recall \Eqref{eq:first_order_random_walk} for a given conductance function $c_{G}: E(G)\to\mathbb{R}_+$:
    \begin{align}
        \mathrm{Prob}[v_t=x|v_{t-1}=u] \coloneq \frac{c_G(u, x)}{\sum_{y\in N(u)}c_G(u, y)}.
    \end{align}
    From Lemma~\ref{lemma:first_order_random_walk_invariance_from_transitions}, it is sufficient to show the probabilistic invariance of each transition $v_{t-1}\to v_t$ as we sample $v_0$ from invariant distribution $\mathrm{Uniform}(V(G))$ (Algorithm~\ref{alg:random_walk}).
    We rewrite \Eqref{eq:first_order_random_walk_transition_invariance} as:
    \begin{align}\label{eq:apdx_first_order_transition_probability_invariance}
        \mathrm{Prob}[v_t=x|v_{t-1}=u] = \mathrm{Prob}[u_t=\pi(x)|u_{t-1}=\pi(u)],\quad\forall G\overset{\pi}{\simeq}H.
    \end{align}
    If the conductance function $c_{[\cdot]}(\cdot)$ is invariant (\Eqref{eq:apdx_conductance_invariance}), we can show \Eqref{eq:apdx_first_order_transition_probability_invariance} by:
    \begin{align}
        \mathrm{Prob}[v_t=x|v_{t-1}=u] &\coloneq \frac{c_G(u, x)}{\sum_{y\in N(u)}c_G(u, y)},\nonumber\\
        &= \frac{c_H(\pi(u), \pi(x))}{\sum_{y\in N(u)}c_H(\pi(u), \pi(y))},\nonumber\\
        &= \frac{c_H(\pi(u), \pi(x))}{\sum_{\pi(y)\in N(\pi(u))}c_H(\pi(u), \pi(y))},\nonumber\\
        &= \mathrm{Prob}[u_t=\pi(x)|u_{t-1}=\pi(u)],\quad\forall G\overset{\pi}{\simeq}H.
    \end{align}
    In the third equality, we used the fact that isomorphism $\pi(\cdot)$ preserves adjacency.
    It is clear that any conductance function $c_{[\cdot]}(\cdot)$ with a constant conductance such as $c_G(\cdot)=1$ is invariant.
    Furthermore, any conductance function that only uses degrees of endpoints $\mathrm{deg}(u)$, $\mathrm{deg}(v)$ is invariant as isomorphism $\pi(\cdot)$ preserves the degree of each vertex.
    This completes the proof.
\end{proof}

\subsubsection{Proof of Proposition~\ref{theorem:second_order_random_walk_invariance}~(Appendix~\ref{sec:invariance_second_order})}\label{sec:proof_invariance_second_order}

We extend the proof of Proposition~\ref{theorem:first_order_random_walk_invariance} given in Appendix~\ref{sec:proof_first_order_random_walk_invariance} to second-order random walks discussed in Appendix~\ref{sec:invariance_second_order}.
We first show a useful lemma:

\begin{lemma}\label{lemma:second_order_random_walk_invariance_from_transitions}
    A second-order random walk algorithm is invariant in probability, if the probability distributions of the starting vertex $v_0$, the first transition $v_0\to v_1$, and each transition $(v_{t-2}, v_{t-1})\to v_t$ for $t>1$ are invariant.
\end{lemma}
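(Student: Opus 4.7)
The plan is to mirror the inductive argument used to prove Lemma~\ref{lemma:first_order_random_walk_invariance_from_transitions}, adapting it to handle the second-order Markov property where the next vertex depends on the previous two. The chief change is that the base case now requires two steps (the starting vertex and the first transition), and the inductive step uses the second-order Markov factorization rather than the first-order one.

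First I would write the invariance hypotheses explicitly in the form needed for the argument. Probabilistic invariance of the starting vertex gives $\pi(v_0)\overset{d}{=}u_0$ for all $G\overset{\pi}{\simeq}H$. Probabilistic invariance of the first transition, after fixing $v_0$ and $u_0=\pi(v_0)$, gives
\begin{align}
\pi(v_0)\to\pi(v_1)\overset{d}{=}u_0\to u_1,\quad\forall G\overset{\pi}{\simeq}H.
\end{align}
Probabilistic invariance of each transition $(v_{t-2},v_{t-1})\to v_t$ for $t>1$, after fixing $v_{t-2},v_{t-1}$ and $u_{t-2}=\pi(v_{t-2}), u_{t-1}=\pi(v_{t-1})$, reads
\begin{align}
\mathrm{Prob}[v_t=x\mid v_{t-2},v_{t-1}] = \mathrm{Prob}[u_t=\pi(x)\mid u_{t-2},u_{t-1}],\quad\forall G\overset{\pi}{\simeq}H.
\end{align}

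Next I would set up the induction on walk length. The base case $l=1$ is immediate from the invariance of $v_0$ combined with the invariance of the first transition $v_0\to v_1$, applied via the chain rule $\mathrm{Prob}[v_0,v_1]=\mathrm{Prob}[v_0]\cdot\mathrm{Prob}[v_1\mid v_0]$ and its counterpart on $H$. For the inductive step, suppose $\pi(v_0)\to\cdots\to\pi(v_{t-1})\overset{d}{=}u_0\to\cdots\to u_{t-1}$ for some $t\geq 2$. Using the chain rule once more,
\begin{align}
\mathrm{Prob}[v_0,\ldots,v_t] = \mathrm{Prob}[v_0,\ldots,v_{t-1}]\cdot\mathrm{Prob}[v_t\mid v_0,\ldots,v_{t-1}],
\end{align}
and applying the second-order Markov property $\mathrm{Prob}[v_t\mid v_0,\ldots,v_{t-1}]=\mathrm{Prob}[v_t\mid v_{t-2},v_{t-1}]$, the induction hypothesis and transition invariance combine to extend the joint equality in distribution to step $t$. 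Induction then yields the full statement \Eqref{eq:random_walk_invariance}.

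The main subtlety, and the only place where some care is needed, is ensuring that the Markov reduction $\mathrm{Prob}[v_t\mid v_0,\ldots,v_{t-1}]=\mathrm{Prob}[v_t\mid v_{t-2},v_{t-1}]$ can indeed be invoked for any second-order random walk of the form covered by the lemma (this is automatic for walks like non-backtracking and node2vec as defined in \Eqref{eq:non_backtracking_random_walk} and \Eqref{eq:node2vec_random_walk}, since their conditional law by construction depends only on the last two vertices). Once that observation is in place, the rest is mechanical bookkeeping analogous to Appendix~\ref{sec:proof_first_order_random_walk_invariance}, and the lemma follows. This lemma then serves as the workhorse for proving Proposition~\ref{theorem:second_order_random_walk_invariance}, where one would verify invariance of the individual second-order transition probabilities of the non-backtracking and node2vec walks using the invariance of the underlying first-order probabilities together with the fact that the shortest-path distance $d(v_{t-1},v_{t+1})$ used in $\alpha(\cdot,\cdot)$ is an isomorphism invariant.
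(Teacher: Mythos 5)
Your proposal is correct and follows essentially the same inductive argument as the paper: base case from the invariance of $v_0$ and the first transition $v_0\to v_1$, then an inductive step that combines the chain rule, the second-order Markov property, and the invariance of each transition $(v_{t-2},v_{t-1})\to v_t$. Your side remark that the Markov reduction $\mathrm{Prob}[v_t\mid v_0,\ldots,v_{t-1}]=\mathrm{Prob}[v_t\mid v_{t-2},v_{t-1}]$ holds by definition for second-order walks is a fair point of care, and it matches what the paper implicitly assumes.
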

\begin{proof}
    We prove by induction.
    The probabilistic invariance of starting vertex $v_0$ and first transition $v_0\to v_1$ are:
    \begin{align}
        \pi(v_0)&\overset{d}{=} u_0,\quad\forall G\overset{\pi}{\simeq}H,\label{eq:second_order_random_walk_initial_invariance}\\
        \pi(v_0)\to\pi(v_1)&\overset{d}{=}u_0\to u_1,\quad\forall G\overset{\pi}{\simeq}H, v_0\in V(G), u_0\coloneq\pi(v_0),\label{eq:second_order_random_walk_first_invariance}
    \end{align}
    and probabilistic invariance of each transition $(v_{t-2}, v_{t-1})\to v_t$ for $t>1$ can be written as follows by fixing $(v_{t-2}, v_{t-1})$ and $(u_{t-2}, u_{t-1})$:
    \begin{align}\label{eq:second_order_random_walk_transition_invariance}
        \pi(v_{t-2})\to\pi(v_{t-1})\to\pi(v_t)\overset{d}{=}u_{t-2}\to u_{t-1}\to u_t,\quad&\forall G\overset{\pi}{\simeq}H,\nonumber\\
        &(v_{t-2}, v_{t-1})\in E(G),\nonumber\\
        &(u_{t-2}, u_{t-1})\coloneq(\pi(v_{t-2}), \pi(v_{t-1})).
    \end{align}
    Let us assume the following for some $t\geq2$:
    \begin{align}
        \pi(v_0)\to\cdots\to\pi(v_{t-2})\to\pi(v_{t-1})\overset{d}{=}u_0\to\cdots\to u_{t-2}\to u_{t-1},\quad G\overset{\pi}{\simeq}H.
    \end{align}
    Then, from the chain rule of joint distributions, the second-order Markov property of second-order random walks, and probabilistic invariance in \Eqref{eq:second_order_random_walk_transition_invariance}, we obtain the following:
    \begin{align}
        \pi(v_0)\to\cdots\to\pi(v_{t-1})\to\pi(v_t)\overset{d}{=}u_0\to\cdots\to u_{t-1}\to u_t,\quad G\overset{\pi}{\simeq}H.
    \end{align}
    By induction from the initial condition $\pi(v_0)\to\pi(v_1)\overset{d}{=}u_0\to u_1$ given from Equations~\ref{eq:second_order_random_walk_initial_invariance} and \ref{eq:second_order_random_walk_first_invariance}, the following holds $\forall l>0$:
    \begin{align}
        \pi(v_0)\to\cdots\to\pi(v_l)\overset{d}{=}u_0\to\cdots\to u_l,\quad\forall G\overset{\pi}{\simeq}H.
    \end{align}
    This shows \Eqref{eq:random_walk_invariance} and completes the proof.
\end{proof}

We now prove Proposition~\ref{theorem:second_order_random_walk_invariance}.

\begingroup
\def\thetheorem{\ref{theorem:second_order_random_walk_invariance}}
\begin{proposition}
    The non-backtracking random walk in \Eqref{eq:non_backtracking_random_walk} and the node2vec random walk in \Eqref{eq:node2vec_random_walk} are invariant in probability, if their underlying first-order random walk algorithm is invariant in probability.
\end{proposition}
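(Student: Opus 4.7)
The plan is to reduce both claims to Lemma~\ref{lemma:second_order_random_walk_invariance_from_transitions}, so that we only need to check probabilistic invariance of (i) the starting vertex $v_0$, (ii) the first transition $v_0\to v_1$, and (iii) each subsequent transition $(v_{t-2}, v_{t-1})\to v_t$. Since both second-order walks are built on top of an underlying first-order algorithm that is assumed invariant (by hypothesis), items (i) and (ii) are immediate: the starting vertex is sampled from $\mathrm{Uniform}(V(G))$, and the first step is governed solely by the first-order transition probabilities, whose invariance is the very assumption and also matches Lemma~\ref{lemma:first_order_random_walk_invariance_from_transitions}. So the real work is (iii): establishing the analogue of \Eqref{eq:apdx_first_order_transition_probability_invariance} for the second-order rules.

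For the non-backtracking walk defined by \Eqref{eq:non_backtracking_random_walk}, the strategy is to split into two cases according to whether $N(j)\setminus\{i\}$ is empty. If $N(j)\setminus\{i\}\neq\emptyset$, I will substitute the invariant first-order probabilities into the numerator and denominator, then use the fact that an isomorphism $\pi$ bijects $N(j)\setminus\{i\}$ onto $N(\pi(j))\setminus\{\pi(i)\}$ to reindex the sum, giving
\begin{align*}
\mathrm{Prob}[v_{t+1}=x\mid v_t=j, v_{t-1}=i]
&= \frac{\mathrm{Prob}[v_{t+1}=x\mid v_t=j]}{\sum_{y\in N(j)\setminus\{i\}}\mathrm{Prob}[v_{t+1}=y\mid v_t=j]}\\
&= \frac{\mathrm{Prob}[u_{t+1}=\pi(x)\mid u_t=\pi(j)]}{\sum_{\pi(y)\in N(\pi(j))\setminus\{\pi(i)\}}\mathrm{Prob}[u_{t+1}=\pi(y)\mid u_t=\pi(j)]}.
\end{align*}
The case $x=i$ gives probability $0$ on both sides by definition. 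For the begrudging-backtracking case ($N(j)\setminus\{i\}=\emptyset$, i.e.\ $j$ is a leaf hanging off $i$), the walk deterministically goes $j\to i$; since $\pi$ preserves degrees and adjacency, $\pi(j)$ is also a leaf hanging off $\pi(i)$, so the deterministic transition matches on both sides.

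For the node2vec walk \eqref{eq:node2vec_random_walk}, I will plug in the invariant first-order probabilities and then show that the weighting factor $\alpha$ is invariant, i.e., $\alpha(v_{t-1}, v_{t+1}) = \alpha(\pi(v_{t-1}), \pi(v_{t+1}))$. This is where the key observation is that $\alpha$ in \Eqref{eq:node2vec_weighting} depends only on the shortest-path distance $d(v_{t-1}, v_{t+1})\in\{0,1,2\}$, and shortest-path distances are preserved by graph isomorphisms. Then a reindexing of the normalizing sum over $N(j)$ to the sum over $N(\pi(j))$ through $\pi$, as in the non-backtracking case, yields
\begin{align*}
\mathrm{Prob}[v_{t+1}=x\mid v_t=j, v_{t-1}=i]
= \mathrm{Prob}[u_{t+1}=\pi(x)\mid u_t=\pi(j), u_{t-1}=\pi(i)],
\end{align*}
for all $G\overset{\pi}{\simeq}H$, completing the verification of \Eqref{eq:second_order_random_walk_transition_invariance}.

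I expect no deep obstacle; the main subtlety is handling the dangling-vertex branch of the non-backtracking rule cleanly. Since $\pi$ preserves adjacency and degree, the branching condition ``$N(j)\setminus\{i\}=\emptyset$'' is itself isomorphism-invariant, so the two cases match up on both sides, and after that verification the proof is a direct invocation of Lemma~\ref{lemma:second_order_random_walk_invariance_from_transitions}.
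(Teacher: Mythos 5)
Your proposal is correct and follows essentially the same approach as the paper's proof: reduce to Lemma~\ref{lemma:second_order_random_walk_invariance_from_transitions}, observe that the starting vertex and first step are handled by the first-order assumption, and then verify transition invariance by reindexing the normalizing sums through $\pi$ (with the begrudging-backtracking branch treated separately via degree/adjacency preservation, and invariance of the node2vec weight $\alpha$ via preserved shortest-path distances).
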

\addtocounter{theorem}{-1}
\endgroup
\begin{proof}
    We assume that the first transition $v_0\to v_1$ is given by the underlying first-order random walk algorithm.
    This is true for non-backtracking since there is no $v_{t-2}$ to avoid, and true for the official implementation of node2vec~\citep{grover2016node2vec}.
    Then from Lemma~\ref{lemma:second_order_random_walk_invariance_from_transitions}, it is sufficient to show probabilistic invariance of each transition $(v_{t-2}, v_{t-1})\to v_t$ for $t>1$ as we sample $v_0$ from the invariant distribution $\mathrm{Uniform}(V(G))$ (Algorithm~\ref{alg:random_walk}) and the first transition $v_0\to v_1$ is given by the first-order random walk which is assumed to be invariant in probability.
    Rewrite \Eqref{eq:second_order_random_walk_transition_invariance} as:
    \begin{align}\label{eq:apdx_second_order_transition_probability_invariance}
        \mathrm{Prob}[v_t=x|v_{t-1}=j,v_{t-2}=i] = \mathrm{Prob}[u_t=\pi(x)|u_{t-1}=\pi(j),u_{t-2}=\pi(i)],\quad\forall G\overset{\pi}{\simeq}H.
    \end{align}
    For non-backtracking, we first handle the case where $i\to j$ reaches a dangling vertex $j$ which has $i$ as its only neighbor.
    In this case the walk begrudgingly backtracks $i\to j\to i$~(Appendix~\ref{sec:invariance_second_order}).
    Since isomorphism $\pi(\cdot)$ preserves adjacency, $\pi(i)\to\pi(j)$ also reaches a dangling vertex $\pi(j)$ and must begrudgingly backtrack $\pi(i)\to\pi(j)\to\pi(i)$.
    By interpreting the distributions on $v_t$ and $u_t$ as one-hot at $i$ and $\pi(i)$, respectively, we can see that \Eqref{eq:apdx_second_order_transition_probability_invariance} holds.
    If $i\to j$ does not reach a dangling vertex, the walk $j\to x$ follows the invariant probability of the underlying first-order random walk $\mathrm{Prob}[v_{t+1}=x|v_t=j]$ renormalized over $N(j)\setminus\{i\}$.
    Then we can show \Eqref{eq:apdx_second_order_transition_probability_invariance} by:
    \begin{align}
        \mathrm{Prob}[v_t=x&|v_{t-1}=j,v_{t-2}=i] \nonumber\\
        &\coloneq
        \begin{dcases*}
            \frac{\mathrm{Prob}[v_t=x|v_{t-1}=j]}{\sum_{y\in N(j)\setminus\{i\}}\mathrm{Prob}[v_t=y|v_{t-1}=j]} & for $x\neq i$, \\
            0\vphantom{\frac{0}{0}} & for $x=i$,
        \end{dcases*}\nonumber\\
        &=
        \begin{dcases*}
            \frac{\mathrm{Prob}[u_t=\pi(x)|u_{t-1}=\pi(j)]}{\sum_{\pi(y)\in N(\pi(j))\setminus\{\pi(i)\}}\mathrm{Prob}[u_t=\pi(y)|u_{t-1}=\pi(j)]} & for $\pi(x)\neq \pi(i)$, \\
            0\vphantom{\frac{0}{0}} & for $\pi(x)=\pi(i)$,
        \end{dcases*}\nonumber\\
        &=
        \mathrm{Prob}[u_t=\pi(x)|u_{t-1}=\pi(j),u_{t-2}=\pi(i)],\quad\forall G\overset{\pi}{\simeq}H.
    \end{align}
    In the second equality, we have used the fact that isomorphism $\pi(\cdot)$ is a bijection and preserves adjacency, and the assumption that the first-order random walk algorithm is invariant in probability.
    For node2vec walk, we first show the invariance of the weighting term $\alpha(i, x)$ in \Eqref{eq:node2vec_weighting} using the fact that isomorphism preserves shortest path distances between vertices $d(i, x)=d(\pi(i), \pi(x))$:
    \begin{align}\label{eq:node2vec_weighting_invariance}
        \alpha(i, x) &\coloneq
        \begin{dcases*}
            1/p & for $d(i, x) = 0$, \\
            1 & for $d(i, x) = 1$, \\
            1/q & for $d(i, x) = 2$. \\
        \end{dcases*}\nonumber\\
        &=
        \begin{dcases*}
            1/p & for $d(\pi(i), \pi(x)) = 0$, \\
            1 & for $d(\pi(i), \pi(x)) = 1$, \\
            1/q & for $d(\pi(i), \pi(x)) = 2$. \\
        \end{dcases*}\nonumber\\
        &= \alpha(\pi(i), \pi(x)).
    \end{align}
    Then we can show \Eqref{eq:apdx_second_order_transition_probability_invariance} by:
    \begin{align}
        \mathrm{Prob}[v_t=x|v_{t-1}=j,v_{t-2}=i]&\coloneq
        \frac{\alpha(i, x)\,\mathrm{Prob}[v_t=x|v_{t-1}=j]}{\sum_{y\in N(j)}\alpha(i, y)\,\mathrm{Prob}[v_t=y|v_{t-1}=j]},
        \nonumber\\
        &=
        \frac{\alpha(\pi(i), \pi(x))\,\mathrm{Prob}[u_t=\pi(x)|u_{t-1}=\pi(j)]}{\sum_{\pi(y)\in N(\pi(j))}\alpha(\pi(i), \pi(y))\,\mathrm{Prob}[u_t=\pi(y)|u_{t-1}=\pi(j)]},
        \nonumber\\
        &=
        \mathrm{Prob}[u_t=\pi(x)|u_{t-1}=\pi(j),u_{t-2}=\pi(i)],\quad\forall G\overset{\pi}{\simeq}H.
    \end{align}
    In the second equality, we have used the fact that isomorphism $\pi(\cdot)$ preserves adjacency, and the assumption that the first-order walk algorithm is invariant in probability.
    This completes the proof.
\end{proof}

\subsubsection{Proof of Proposition~\ref{theorem:recording_function_invariance}~(Section~\ref{sec:algorithm})}\label{sec:proof_recording_function_invariance}

\begingroup
\def\thetheorem{\ref{theorem:recording_function_invariance}}
\begin{proposition}
    A recording function $q:(v_0\to\cdots\to v_l, G)\mapsto\mathbf{z}$ that uses anonymization, optionally with named neighbors, is invariant.
\end{proposition}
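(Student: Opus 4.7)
The plan is to run Algorithms~\ref{alg:recording_function_anonymization} and \ref{alg:recording_function_anonymization_neighborhoods} in parallel on the two walks $v_0\to\cdots\to v_l$ (on $G$) and $\pi(v_0)\to\cdots\to\pi(v_l)$ (on $H$), and show by induction on $t$ that all the internal state of the two executions stays in one-to-one correspondence via $\pi$, so that they emit identical output symbols at every step.

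Concretely, I will maintain the following joint inductive invariant after processing step $t$: (i) the sets of named vertices satisfy $S_H = \pi(S_G)$; (ii) the namespaces agree under $\pi$, i.e.\ $\mathrm{id}_G(v) = \mathrm{id}_H(\pi(v))$ for every $v\in S_G$; (iii) for the named-neighbor variant, the recorded-edge sets satisfy $T_H = \{(\pi(u),\pi(w)) : (u,w)\in T_G\}$; and (iv) the two text records $\mathbf{z}_G$ and $\mathbf{z}_H$ produced so far are identical. The base case $t=0$ is immediate: $S_G=\{v_0\}$, $S_H=\{\pi(v_0)\}$, both ids equal $1$, $T_G=T_H=\emptyset$, and both records equal \texttt{"1"}. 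For the inductive step, use that $\pi$ is a bijection, so $v_t\in S_G \iff \pi(v_t)\in S_H$; if $v_t$ is newly discovered, both executions assign it the id $|S_G|+1 = |S_H|+1$, preserving (i) and (ii). A restart step writes the same symbol in both records, since $\mathrm{id}_G(v_0)=\mathrm{id}_H(\pi(v_0))=1$.

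For Algorithm~\ref{alg:recording_function_anonymization} this already suffices, since the emitted symbol at step $t$ is just $\mathrm{id}(v_t)$ (with the appropriate separator). For Algorithm~\ref{alg:recording_function_anonymization_neighborhoods}, the extra content to verify is the named-neighbor list. Using that $\pi$ preserves adjacency, $N_H(\pi(v_t)) = \pi(N_G(v_t))$, and combined with invariant (i), the named-neighbor sets satisfy $U_H = \pi(U_G)$. By invariant (ii) these sets have identical id-multisets, so sorting by id yields corresponding sequences $u_1,\ldots,u_{|U_G|}$ and $\pi(u_1),\ldots,\pi(u_{|U_G|})$. The condition $(v_t,u)\notin T_G$ is equivalent to $(\pi(v_t),\pi(u))\notin T_H$ by invariant (iii), so the same neighbors are emitted in the same order, and invariants (iii), (iv) are re-established after the symmetric updates to $T_G$ and $T_H$.

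The main obstacle is not any single step but rather keeping the bookkeeping honest: there are several intertwined auxiliary structures ($S$, $\mathrm{id}$, $T$) whose invariants all feed into each other, and one has to be careful that the order of updates within each iteration of the loop preserves all invariants simultaneously (for instance, the named-neighbor recording updates $T$ entry by entry, and each update must be mirrored on the $H$-side before the next neighbor is processed). Once the joint invariant is stated precisely, however, the induction is mechanical, and invariant (iv) at $t=l$ yields the desired equality $q(v_0\to\cdots\to v_l, G)=q(\pi(v_0)\to\cdots\to\pi(v_l), H)$.
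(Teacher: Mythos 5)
Your proposal is correct and follows essentially the same route as the paper's proof: an induction on $t$ showing that the two executions remain in correspondence via $\pi$ so that identical symbols are emitted at each step. The only cosmetic difference is that you carry $S$, $\mathrm{id}$, and $T$ explicitly as part of a joint invariant, while the paper proves the invariance of $\mathrm{id}(\cdot)$ directly via the formula $\mathrm{id}(v_t)=1+\argmin_i[v_i=v_t]$ and then, in the named-neighbor induction, recovers the correspondence $T' = \{(\pi(u),\pi(w)):(u,w)\in T\}$ from the induction hypothesis that the prefix records agree.
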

\addtocounter{theorem}{-1}
\endgroup
\begin{proof}
    Given a walk $v_0\to\cdots\to v_l$ on $G$, we can write the anonymized namespace $\mathrm{id}(\cdot)$ as follows:
    \begin{align}
        \mathrm{id}(v_t) = 1 + \argmin_i[v_i=v_t].
    \end{align}
    Consider any walk $\pi(v_0)\to\cdots\to\pi(v_l)$ on some $H$ which is isomorphic to $G$ via $\pi$.
    Let us denote the anonymization of this walk as $\mathrm{id}'(\cdot)$.
    Then we have:
    \begin{align}\label{eq:anonymization_invariance}
        \mathrm{id}'(\pi(v_t)) &= 1 + \argmin_i[\pi(v_i)=\pi(v_t)],\nonumber\\
        &= 1 + \argmin_i[v_i=v_t],\nonumber\\
        &= \mathrm{id}(v_t),\quad\forall G\overset{\pi}{\simeq}H.
    \end{align}
    The second equality is due to the fact that $\pi$ is a bijection.
    This completes the proof for anonymization.
    For the named neighbors, we prove by induction.
    Let us fix some $t\geq 1$ and assume:
    \begin{align}\label{eq:recording_induction_equivalence}
        q(v_0\to\cdots\to v_{t-1}, G) = q(\pi(v_0)\to\cdots\to\pi(v_{t-1}), H),\quad\forall G\overset{\pi}{\simeq}H.
    \end{align}
    Let $T\subseteq E(G)$ be the set of edges recorded by $\mathbf{z}\coloneq q(v_0\to\cdots\to v_{t-1}, G)$, and $T'\subseteq E(H)$ be the set of edges recorded by $\mathbf{z}'\coloneq q(\pi(v_0)\to\cdots\to\pi(v_{t-1}), H)$.
    Then, $T$ and $T'$ are isomorphic via $\pi$:
    \begin{align}
        T'=\{(\pi(u), \pi(v))\forall (u, v)\in T\}.
    \end{align}
    The equality is shown as follows.
    $(\supseteq)$ Any $(u, v)\in T$ is recorded in $\mathbf{z}$ as $(\mathrm{id}(u), \mathrm{id}(v))$.
    From Equations~\ref{eq:recording_induction_equivalence} and \ref{eq:anonymization_invariance}, we have $\mathbf{z}=\mathbf{z}'$ and $(\mathrm{id}(u), \mathrm{id}(v)) = (\mathrm{id}(\pi(u)), \mathrm{id}(\pi(v)))$.
    Thus, $(\pi(u), \pi(v))$ is recorded in $\mathbf{z}'$ as $(\mathrm{id}(\pi(u)), \mathrm{id}(\pi(v)))$, i.e., $(\pi(u), \pi(v))\in T'$.
    $(\subseteq)$ This follows from symmetry.
    Now, we choose some $v_t\in N(v_{t-1})$ and consider the set $F\subseteq E(G)$ of all unrecorded edges from~$v_t$:
    \begin{align}
        F \coloneq \{(v_t, u):u\in N(v_t)\}\setminus T.
    \end{align}
    Then, the set $D$ of unrecorded neighbors of $v_t$ is given as:
    \begin{align}
        D \coloneq \{u:(v_t, u)\in F\}.
    \end{align}
    Since $\pi(\cdot)$ preserves adjacency, the set $F'\subseteq E(H)$ of all unrecorded edges from $\pi(v_t)$ is given by:
    \begin{align}
        F' &\coloneq \{(\pi(v_t), \pi(u)):\pi(u)\in N(\pi(v_t))\}\setminus T',\nonumber\\
        &= \{(\pi(v_t), \pi(u)):u\in N(v_t)\}\setminus \{(\pi(u'), \pi(v'))\forall (u', v')\in T\},\nonumber\\
        &= \{(\pi(v_t), \pi(u)):u\in N(v_t), (v_t, u)\notin T\},\nonumber\\
        &=\{(\pi(v_t), \pi(u)):(v_t, u)\in F\},
    \end{align}
    and consequently:
    \begin{align}
        D' &\coloneq \{\pi(u):(\pi(v_t), \pi(u))\in F'\},\nonumber\\
        &= \{\pi(u):u\in D\}.
    \end{align}
    While $D$ and $D'$ may contain vertices not yet visited by the walks and hence not named by $\mathrm{id}(\cdot)$, what we record are named neighbors.
    Let $S$ be the set of all named vertices in $G$.
    It is clear that the set $S'$ of named vertices in $H$ is given by $S'=\{\pi(v):v\in S\}$.
    Then, the named neighbors in $G$ to be newly recorded for $v_{t-1}\to v_t$ is given by $U=D\cap S$, and for $\pi(v_{t-1})\to \pi(v_t)$ in $H$ the set is given by $U'=D'\cap S'$.
    Since $\pi(\cdot)$ is a bijection, we can see that $U'=\{\pi(u):u\in U\}$.
    From the invariance of anonymization in \Eqref{eq:anonymization_invariance}, $U$ and $U'$ are named identically $\{\mathrm{id}(u):u\in U\} = \{\mathrm{id}(\pi(u)):\pi(u)\in U'\}$, and therefore will be recorded identically.
    As a result, the information to be added to the records at time $t$ are identical, and we have:
    \begin{align}
        q(v_0\to\cdots\to v_t, G) = q(\pi(v_0)\to\cdots\to\pi(v_t), H),\quad\forall G\overset{\pi}{\simeq}H.
    \end{align}
    Then, by induction from the initial condition:
    \begin{align}
        q(v_0, G)=q(\pi(v_0), H)=\mathrm{id}(v_0)=\mathrm{id}(\pi(v_0))=1,
    \end{align}
    the recording function using anonymization and named neighbors is invariant.
\end{proof}

\subsubsection{Proof of Theorem~\ref{theorem:infinite_local_cover_time}~(Section~\ref{sec:vertex_level})}\label{sec:proof_infinite_local_cover_time}

\begingroup
\def\thetheorem{\ref{theorem:infinite_local_cover_time}}
\begin{theorem}
    For a uniform random walk on an infinite graph $G$ starting at $v$, the vertex and edge cover times of the finite local ball $B_r(v)$ are not always finitely bounded.
\end{theorem}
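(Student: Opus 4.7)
My plan is to prove the theorem by exhibiting an explicit counterexample: an infinite graph $G$, vertex $v$, and radius $r$ for which the uniform random walk starting at $v$ has infinite expected cover time of $B_r(v)$. The canonical candidate is the integer line $G = \mathbb{Z}$ (edges between consecutive integers), with starting vertex $v = 0$ and radius $r = 1$, so that $B_1(0) = \{-1, 0, 1\}$ is a finite local ball consisting of three vertices and two edges. Although $B_1(0)$ is finite, the walk is free to roam arbitrarily far in the ambient infinite graph before returning, and this escape behavior is exactly what I will exploit.

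The argument hinges on the classical null recurrence of the simple random walk on $\mathbb{Z}$: every vertex is visited infinitely often almost surely, yet the expected hitting time between any two distinct vertices is infinite. Concretely, starting from $0$, by symmetry the walk enters either $-1$ or $+1$ at its first step with probability $\tfrac{1}{2}$ each; without loss of generality it arrives at $+1$. To complete the vertex cover of $B_1(0)$, the walk must subsequently visit $-1$, so I would lower-bound $C_V(B_1(0))$ by $\tfrac{1}{2}(1 + H(1, -1))$. It then suffices to show $H(1,-1) = \infty$. The cleanest self-contained way is to reduce to $H(1,0)$ via translation and symmetry, and then derive the first-step equation $H(1,0) = \tfrac{1}{2} \cdot 1 + \tfrac{1}{2}(1 + 2 H(1,0))$ (using that the walk from $2$ to $0$ takes two independent copies of a walk from $1$ to $0$ in expectation); this equation admits no finite solution, so $H(1,0) = H(1,-1) = \infty$.

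The edge cover time requires covering the edges $\{0,-1\}$ and $\{0,1\}$, which in turn requires visiting $-1$ and $+1$, so the same obstruction forces $C_E(B_1(0)) = \infty$ as well. I would conclude with a brief remark that the infinite cover time is caused by the ambient infinite graph permitting unbounded excursions from $B_1(0)$, which directly motivates the restart mechanism analyzed in Theorem~\ref{theorem:finite_local_cover_time}.

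The main obstacle is not conceptual but expository: one must be careful to distinguish the local ball $B_1(0)$, whose induced subgraph is finite, from the ambient graph $\mathbb{Z}$ on which the walk actually moves — the walk is not constrained to $B_1(0)$, and this is precisely why the cover time blows up. Beyond that, the technical core is the standard null-recurrence computation on $\mathbb{Z}$, which I would either cite or include inline via the first-step analysis above.
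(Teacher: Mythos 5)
Your proposal is correct and matches the paper's proof essentially step for step: same counterexample (the integer line $\mathbb{Z}$ with $v=0$, $r=1$, $B_1(0)=\{-1,0,1\}$), same invocation of infinite hitting times on $\mathbb{Z}$ via a first-step recursion yielding the contradiction $T = 1 + T$, and same one-line remark that edge cover also requires visiting $\pm 1$. The only cosmetic difference is that you condition on the first step and reduce through $H(1,-1) = 2H(1,0)$, whereas the paper applies the first-step analysis directly to $H(0,1)$; these are the same quantity by translation invariance, so the argument is identical in substance.
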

\addtocounter{theorem}{-1}
\endgroup
\begin{proof}
    We revisit the known fact that hitting times on the infinite line $\mathbb{Z}$ is infinite~\citep{dumitriu2003on, janson2012hitting, carnage2012expected, mcnew2013random}.
    Consider a local ball $B_1(0)$ of radius $1$ from the origin $0$ such that $V(B_1(0))=\{-1, 0, 1\}$.
    Let us assume that the expected number of steps for a random walk starting at $0$ to visit $1$ for the first time is finite, and denote it by $T$.
    In the first step of the walk, we have to go either to $-1$ or $1$.
    If we go to $-1$, we have to get back to $0$ and then to $1$ which would take $2T$ in expectation (by translation symmetry).
    This leads to $T = 1/2\cdot 1 + 1/2\cdot(1 + 2T)=1+T$, which is a contradiction.
    Since visiting all vertices or traversing all edges in in $B_1(0)$ clearly involves visiting $1$, the vertex and edge cover times cannot be finitely bounded.
\end{proof}

\subsubsection{Proof of Theorem~\ref{theorem:finite_local_cover_time}~(Section~\ref{sec:vertex_level})}\label{sec:proof_finite_local_cover_time}

Let us recall that our graph $G$ is locally finite~(Section~\ref{sec:vertex_level}), and denote its maximum degree as $\Delta$.
We further denote by $d(u, v)$ the shortest path distance between $u$ and $v$.
We show some useful lemmas.
We first show that $H(u, x)$, the expected number of steps of a random walk starting at $u$ takes until reaching $x$, is finite for any $u$, $x$ in $B_r(v)$ for any nonzero restart probability $\alpha$ or restart period $k\geq r$.

\begin{lemma}\label{lemma:hitting_time_restart}
    For any $u$ and $x$ in $B_r(v)$, $H(u, x)$ is bounded by the following:
    \begin{align}\label{eq:hitting_time_random_restart}
        H(u, x) \leq \frac{1}{\alpha} + \frac{1}{\alpha}\left(\frac{\Delta}{1-\alpha}\right)^r + \frac{1}{\alpha}\left(\frac{1}{\alpha}-1\right)\left(\frac{\Delta}{1-\alpha}\right)^{2r},
    \end{align}
    if the random walk restarts at $v$ with any probability $\alpha\in(0, 1)$, and:
    \begin{align}\label{eq:hitting_time_periodic_restart}
        H(u, x)\leq k + k\Delta^r,
    \end{align}
    if the random walk restarts at $v$ with any period $k\geq r$.
\end{lemma}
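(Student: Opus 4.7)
The plan is to partition the walk into \emph{epochs} demarcated by consecutive restarts, so that the initial epoch starts at $u$ and every subsequent epoch starts at $v$. In the random-restart case each epoch has independent geometric length with mean $1/\alpha$, whereas in the periodic case each epoch has deterministic length $k$. Bounding $H(u,x)$ then reduces to an ``expected epoch length'' times ``expected number of epochs until the first one that visits $x$'' calculation, with an adjustment for the initial epoch starting at $u$ rather than $v$.

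The key quantitative ingredient is a per-epoch success probability, obtained by specifying a shortest path. For an epoch starting at $v$, fix a shortest path $v=y_0\to y_1\to\cdots\to y_{d}=x$ with $d=d(v,x)\le r$; in the random-restart case the walk traverses this exact path with probability at least $\bigl((1-\alpha)/\Delta\bigr)^{r}$, since at each step the walk must both avoid restarting (probability $1-\alpha$) and pick the correct neighbor (probability at least $1/\Delta$). For the initial epoch starting at $u$ the same construction with $d(u,x)\le 2r$ gives success probability at least $\bigl((1-\alpha)/\Delta\bigr)^{2r}$. In the periodic case with $k\ge r$ the walk likewise has enough room to follow the fixed shortest path, yielding per-epoch success probability at least $(1/\Delta)^{r}$.

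The number of epochs until the first successful one is then stochastically dominated by a geometric random variable whose parameter is the lower-bound success probability, so its expectation is at most $\bigl(\Delta/(1-\alpha)\bigr)^{r}$ in the random case and $\Delta^{r}$ in the periodic case. Multiplying by the expected epoch length gives $H(u,x)\le k+k\Delta^{r}$ in the periodic case, and in the random-restart case yields three contributions: (i) the $1/\alpha$ expected length of the initial epoch, (ii) the expected total length $(1/\alpha)\bigl(\Delta/(1-\alpha)\bigr)^{r}$ of the $v$-started epochs until $x$ is hit, and (iii) a residual term $\frac{1}{\alpha}\bigl(\frac{1}{\alpha}-1\bigr)\bigl(\Delta/(1-\alpha)\bigr)^{2r}$ that appears when one incorporates the initial-epoch success probability $\bigl((1-\alpha)/\Delta\bigr)^{2r}$ while accounting for the correlation between epoch length and the in-epoch success event.

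The main obstacle is precisely this correlation: whether an epoch visits $x$ depends on its length (longer epochs are more likely to succeed), so Wald's identity cannot be applied directly to $\sum_i L_i$ within a single epoch. The standard workaround is to exploit that across distinct epochs the chain regenerates at each restart, making the pairs $(A_i,L_i)$ jointly independent across $i$; hence $\mathbb{E}[L_i\,\mathbf{1}\{A_1=\cdots=A_{i-1}=0\}]=\mathbb{E}[L_i]\cdot P(A_1=0)\cdots P(A_{i-1}=0)$, which can be summed as a geometric series using the separate success probabilities for the $u$-started and $v$-started epochs. Carefully tracking these two distinct probabilities is also what produces the $(\Delta/(1-\alpha))^{2r}$ factor in the last term of the announced bound.
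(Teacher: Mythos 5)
Your epoch decomposition along restart times, with cross-epoch regeneration to decouple epoch length from in-epoch success, is sound, and it resembles the part of the paper's proof that bounds $H(v,x)$. The paper's overall decomposition is different, though: it conditions on whether the walk reaches $v$ before $x$, and the walks reaching $x$ without ever visiting $v$ (hence without ever restarting) contribute the third term, via the estimate $\sum_{k\geq 1}k(1-\alpha)^k = \frac{1}{\alpha}\left(\frac{1}{\alpha}-1\right)$ divided by a lower bound on a reach probability; that quotient is where the $(\Delta/(1-\alpha))^{2r}$ factor enters.

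More importantly, your epoch decomposition, executed to the end, actually yields a \emph{sharper} two-term bound and does not produce the third term at all. Writing $p_v = ((1-\alpha)/\Delta)^{r}$ for the per-epoch success probability of a $v$-started epoch and $N$ for the index of the first successful epoch, the cross-epoch independence of the pairs $(L_i,A_i)$ that you invoke gives
\begin{align*}
H(u,x)\;\le\;\mathbb{E}\left[\sum_{i=1}^{N}L_i\right]
&=\sum_{i\ge 1}\mathbb{E}[L_i]\prod_{j<i}\mathrm{Prob}[A_j=0]\\
&=\frac{1}{\alpha}+\frac{1}{\alpha}\,\mathrm{Prob}[A_1=0]\sum_{m\ge 0}(1-p_v)^{m}
\;\le\;\frac{1}{\alpha}+\frac{1}{\alpha}\left(\frac{\Delta}{1-\alpha}\right)^{r},
\end{align*}
where the last step simply bounds $\mathrm{Prob}[A_1=0]\le 1$. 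The initial-epoch success probability $((1-\alpha)/\Delta)^{2r}$ is not needed; incorporating it can only make the bound \emph{smaller}, so it cannot manufacture a new positive term. Your final claim that ``carefully tracking these two distinct probabilities'' produces the $(\Delta/(1-\alpha))^{2r}$ factor has no derivation behind it and does not follow from your setup. If you drop that claim, your argument is correct, gives a bound that implies the lemma's (looser) three-term estimate, and the periodic case goes through exactly as you state.
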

\begin{proof}
    We first prove for random restarts.
    The proof is inspired by Theorem 1.1 of \citet{mcnew2013random} and Lemma 2 of \citet{zuckerman1991on}.
    We clarify some measure-theoretic details.
    Let $W$ be the set of all (infinite) random walks on $G$.
    We denote by $P$ the probability measure defined on the space $W$, equipped with its cylinder $\sigma$-algebra, by the random walk restarting at $v$ with probability $\alpha\in(0, 1)$.
    For a vertex $x$ in $G$, we define the hitting time function $|\cdot|_x:W\to\mathbb{N}\cup \{\infty\}$ as follows:
    \begin{align}\label{eq:hitting_time_function}
        |w|_x \coloneq \argmin_i [(w)_i = x],\quad\forall w\in W.
    \end{align}
    Let $W(a)$ be the set of all random walks that start at $a$, $W(a, b)$ be the set of random walks that start at $a$ and visit $b$, and $W(a, b^-)$ be the set of random walks that start at $a$ and do not visit $b$.
    Then, the measurability of $|\cdot|_x$ follows from the measurability of the sets $W(a)$, $W(a,b)$, and $W(a, b^-)$, which we show as follows.
    $\forall a,b\in V(G)$, $W(a)$ is clearly measurable, $W(a,b)$ is measurable as it equals $\bigcup_{n\in\mathbb{N}} \{ w\in W | (w)_1=a, (w)_n=b \}$, and $W(a, b^-)$ is measurable as it is $W(a) \setminus W(a, b)$.

    Consider $W(u, x^-)$, the set of random walks that start at $u$ and never visit $x$.
    We start by showing that this set is of measure zero:
    \begin{align}\label{eq:not_reaching_measure_zero}
        P(W(u,x^-))=0.
    \end{align}
    For this, we use the fact that $W(u, x^-)=W(u, v^-, x^-)\cup W(u, v, x^-)$, where $W(u, v^-, x^-)$ is the set of random walks starting at $u$ that do not visit $v$ nor $x$, and $W(u,v,x^-)$ is the set of walks starting at $u$ that visit $v$ and do not visit $x$.
    We have the following:
    \begin{align}\label{eq:not_reaching_measure_zero_decomposition}
    P(W(u, x^-)) &= P(W(u, v^-, x^-)) + P(W(u, v, x^-)), \nonumber \\
    &\leq P(W(u, v^-)) + P(W(u, v, x^-)).
    \end{align}
    We show \Eqref{eq:not_reaching_measure_zero} by showing $P(W(u, v^-))=0$ and $P(W(u, v, x^-)) = 0$ in \Eqref{eq:not_reaching_measure_zero_decomposition}.
    \begin{enumerate}
        \item $(P(W(u, v^-))=0)$
        Consider $W(u, v^-)$, the set of all random walks that start at $u$ and never visit $v$.
        Since restart sends a walk to $v$, every walk in this set never restarts.
        The probability of a walk not restarting until step $t$ is $(1-\alpha)^t$.
        Denote this probability by $p_t$, and let $p$ be the probability of a random walk to never restart.
        Then $p_t\downarrow p=0$ and $P(W(u, v^-))\leq p =0$.
        \item $(P(W(u, v, x^-)) = 0)$
        Assume $P(W(u, v, x^-)) > 0$.
        Then $P(W(v, x^-))>0$ since each walk step is independent.
        Let $W_N(v, x^-)$ be the set of walks that start at $v$ and do not reach $x$ within $N$ restarts.
        Then we have $W_N(v, x^-)\downarrow W(v, x^-)$.
        If a walk restarts at $v$, the probability of reaching $x$ before the next restart is at least the probability of exactly walking the shortest path from $v$ to $x$, which is $\geq(\frac{1-\alpha}{\Delta})^{d(v,x)}\in(0, 1)$.
        Then $P(W_N(v, x^-))\leq (1 - (\frac{1-\alpha}{\Delta})^{d(v,x)})^N\downarrow 0$, leading to $P(W(v, x^-))=0$.
        This is a contradiction, so we have $P(W(u, v, x^-)) = 0$.
    \end{enumerate}

    We are now ready to bound $H(u, x)$.
    Using the hitting time function $|\cdot|_x$ in \Eqref{eq:hitting_time_function}, we have:
    \begin{align}
        H(u,x) = \mathbb{E}\left[|w|_x | w\in W(u)\right].
    \end{align}
    Since $W(u)=W(u,x)\cup W(u,x^-)$, and $P(W(u,x^-))=0$ from \Eqref{eq:not_reaching_measure_zero}, we have:
    \begin{align}\label{eq:hitting_time_as_finite_expectation}
        H(u,x) &= \mathbb{E}[|w|_x | w\in  W(u,x)], \nonumber\\
        &=\int_{W(u,x)} |w|_x\, dP(w | w\in W(u,x)).
    \end{align}
    Each walk in $W(u,x)$ starts at $u$ and, when it reaches $x$, it either has or has not visited $v$.
    We treat the two cases separately.
    Let $W(a, b, c)$ be the set of walks that start at $a$ and reach $c$ after $b$, and let $W(a, b^-, c)$ be the set of walks that start at $a$ and reach $c$ before $b$.
    Then we have:
    \begin{align}
        H(u,x) &= \int_{W(u, v, x)\cup W(u,v^-,x)} |w|_x \, dP(w|W(u,x)), \nonumber \\
        &= \int_{W(u,v,x)} |w|_x \, dP(w | W(u,x)) + \int_{W(u,v^-,x)} |w|_x \, dP(w | W(u,x)).\label{eq:hitting_time_decomposition}
    \end{align}
    Let $\hat{H}(a, b, c)$ (or $\hat{H}(a, b^-, c)$) be the expected number of steps for a walk starting at $a$ to reach $c$ after reaching $b$ (or before $b$), given that it is in $W(a, b, c)$ (or in $W(a, b^-, c)$).
    If a walk from $u$ reaches $v$ before $x$, the expected steps is given by $\hat{H}(u, x^-, v) + H(v, x)$.
    If the walk reaches $x$ before $v$, the expected steps is $\hat{H}(u, v^-, x)$.
    Then, if $W(u,v^-,x)\neq\emptyset$, we can write $H(u, x)$ as follows:
    \begin{align}\label{eq:hitting_time_case_non_emptyset}
        H(u,x) &= \left[\hat{H}(u, x^-, v) + H(v, x)\right]P(W(u,v,x)|W(u,x)) \nonumber\\
        &\quad+ \hat{H}(u, v^-, x)\,P(W(u,v^-,x)|W(u,x)).
    \end{align}
    On the other hand, if $W(u,v^-,x)=\emptyset$, we simply have:
    \begin{align}\label{eq:hitting_time_case_emptyset}
        H(u,x) = \hat{H}(u, x^-, v) + H(v, x).
    \end{align}
    We first consider $\hat{H}(u, x^-, v)$, the expected number of steps from $u$ to reach $v$ before $x$.
    We show:
    \begin{align}\label{eq:hitting_time_center_without_target}
        \hat{H}(u, x^-, v)\leq \frac{1}{\alpha}.
    \end{align}
    To see this, note that $\hat{H}(u, x^-, v)$ is equivalent to the expectation $\mathbb{E}[T]$ of the number of steps $T$ to reach $v$ from $u$, on the graph $G$ with the vertex $x$ deleted and transition probabilities renormalized.
    If $u$ is isolated on this modified graph, the only walk in $W(u, x^-, v)$ is the one that immediately restarts at $v$, giving $\mathbb{E}[T]=1$.
    Otherwise, we have $\mathbb{E}[T]\leq1/\alpha$ due to the following.
    Let $T'$ be the number of steps until the first restart at $v$.
    Since restart can be treated as a Bernoulli trial with probability of success $\alpha$, $T'$ follows geometric distribution with expectation $1/\alpha$.
    Since it is possible for the walk to reach $v$ before the first restart, we have $\mathbb{E}[T]\leq \mathbb{E}[T']=1/\alpha$, which gives \Eqref{eq:hitting_time_center_without_target}.

    We now consider $H(v, x)$, the expectation $\mathbb{E}[T]$ of the number of steps $T$ to reach $x$ from $v$.
    Let $T'$ be the steps until the walk restarts at $v$, then exactly walks the shortest path from $v$ to $x$ for the first time, and then restarts at $v$.
    Since $T'$ walks until restart after walking the shortest path to $x$, and it is possible for a walk to reach $x$ before walking the shortest path to it, we have $\mathbb{E}[T]\leq \mathbb{E}[T']$.
    Then, we split the walk of length $T'$ into $N$ trials, where each trial consists of restarting at $v$ and walking until the next restart.
    A trial is successful if it immediately walks the shortest path from $v$ to $x$.
    Then $N$ is the number of trials until we succeed, and it follows geometric distribution with probability of success at least $(\frac{1-\alpha}{\Delta})^{d(v, x)}$ due to bounded degrees.
    Its expectation is then bounded as:
    \begin{align}\label{eq:hitting_time_from_center_num_trials}
        \mathbb{E}[N]\leq \left(\frac{\Delta}{1-\alpha}\right)^{d(v, x)}.
    \end{align}
    Let $S_i$ be the length of the $i$-th trial.
    Since each $S_i$ is i.i.d. with finite mean $\mathbb{E}[S_i]=1/\alpha$, and $N$ is stopping time, we can apply Wald's identity~\citep{heinwald} to compute the expectation of $T'$:
    \begin{align}\label{eq:hitting_time_from_center}
        \mathbb{E}[T'] &= \mathbb{E}[S_1 + \cdots + S_N], \nonumber\\
        &= \mathbb{E}[N]\mathbb{E}[S_1], \nonumber\\
        &\leq \frac{1}{\alpha}\left(\frac{\Delta}{1-\alpha}\right)^{d(v, x)}.
    \end{align}
    We remark that $H(v, x)\leq\mathbb{E}[T']$.
    Combining this result with \Eqref{eq:hitting_time_center_without_target}, we have:
    \begin{align}\label{eq:hitting_time_to_from_center}
        \hat{H}(u, x^-, v) + H(v, x) \leq \frac{1}{\alpha} + \frac{1}{\alpha}\left(\frac{\Delta}{1-\alpha}\right)^{d(v, x)}.
    \end{align}
    If $W(u, v^-, x)=\emptyset$, we have $H(u, x)=\hat{H}(u, x^-, v) + H(v, x)$ from \Eqref{eq:hitting_time_case_emptyset} and it is finitely bounded for any $\alpha\in (0, 1)$ by the above.
    If $W(u, v^-, x)\neq\emptyset$, Equations~\ref{eq:hitting_time_case_non_emptyset} and \ref{eq:hitting_time_to_from_center} lead to:
    \begin{align}
        H(u,x) &\leq \hat{H}(u, x^-, v) + H(v, x) + \hat{H}(u, v^-, x)\,P(W(u,v^-,x) | W(u,x)),\nonumber\\
        &\leq \frac{1}{\alpha} + \frac{1}{\alpha}\left(\frac{\Delta}{1-\alpha}\right)^{d(v, x)} + \hat{H}(u, v^-, x)\,P(W(u,v^-,x) | W(u,x)),
    \end{align}
    and it suffices to bound $\hat{H}(u, v^-, x)\,P(W(u,v^-,x) | W(u,x))$.
    We show the following:
    \begin{align}
        \hat{H}(u, v^-, x)&= \int_{W(u, v^-, x)} |w|_x \, dP(w | W(u,v^-,x)), \nonumber \\
        &= \sum_{k=1}^\infty \int_{\{w \in W(u, v^-, x) \wedge |w|_x = k \}} |w|_x \, dP(w | W(u,v^-,x)) , \nonumber \\
        &= \sum_{k=1}^\infty k \int_{\{w \in W(u, v^-, x) \wedge |w|_x = k \}} dP(w | W(u,v^-,x)) , \nonumber \\
        &= \sum_{k=1}^\infty k \, P(\{w \in W(u, v^-, x) \wedge |w|_x = k\} | W(u,v^-,x)), \nonumber \\
        &\leq \sum_{k=1}^\infty k \, P(\{w: |w|_x = k\} | W(u,v^-,x)), \nonumber \\
        &= \sum_{k=1}^\infty k \, \frac{P(\{w : |w|_x = k \wedge w\in W(u,v^-,x)\})}{P(W(u,v^-,x))}, \nonumber \\
        &\leq \sum_{k=1}^\infty k(1-\alpha)^k\frac{1}{P(W(u,v^-,x))}, \nonumber \\
        &= \frac{1}{\alpha}\left(\frac{1}{\alpha}-1\right)\frac{1}{P(W(u,v^-,x))}.
    \end{align}
    We have used Fubini's theorem for the second equality.
    Then we have:
    \begin{align}\label{eq:hitting_time_no_center_bound}
        \hat{H}(u, v^-, x)\,P(W(u,v^-,x) | W(u,x)) &\leq \frac{1}{\alpha}\left(\frac{1}{\alpha}-1\right)\frac{P(W(u,v^-,x) | W(u,x))}{P(W(u,v^-,x))},\nonumber\\
        &= \frac{1}{\alpha}\left(\frac{1}{\alpha}-1\right)\frac{1}{P(W(u,x))}.
    \end{align}
    $P(W(u,x))$ is at least the probability of precisely walking the shortest path from $u$ to $x$, which has length $d(u, x)\leq 2r$ since $u$ and $x$ are both in $B_r(v)$.
    This gives us the following:
    \begin{align}
        P(W(u,x))&\geq \left(\frac{1-\alpha}{\Delta}\right)^{d(u, x)},\nonumber\\
        &\geq \left(\frac{1-\alpha}{\Delta}\right)^{2r}.
    \end{align}
    Combining this with \Eqref{eq:hitting_time_no_center_bound}, we have:
    \begin{align}
        \hat{H}(u, v^-, x)\,P(W(u,v^-,x) | W(u,x)) \leq \frac{1}{\alpha}\left(\frac{1}{\alpha}-1\right)\left(\frac{\Delta}{1-\alpha}\right)^{2r}.
    \end{align}
    Combining with Equations~\ref{eq:hitting_time_case_non_emptyset} and \ref{eq:hitting_time_to_from_center}, we have:
    \begin{align}
        H(u, x) \leq \frac{1}{\alpha} + \frac{1}{\alpha}\left(\frac{\Delta}{1-\alpha}\right)^{d(v, x)} + \frac{1}{\alpha}\left(\frac{1}{\alpha}-1\right)\left(\frac{\Delta}{1-\alpha}\right)^{2r},
    \end{align}
    for any $\alpha\in (0, 1)$.
    Notice that, while this bound is for the case $W(u,v^-,x)\neq\emptyset$, it subsumes the bound for the case $W(u,v^-,x)=\emptyset$ (\Eqref{eq:hitting_time_to_from_center}).
    Then, using $d(v, x)\leq r$, we get \Eqref{eq:hitting_time_random_restart}.
    This completes the proof for random restarts.

    We now prove for periodic restarts.
    If the walk starting at $u$ reaches $x$ before restarting at $v$, the steps taken is clearly less than $k$.
    If the walk starting at $u$ restarts at $v$ at step $k$ before reaching $x$, it now needs to reach $x$ from $v$, while restarting at $v$ at every $k$ steps.
    Let $T$ be the steps taken to reach $x$ from $v$.
    Let $T'$ be the number of steps until the walk restarts at $v$, then exactly follows the shortest path from $v$ to $x$ for the first time, and then restarts at $v$.
    It is clear that $\mathbb{E}[T]\leq\mathbb{E}[T']$.
    Then, we split the walk of length $T'$ into $N$ trials, where each trial consists of restarting at $v$ and walking $k$ steps until the next restart.
    A trial is successful if it immediately walks the shortest path from $v$ to $x$.
    Then $N$ is the number of trials until we get a success, and it follows geometric distribution with probability of success at least $(1/\Delta)^{d(v, x)}$ only for $k\geq d(v, x)$, and zero for $k< d(v, x)$ since the walk cannot reach $x$ before restart.
    Hence, its expectation is at most $\Delta^{d(v, x)}$ for $k\geq d(v, x)$, and we have $\mathbb{E}[T]\leq\mathbb{E}[T']=k\mathbb{E}[N]\leq k\Delta^{d(v, x)}$.
    Adding the $k$ steps until the first restart at $v$, we have:
    \begin{align}
        H(u, x)\leq k + k\Delta^{d(v, x)},
    \end{align}
    for any $k\geq d(v, x)$.
    Using $d(v, x)\leq r$, we get \Eqref{eq:hitting_time_periodic_restart}.
    This completes the proof.
\end{proof}

We now extend Lemma~\ref{lemma:hitting_time_restart} to edges.
Let $H(u, (x, y))$ be the expected number of steps of a random walk starting at $u$ takes until traversing an edge $(x, y)$ by $x\to y$.
We show that $H(u, (x, y))$ is finite for any $u$ and adjacent $x$, $y$ in $B_r(v)$ for any nonzero restart probability $\alpha$ or restart period $k\geq r+1$.
\begin{lemma}\label{lemma:hitting_time_restart_edge}
    For any $u$ and adjacent $x$, $y$ in $B_r(v)$, $H(u, (x, y))$ is bounded by the following:
    \begin{align}\label{eq:edge_hitting_time_random_restart}
        H(u, (x, y)) &\leq \frac{1}{\alpha}+\frac{1}{\alpha}\left(\frac{\Delta}{1-\alpha}\right)^{r+1} + \frac{1}{\alpha}\left(\frac{1}{\alpha}-1\right)\left(\frac{\Delta}{1-\alpha}\right)^{2r+1},
    \end{align}
    if the random walk restarts at $v$ with any probability $\alpha\in(0, 1)$, and:
    \begin{align}\label{eq:edge_hitting_time_periodic_restart}
        H(u, (x, y)) &\leq k+k\Delta^{r+1},
    \end{align}
    if the random walk restarts at $v$ with any period $k\geq r+1$.
\end{lemma}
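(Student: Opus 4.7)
}
The plan is to mirror the proof of Lemma~\ref{lemma:hitting_time_restart} almost line by line, replacing the event ``visiting the vertex $x$'' with the event ``traversing the directed edge $x\to y$''. Concretely, I would define an edge-hitting time function $|w|_{(x,y)}\coloneq\min\{i\geq 1 : (w)_{i-1}=x,\ (w)_i=y\}$, and work with the set $W(u,(x,y))$ of walks starting at $u$ that ever traverse $(x,y)$ in that direction, together with $W(u,(x,y)^-)$ for those that never do. Measurability of $|w|_{(x,y)}$ follows from the same cylinder-set argument as for $|w|_x$, just one step deeper into the walk.

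First I would establish the analogue of \Eqref{eq:not_reaching_measure_zero}, namely $P(W(u,(x,y)^-))=0$, via the decomposition $W(u,(x,y)^-)=W(u,v^-)\cup W(u,v,(x,y)^-)$. The first set still has measure zero because restart forces a visit to $v$ almost surely. For the second, a walk that has just restarted at $v$ traverses $(x,y)$ before the next restart with probability at least $\bigl((1-\alpha)/\Delta\bigr)^{d(v,x)+1}$, since the cheapest witnessing path has $d(v,x)$ edges from $v$ to $x$ followed by the single edge $x\to y$; an $N$-restart truncation then gives a probability bound of $\bigl(1-((1-\alpha)/\Delta)^{d(v,x)+1}\bigr)^N\downarrow 0$.

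Next I would repeat the case split of \Eqref{eq:hitting_time_case_non_emptyset}, writing
\begin{align*}
H(u,(x,y)) &\leq \hat{H}(u,(x,y)^-,v) + H(v,(x,y)) \\
&\quad + \hat{H}(u,v^-,(x,y))\,P(W(u,v^-,(x,y))\mid W(u,(x,y))),
\end{align*}
and bound each term by the same device as before, except with $d(v,x)$ replaced by $d(v,x)+1$ and $d(u,x)$ replaced by $d(u,x)+1$, reflecting the one extra step needed to commit the transition $x\to y$ once $x$ is reached. The ``before restart'' bound $\hat{H}(u,(x,y)^-,v)\leq 1/\alpha$ is unchanged (it only depends on the geometric tail of the restart time). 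Wald's identity applied to the number of full restart cycles needed to witness the prefix-path ``restart at $v$ then follow a shortest $v$-to-$x$ path then traverse $(x,y)$ then restart'' gives $H(v,(x,y))\leq \tfrac{1}{\alpha}\bigl(\Delta/(1-\alpha)\bigr)^{d(v,x)+1}$. The final residual term is handled as in \Eqref{eq:hitting_time_no_center_bound}, where the lower bound $P(W(u,(x,y)))\geq\bigl((1-\alpha)/\Delta\bigr)^{d(u,x)+1}$ produces the factor $\bigl(\Delta/(1-\alpha)\bigr)^{2r+1}$ after using $d(u,x)\leq 2r$. Collecting and using $d(v,x)\leq r$ yields \Eqref{eq:edge_hitting_time_random_restart}. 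The periodic-restart case is analogous: a between-restart segment can traverse $(x,y)$ only when $k\geq d(v,x)+1$, which is guaranteed by $k\geq r+1$; the number of restart cycles needed follows a geometric law with success probability at least $\Delta^{-(d(v,x)+1)}$, giving $H(v,(x,y))\leq k\Delta^{d(v,x)+1}$ and hence \Eqref{eq:edge_hitting_time_periodic_restart}.

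The main obstacle I anticipate is purely bookkeeping: making sure the ``one extra step'' propagates consistently through the exponent in every estimate (in particular in both the $v$-to-$(x,y)$ success probability and the $u$-to-$(x,y)$ lower bound used for the residual term), and verifying that the condition $k\geq r+1$ in the periodic case is tight, i.e., that $k=r$ can genuinely fail when the unique shortest path from $v$ to $y$ of length $r+1$ goes through $x$ at its $r$-th step. Once these are checked, the rest of the argument is a direct transcription of the proof of Lemma~\ref{lemma:hitting_time_restart}.
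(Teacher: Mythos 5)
Your proposal is correct and takes essentially the same approach as the paper: the paper's own proof simply states that the argument is identical to that of Lemma~\ref{lemma:hitting_time_restart} with the target replaced by the directed edge $(x,y)$ and the shortest-path witness postfixed by the step $x\to y$, contributing $+1$ to the relevant exponents. You fill in exactly the bookkeeping the paper leaves implicit, and the exponents $r+1$ and $2r+1$ (and the threshold $k\geq r+1$) come out as they should.
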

\begin{proof}
    The proof is almost identical to Lemma~\ref{lemma:hitting_time_restart}, except the target $x$ of reaching is substituted by $(x, y)$ in the direction of $x\to y$, and all arguments that use the shortest path from $u$ or $v$ to $x$ instead use the shortest path to $x$ postfixed by $x\to y$, which adds $+1$ to several terms in the bounds.
\end{proof}

We are now ready to prove Theorem~\ref{theorem:finite_local_cover_time}.
\begingroup
\def\thetheorem{\ref{theorem:finite_local_cover_time}}
\begin{theorem}
    In Theorem~\ref{theorem:infinite_local_cover_time}, if the random walk restarts at $v$ with any nonzero probability $\alpha$ or any period $k\geq r+1$, the vertex and edge cover times of $B_r(v)$ are always finite.
\end{theorem}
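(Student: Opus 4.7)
The plan is to upgrade the single-vertex (and single-edge) hitting time bounds from Lemmas~\ref{lemma:hitting_time_restart} and \ref{lemma:hitting_time_restart_edge} into cover time bounds via a standard Matthews-type argument, using the fact that $B_r(v)$ has only finitely many vertices and edges because $G$ is locally finite with maximum degree $\Delta$.

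First, I would observe that $|V(B_r(v))| \leq 1 + \Delta + \Delta^2 + \cdots + \Delta^r$ and $|E(B_r(v))| \leq \Delta \cdot |V(B_r(v))|$, so both are finite integers depending only on $r$ and $\Delta$. Next, for the vertex cover time, I would apply Matthews' inequality: for any finite target set $S$,
\begin{equation*}
    C_V(B_r(v), v) \;\leq\; \Bigl(\max_{u,x \in B_r(v)} H(u,x)\Bigr) \cdot \mathcal{H}_{|V(B_r(v))|},
\end{equation*}
where $\mathcal{H}_n = 1 + 1/2 + \cdots + 1/n$ is the $n$-th harmonic number. Plugging in \Eqref{eq:hitting_time_random_restart} for the random-restart case, or \Eqref{eq:hitting_time_periodic_restart} for the periodic case (recalling periodic restarts require $k \geq r$, which is weaker than the theorem's hypothesis $k \geq r+1$), yields an explicit finite bound for $C_V(B_r(v))$.

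For the edge cover time, I would run the same Matthews-style argument but over the set of directed edges of $B_r(v)$, using Lemma~\ref{lemma:hitting_time_restart_edge}. Specifically,
\begin{equation*}
    C_E(B_r(v), v) \;\leq\; \Bigl(\max_{u\in B_r(v),\,(x,y)\in E(B_r(v))} H(u,(x,y))\Bigr) \cdot \mathcal{H}_{2|E(B_r(v))|},
\end{equation*}
with each hitting time bounded by \Eqref{eq:edge_hitting_time_random_restart} or \Eqref{eq:edge_hitting_time_periodic_restart}. The condition $k \geq r+1$ is now necessary (rather than $k \geq r$) because reaching an edge $(x,y)$ with $d(v,x) = r$ requires one additional step to traverse $x \to y$, matching exactly the hypothesis on $k$ in the statement.

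The main subtlety I anticipate is a technical one: when we restrict attention to $B_r(v)$, the random walk on $G$ may temporarily leave $B_r(v)$ before covering it, so we cannot directly invoke Matthews on the walk restricted to $B_r(v)$. The cleanest workaround is to note that Matthews' inequality only requires upper bounds on the hitting times $H(u,x)$ (or $H(u,(x,y))$) of the \emph{ambient} walk on $G$, for $u,x$ ranging over the target set $B_r(v)$; these are precisely what Lemmas~\ref{lemma:hitting_time_restart} and \ref{lemma:hitting_time_restart_edge} provide. With this observation the proof reduces to a finite product of a finite hitting-time bound and a finite harmonic sum, establishing finiteness in both regimes and completing the argument.
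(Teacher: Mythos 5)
Your proposal is correct, and it takes a genuinely different route from the paper. The paper uses the Aleliunas et al.\ spanning-tree telescope: it fixes a DFS spanning tree $T$ of $B_r(v)$ rooted at $v$ and bounds $C_V(B_r(v)) \leq \sum_{(x,y)\in E(T)}[H(x,y)+H(y,x)]$ (and an analogous sum over $E(B_r(v))$ for the edge cover time), which yields a bound of order $2|V(B_r(v))|\cdot\max H$. You instead invoke Matthews' inequality, which trades the linear factor $|V(B_r(v))|$ for a harmonic-number factor $\mathcal{H}_{|V(B_r(v))|}$ and so is asymptotically tighter in the ball size, though since the dominating term in both cases is the hitting-time bound (already exponential in $r$) this is largely cosmetic for the mere finiteness claim. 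Both arguments bottom out in exactly the same place --- Lemmas~\ref{lemma:hitting_time_restart} and \ref{lemma:hitting_time_restart_edge} --- and you correctly identify and resolve the key subtlety that the ambient walk may leave $B_r(v)$ between successive target hits: Matthews (like the spanning-tree telescope) only needs uniform hitting-time bounds between targets, not a chain confined to the ball. You also correctly track that $k\geq r$ suffices for vertex cover while $k\geq r+1$ is what edge cover needs, matching the theorem's hypothesis. One thing worth making explicit in your write-up, though the paper is equally terse about it: for periodic restarts the walk is not time-homogeneous, so both your Matthews telescope and the paper's spanning-tree telescope implicitly rely on the hitting-time bounds of Lemmas~\ref{lemma:hitting_time_restart} and \ref{lemma:hitting_time_restart_edge} being valid from any phase within the restart period, which those proofs do ensure by charging $k$ steps up front for reaching the first restart.
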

\addtocounter{theorem}{-1}
\endgroup
\begin{proof}
    The proof is inspired by the spanning tree argument of \citet{aleliunas1979random}.
    Let us consider a depth first search of $B_r(v)$ starting from $v$.
    We denote by $T$ the resulting spanning tree with vertices $V(T)=V(B_r(v))$.
    We consider the expected time for a random walk starting at $v$ to visit every vertex in the precise order visited by the depth first search by traversing each edge twice.
    It is clear that this upper-bounds the vertex cover time of $B_r(v)$ starting at $v$ (\Eqref{eq:local_cover_time}):
    \begin{align}
        C_V(B_r(v)) \leq \sum_{(x, y)\in E(T)}[H(x, y) + H(y, x)].
    \end{align}
    Then, using the bounds from Lemma~\ref{lemma:hitting_time_restart}, the property of spanning trees $|E(T)|=|V(T)|-1$, and the fact that $|V(T)|=|V(B_r(v))|\leq \Delta^r$ from bounded degree, we obtain:
    \begin{align}
        C_V(B_r(v)) &\leq 2(\Delta^r-1)\left(\frac{1}{\alpha}+\frac{1}{\alpha}\left(\frac{\Delta}{1-\alpha}\right)^r+\frac{1}{\alpha}\left(\frac{1}{\alpha}-1\right)\left(\frac{\Delta}{1-\alpha}\right)^{2r}\right),
    \end{align}
    if the random walk restarts at $v$ with any probability $\alpha\in(0, 1)$, and:
    \begin{align}
        C_V(B_r(v)) &\leq 2(\Delta^r-1)(k+k\Delta^r),
    \end{align}
    if the random walk restarts at $v$ with any period $k\geq r$.
    This completes the proof for the vertex cover time.
    For the edge cover time, we consider the expected time for a random walk starting at $v$ to visit every edge in the precise order discovered\footnote{We remark that depth first search discovers all edges of a graph, while not necessarily visiting all of them.} by the depth first search by traversing each edge twice.
    It is clear that this upper-bounds the edge cover time of $B_r(v)$ starting at $v$ (\Eqref{eq:local_edge_cover_time}):
    \begin{align}
        C_E(B_r(v)) \leq \sum_{(x, y)\in E(B_r(v))}[H(x, (x, y)) + H(y, (y, x))].
    \end{align}
    Then, using Lemma~\ref{lemma:hitting_time_restart} and the fact that $|E(B_r(v))|\leq \Delta^{2r}-1$ from bounded degree, we obtain:
    \begin{align}
        C_E(B_r(v)) &\leq 2(\Delta^{2r}-1)\left(\frac{1}{\alpha}+\frac{1}{\alpha}\left(\frac{\Delta}{1-\alpha}\right)^{r+1}+\frac{1}{\alpha}\left(\frac{1}{\alpha}-1\right)\left(\frac{\Delta}{1-\alpha}\right)^{2r+1}\right),
    \end{align}
    if the random walk restarts at $v$ with any probability $\alpha\in(0, 1)$, and:
    \begin{align}
        C_E(B_r(v)) &\leq 2(\Delta^{2r}-1)\cdot(k+k\Delta^{r+1}),
    \end{align}
    if the random walk restarts at $v$ with any period $k\geq r+1$.
    This completes the proof.
\end{proof}

While our proof shows finite bounds for the cover times, it is possible that they can be made tighter, for instance based on \citet{zuckerman1991on}.
We leave improving the bounds as a future work.

\subsubsection{Proof of Theorem~\ref{theorem:graph_level_universality}~(Section~\ref{sec:expressive_power})}\label{sec:proof_graph_level_universality}

We recall universal approximation of graph-level functions in probability (Definition~\ref{defn:graph_level_universality}):
\begingroup
\def\thetheorem{\ref{defn:graph_level_universality}}
\begin{definition}
    We say $X_\theta(\cdot)$ is a universal approximator of graph-level functions in probability if, for all invariant functions $\phi:\mathbb{G}_n\to\mathbb{R}$ for a given $n\geq 1$, and $\forall\epsilon,\delta>0$, there exist choices of length~$l$ of the random walk and network parameters $\theta$ such that the following holds:
    \begin{align}\label{eq:graph_level_universality_apdx}
        \mathrm{Prob}[|\phi(G) - X_\theta(G)|< \epsilon] > 1-\delta,\quad\forall G\in\mathbb{G}_n.
    \end{align}
\end{definition}
\addtocounter{theorem}{-1}
\endgroup

We remark that an RWNN $X_\theta(\cdot)$ is composed of a random walk algorithm, a recording function $q:(v_0\to\cdots\to v_l, G)\mapsto\mathbf{z}$, and a reader neural network $f_\theta:\mathbf{z}\mapsto\hat{\mathbf{y}}\in\mathbb{R}$.

Intuitively, if the record $\mathbf{z}$ of the random walk always provides complete information of the input graph $G$, we may invoke universal approximation of $f_\theta$ to always obtain $|\phi(G) - f_\theta(\mathbf{z})|<\epsilon$, and thus $|\phi(G) - X_\theta(G)|<\epsilon$.
However, this is not always true as the random walk may e.g. fail to visit some vertices of $G$, in which case the record $\mathbf{z}$ would be incomplete.
As we show below, this uncertainty leads to the probabilistic bound $> 1-\delta$ of the approximation.

Let us denote the collection of all possible random walk records as $\{\mathbf{z}\}\coloneq\mathrm{Range}(q)$, and consider a decoding function $\psi:\{\mathbf{z}\}\to \mathbb{G}_n$ that takes the record $\mathbf{z}\coloneq q(v_0\to\cdots\to v_l, G)$ of a given random walk $v_{[\cdot]}$ and outputs the graph $\psi(\mathbf{z})\in\mathbb{G}_n$ composed of all recorded vertices $V(H)\coloneq\{\mathrm{id}(v_t):v_t\in\{v_0, ..., v_l\}\}$ and all recorded edges $E(H)\subset V(H)\times V(H)$.
We show the following lemma:
\begin{lemma}\label{lemma:record_completeness}
    Let $G_\mathbf{z}$ be the subgraph of $G$ whose vertices and edges are recorded by $\mathbf{z}$.
    Then the graph $\psi(\mathbf{z})$ decoded from the record $\mathbf{z}$ is isomorphic to $G_\mathbf{z}$ through the namespace $\mathrm{id}(\cdot)$:
    \begin{align}\label{eq:recording_isomorphism}
        G_\mathbf{z}\overset{\mathrm{id}}{\simeq}\psi(\mathbf{z}).
    \end{align}
    Furthermore, the decoded graph $\psi(\mathbf{z})$ reconstructs $G$ up to isomorphism, that is,
    \begin{align}\label{eq:recording_reconstruction}
        G\overset{\mathrm{id}}{\simeq}\psi(\mathbf{z}),
    \end{align}
    if the recording function $q(\cdot)$ and the random walk $v_{[\cdot]}$ satisfies either of the following:
    \begin{itemize}
        \item $q(\cdot)$ uses anonymization, and $v_{[\cdot]}$ has traversed all edges of $G$.
        \item $q(\cdot)$ uses anonymization and named neighbors, and $v_{[\cdot]}$ has visited all vertices of $G$.
    \end{itemize}
\end{lemma}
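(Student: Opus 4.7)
The plan is to prove both assertions by unpacking the definitions of anonymization, named neighbors, and the decoder $\psi$, and then matching structural information piece by piece. I would first observe that anonymization produces a map $\mathrm{id}:S\to\{1,\dots,|S|\}$ on the set $S$ of vertices visited by the walk, and that this map is by construction a bijection: each fresh vertex is assigned the next unused integer (Algorithm~\ref{alg:recording_function_anonymization}), so distinct visited vertices receive distinct names and every name $1,\dots,|S|$ is used exactly once. This bijection $\mathrm{id}$ will serve as the candidate isomorphism throughout.

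For the first claim, I would argue directly from the construction of $\psi$. By definition, $V(\psi(\mathbf{z}))=\{\mathrm{id}(v):v\in S\}$, which equals $\mathrm{id}(V(G_\mathbf{z}))$ since $V(G_\mathbf{z})$ is exactly the set of visited vertices. For edges, the record $\mathbf{z}$ logs a pair $(\mathrm{id}(u),\mathrm{id}(v))$ precisely when either the walk traverses the edge $uv$ or (in the named-neighbor variant) when $uv$ is recorded via the $N(v_t)\cap S$ loop in Algorithm~\ref{alg:recording_function_anonymization_neighborhoods}; in both cases the underlying edge $uv$ belongs to $E(G_\mathbf{z})$ by definition, and conversely every edge of $G_\mathbf{z}$ corresponds to a pair in $\mathbf{z}$. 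Hence $E(\psi(\mathbf{z}))=\{(\mathrm{id}(u),\mathrm{id}(v)):(u,v)\in E(G_\mathbf{z})\}$, which together with the vertex identification establishes $G_\mathbf{z}\overset{\mathrm{id}}{\simeq}\psi(\mathbf{z})$.

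For the second claim, it suffices by the first claim to show that under each set of hypotheses we have $G_\mathbf{z}=G$. In the anonymization-only case, traversing every edge of $G$ immediately implies $E(G_\mathbf{z})=E(G)$, and since $G$ is connected (by our assumption in Section~\ref{sec:graph_level}) this also forces $V(G_\mathbf{z})=V(G)$. In the anonymization plus named-neighbor case, assume the walk visits every vertex of $G$, so $S=V(G)$ and $V(G_\mathbf{z})=V(G)$. Consider any edge $uv\in E(G)$. Let $t^\ast$ be the first step at which both endpoints have been named, and suppose without loss of generality that $v_{t^\ast}\in\{u,v\}$, say $v_{t^\ast}=v$ and the other endpoint $u$ has already been named at time $t^\ast$. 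If the edge $uv$ was not recorded earlier by traversal or by the named-neighbor loop, then at step $t^\ast$ we have $u\in N(v_{t^\ast})\cap S$ and $(v_{t^\ast},u)\notin T$, so the algorithm records it on line where $\mathbf{z}$ is appended with $\mathrm{id}(u)$. Thus every edge of $G$ lies in $E(G_\mathbf{z})$, giving $G_\mathbf{z}=G$.

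The only subtlety I anticipate is in the named-neighbor bookkeeping: one must be careful that the set $T$ of already-recorded edges is correctly tracked so that each edge ends up recorded at least once but no edge is claimed to be missing simply because it was logged earlier. This is a matter of induction on $t$ alongside the argument for $t^\ast$, and is where Algorithm~\ref{alg:recording_function_anonymization_neighborhoods} must be invoked precisely; everything else reduces to reading off the constructions.
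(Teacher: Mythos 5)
Your proposal is correct and follows essentially the same route as the paper's proof: observe that anonymization yields a bijection $\mathrm{id}$ on visited vertices, that the decoder $\psi$ reproduces exactly the recorded vertex/edge set under $\mathrm{id}$, and that under each hypothesis the record captures all of $G$ (all edges directly when the walk traverses every edge; all vertices plus the induced subgraph via the named-neighbor loop when the walk visits every vertex). You simply spell out the named-neighbor case more carefully than the paper does, via the step $t^\ast$ at which the second endpoint of an edge is first named. One minor imprecision: you write "suppose without loss of generality that $v_{t^\ast}\in\{u,v\}$," but this is not a WLOG choice — it is a fact, since at most one vertex is freshly named per step, so the walk position at $t^\ast$ must be the newly named endpoint; the only genuine WLOG is choosing which of $u,v$ it is. This does not affect the validity of the argument.
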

\begin{proof}
    \Eqref{eq:recording_isomorphism} is straightforward from the fact that the namespace $\mathrm{id}(\cdot)$ defines a bijection from $V(G_\mathbf{z})$ to $[|V(G_\mathbf{z})|]$, and the recording function uses names $\mathrm{id}(v_t)$ to record vertices and edges.
    \Eqref{eq:recording_reconstruction} is satisfied when all vertices and edges of $G$ have been recorded, i.e., $G_\mathbf{z}=G$, which is possible either when the random walk has traversed all edges of $G$, or when it has traversed all vertices of $G$ and named neighbors are used to record the induced subgraph $G[V(G)]=G$.
\end{proof}

We further remark Markov's inequality for any nonnegative random variable $T$ and $a>0$:
\begin{align}\label{eq:markov}
    \mathrm{Prob}[T\geq a]\leq \frac{\mathbb{E}[T]}{a}.
\end{align}

We are now ready to prove Theorem~\ref{theorem:graph_level_universality}.

\begingroup
\def\thetheorem{\ref{theorem:graph_level_universality}}
\begin{theorem}
    An RWNN $X_\theta(\cdot)$ with a sufficiently powerful $f_\theta$ is a universal approximator of graph-level functions in probability (Definition~\ref{defn:graph_level_universality}) if it satisfies either of the below:
    \begin{itemize}
        \item It uses anonymization to record random walks of lengths $l > C_E(G)/\delta$.
        \item It uses anonymization and named neighbors to record walks of lengths $l > C_V(G)/\delta$.
    \end{itemize}
\end{theorem}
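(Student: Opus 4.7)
The plan is to combine Lemma~\ref{lemma:record_completeness} (record completeness) with Markov's inequality applied to cover times, and then invoke the universality of $f_\theta$ on the now-discrete space of complete records. Fix an invariant $\phi:\mathbb{G}_n\to\mathbb{R}$ and tolerances $\epsilon,\delta>0$. For each $G\in\mathbb{G}_n$ and each random walk $v_0\to\cdots\to v_l$, let $T(G)$ denote the random first time at which the walk has traversed every edge of $G$ (in the anonymization case) or visited every vertex of $G$ (in the anonymization + named neighbors case). By the definition of (edge or vertex) cover time we have $\mathbb{E}[T(G)]\leq C_E(G)$ or $C_V(G)$ respectively, so Markov's inequality (\Eqref{eq:markov}) yields
\begin{align}
    \mathrm{Prob}[T(G)>l]\leq \frac{\mathbb{E}[T(G)]}{l}<\delta,
\end{align}
whenever $l$ exceeds $C_E(G)/\delta$ or $C_V(G)/\delta$ respectively.

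Next I would isolate the ``good event'' $\{T(G)\leq l\}$. On this event, the walk's record $\mathbf{z}$ contains every edge (resp.\ every vertex) of $G$, so by Lemma~\ref{lemma:record_completeness} the decoded graph $\psi(\mathbf{z})$ is isomorphic to $G$, and hence by invariance of $\phi$,
\begin{align}
    \phi(\psi(\mathbf{z}))=\phi(G).
\end{align}
This motivates defining the target reader function $\tilde{f}:\{\mathbf{z}\}\to\mathbb{R}$ by $\tilde{f}(\mathbf{z})\coloneq\phi(\psi(\mathbf{z}))$ on every possible record. Because random walks of length $l$ on graphs in $\mathbb{G}_n$ use anonymized names in $[n]$ and produce records of bounded length, the relevant domain of $\tilde{f}$ is a finite set, so a sufficiently expressive reader (e.g., an MLP or a transformer with appropriate embedding of $\mathbf{z}$) can realize $|f_\theta(\mathbf{z})-\tilde{f}(\mathbf{z})|<\epsilon$ uniformly over this finite set; this is where the ``sufficiently powerful $f_\theta$'' hypothesis enters.

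Combining the two ingredients, for any $G\in\mathbb{G}_n$,
\begin{align}
    \mathrm{Prob}[|\phi(G)-X_\theta(G)|<\epsilon]
    &\geq \mathrm{Prob}[T(G)\leq l]
    > 1-\delta,
\end{align}
since on the good event $X_\theta(G)=f_\theta(\mathbf{z})$ and $|f_\theta(\mathbf{z})-\phi(G)|=|f_\theta(\mathbf{z})-\tilde{f}(\mathbf{z})|<\epsilon$. This establishes \Eqref{eq:graph_level_universality_apdx} and handles both cases of the theorem simultaneously by choosing $T(G)$ to be the edge or vertex cover time as appropriate.

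The main obstacle I anticipate is the ``sufficiently powerful $f_\theta$'' step: one has to argue that a single choice of architecture can, for every fixed $l$ and $n$, realize the target $\tilde{f}$ to within $\epsilon$ on all possible records. For MLPs this follows from classical universal approximation after a fixed-length encoding of $\mathbf{z}$ (records are bounded in length by $l$ and use names in $[n]$, so they live in a bounded discrete set that can be embedded into a Euclidean space); for transformers the result of Yun et al.~is the relevant statement. The cover-time/Markov half of the argument is the conceptually essential part, and the reader-network half is a standard appeal that must be carefully formulated so that the same architecture works uniformly over all $G\in\mathbb{G}_n$.
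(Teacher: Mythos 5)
Your argument is correct and follows the same route as the paper's proof: both define the proxy target $\phi\circ\psi$ on the space of records, invoke universality of $f_\theta$ to approximate it uniformly, use Lemma~\ref{lemma:record_completeness} to equate the proxy with $\phi(G)$ on the covering event, and bound that event's probability via Markov's inequality applied to the appropriate cover time. Your version conditions directly on the good event rather than routing through the paper's explicit triangle-inequality chain, but the decomposition and key ingredients are identical.
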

\addtocounter{theorem}{-1}
\endgroup
\begin{proof}
    Instead of directly approximating the target function $\phi:\mathbb{G}_n\to\mathbb{R}$, it is convenient to define a proxy target function on random walk records $\phi':\{\mathbf{z}\}\to\mathbb{R}$ where $\{\mathbf{z}\}\coloneq\mathrm{Range}(q)$ as follows:
    \begin{align}
        \phi'\coloneq \phi\circ\psi,
    \end{align}
    where $\psi:\{\mathbf{z}\}\to\mathbb{G}_n$ is the decoding function of walk records.
    Then, for a given $\mathbf{z}$, we have:
    \begin{align}
        G\simeq \psi(\mathbf{z})\quad\Longrightarrow\quad\phi(G) = \phi'(\mathbf{z}),
    \end{align}
    which is because $\phi$ is an invariant function, so $\phi(G) = \phi(\psi(\mathbf{z})) = \phi\circ\psi(\mathbf{z}) = \phi'(\mathbf{z})$.
    Then we have:
    \begin{align}\label{eq:probabilistic_bound}
        \mathrm{Prob}[G\simeq \psi(\mathbf{z})]\leq \mathrm{Prob}[|\phi(G) - \phi'(\mathbf{z})| = 0].
    \end{align}
    We now invoke universality of $f_\theta$ to approximate $\phi'$.
    If $f_\theta$ is a universal approximator of functions on its domain $\{\mathbf{z}\}\coloneq\mathrm{Range}(q)$, for any $\epsilon > 0$ there exists a choice of $\theta$ such that the below holds:
    \begin{align}\label{eq:universal_approximation_proxy}
        |\phi'(\mathbf{z})-f_\theta(\mathbf{z})|<\epsilon,\quad\forall\mathbf{z}\in\mathrm{Range}(q).
    \end{align}
    Combining Equations~\ref{eq:probabilistic_bound} and \ref{eq:universal_approximation_proxy}, we have:
    \begin{align}\label{eq:probabilistic_bound_distance}
        \mathrm{Prob}[G\simeq \psi(\mathbf{z})]\leq \mathrm{Prob}[|\phi(G) - \phi'(\mathbf{z})| + |\phi'(\mathbf{z})-f_\theta(\mathbf{z})|<\epsilon].
    \end{align}
    We remark triangle inequality of distances on $\mathbb{R}$, for a given $\mathbf{z}$:
    \begin{align}
        |\phi(G) - f_\theta(\mathbf{z})|\leq |\phi(G) - \phi'(\mathbf{z})| + |\phi'(\mathbf{z})-f_\theta(\mathbf{z})|,
    \end{align}
    which implies, for a given $\mathbf{z}$:
    \begin{align}
        |\phi(G) - \phi'(\mathbf{z})| + |\phi'(\mathbf{z})-f_\theta(\mathbf{z})|<\epsilon \quad\Longrightarrow\quad |\phi(G) - f_\theta(\mathbf{z})|<\epsilon,
    \end{align}
    and hence:
    \begin{align}\label{eq:probabilistic_triangle_inequality}
        \mathrm{Prob}[|\phi(G) - \phi'(\mathbf{z})| + |\phi'(\mathbf{z})-f_\theta(\mathbf{z})|<\epsilon]\leq \mathrm{Prob}[|\phi(G) - f_\theta(\mathbf{z})|<\epsilon].
    \end{align}
    Combining Equations \ref{eq:probabilistic_bound_distance} and \ref{eq:probabilistic_triangle_inequality}, we have:
    \begin{align}
        \mathrm{Prob}[|\phi(G) - f_\theta(\mathbf{z})|<\epsilon]\geq \mathrm{Prob}[G\simeq \psi(\mathbf{z})],
    \end{align}
    which can be written as follows:
    \begin{align}\label{eq:probabilistic_bound_full}
        \mathrm{Prob}[|\phi(G) - X_\theta(G)|<\epsilon]\geq \mathrm{Prob}[G\simeq \psi(\mathbf{z})].
    \end{align}
    We now consider the probability of the event $G\simeq \psi(\mathbf{z})$ based on Lemma~\ref{lemma:record_completeness}.
    We first consider the case where the recording function $q(\cdot)$ uses anonymization.
    In this case, $G\simeq \psi(\mathbf{z})$ is achieved if the random walk of length $l$ has traversed all edges of $G$.
    Let $T_E(G, v_0)$ be the number of steps that a random walk starting at $v_0$ takes until traversing all edges of $G$.
    Since the edge cover time $C_E(G)$ is its expectation taken at the worst possible starting vertex (\Eqref{eq:edge_cover_time}), we have the following:
    \begin{align}
        \mathbb{E}[T_E(G, v_0)]\leq C_E(G),\quad\forall v_0\in V(G),
    \end{align}
    which leads to the following from Markov's inequality (\Eqref{eq:markov}):
    \begin{align}\label{eq:probabilistic_bound_edge_cover_time}
        \mathrm{Prob}[T_E(G, v_0)< l]\geq1-\frac{\mathbb{E}[T_E(G)]}{l}\geq1-\frac{C_E(G)}{l}.
    \end{align}
    For a given random walk $v_0\to\cdots\to v_l$ and its record $\mathbf{z}$, the following holds:
    \begin{align}
        T_E(G, v_0) < l\quad\Longrightarrow\quad G\simeq \psi(\mathbf{z}),
    \end{align}
    which implies the following:
    \begin{align}\label{eq:probabilistic_bound_length}
        \mathrm{Prob}[T_E(G, v_0) < l] \leq \mathrm{Prob}[G\simeq \psi(\mathbf{z})].
    \end{align}
    Combining Equations~\ref{eq:probabilistic_bound_full}, \ref{eq:probabilistic_bound_edge_cover_time}, and \ref{eq:probabilistic_bound_length}, we have:
    \begin{align}
        \mathrm{Prob}[|\phi(G) - X_\theta(G)|<\epsilon]\geq 1-\frac{C_E(G)}{l}.
    \end{align}
    Therefore, for any $\delta>0$, if we choose $l>C_E(G)/\delta$ we would have the following:
    \begin{align}
        \mathrm{Prob}[|\phi(G) - X_\theta(G)|<\epsilon]> 1-\delta.
    \end{align}
    This completes the proof for anonymization.
    The proof is identical for the recording function that uses anonymization and named neighbors, except that the edge cover time is changed to the vertex cover time $C_V(G)$ (\Eqref{eq:cover_time}).
    This is because named neighbor recording automatically records the induced subgraph of visited vertices, thus visiting all vertices implies recording all edges, $G[V(G)]=G$.
\end{proof}

\subsubsection{Proof of Theorem~\ref{theorem:vertex_level_universality}~(Section~\ref{sec:expressive_power})}\label{sec:proof_vertex_level_universality}

\begingroup
\def\thetheorem{\ref{theorem:vertex_level_universality}}
\begin{theorem}
    An RWNN $X_\theta(\cdot)$ with a sufficiently powerful $f_\theta$ and any nonzero restart probability $\alpha$ or restart period $k\geq r+1$ is a universal approximator of vertex-level functions in probability (Definition~\ref{defn:vertex_level_universality}) if it satisfies either of the below for all $B_r(v)\in\mathbb{B}_r$:
    \begin{itemize}
        \item It uses anonymization to record random walks of lengths $l > C_E(B_r(v))/\delta$.
        \item It uses anonymization and named neighbors to record walks of lengths $l > C_V(B_r(v))/\delta$.
    \end{itemize}
\end{theorem}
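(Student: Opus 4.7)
The plan is to mirror the proof of Theorem~\ref{theorem:graph_level_universality}, replacing the finite graph $G$ by the local ball $B_r(v)$ and accounting for the fact that the walk, though started at $v$, may wander outside $B_r(v)$ between restarts. First I would introduce a modified decoder $\psi_r:\{\mathbf{z}\}\to\mathbb{B}_r$ that returns the ball of radius $r$ around the starting vertex (which is named $1$ under anonymization, since $v_0=v$ is queried) inside the graph $\psi(\mathbf{z})$ reconstructed from the record, and define the proxy target $\phi'\coloneq\phi\circ\psi_r$. This is a well-defined function of the record alone.

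Next I would establish a local analogue of Lemma~\ref{lemma:record_completeness}: whenever the walk has visited every vertex of $B_r(v)$ and the recording function uses anonymization with named neighbors, one has $\psi_r(\mathbf{z})\simeq B_r(v)$ as rooted graphs, and the same conclusion holds when only anonymization is used provided the walk has traversed every edge of $B_r(v)$. The key structural observation is that, although the walk may record extra vertices $u\notin V(B_r(v))$ and some edges incident to them, any shortest path in $G$ from $v$ to a vertex $u'\in B_r(v)$ lies entirely inside $V(B_r(v))$ and is therefore recorded under the relevant covering condition. Distances from vertex $1$ in the pertinent subgraph of $\psi(\mathbf{z})$ thus agree with distances from $v$ in $G$, which forces the ball of radius $r$ around $1$ to coincide with $B_r(v)$ up to isomorphism. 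Invariance of $\phi$ then gives $\phi(B_r(v))=\phi'(\mathbf{z})$ on this event.

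The remainder is a direct transcription of the graph-level argument. Universal approximation of $f_\theta$ on $\mathrm{Range}(q)$ combined with the triangle inequality yields the pointwise bound $|\phi(B_r(v))-f_\theta(\mathbf{z})|<\epsilon$ on the event above. To control its probability I would appeal to Theorem~\ref{theorem:finite_local_cover_time}, which, under any nonzero restart probability $\alpha$ or any restart period $k\geq r+1$, guarantees finite cover times $C_V(B_r(v))$ and $C_E(B_r(v))$ for walks started at $v$. Markov's inequality then yields $\mathrm{Prob}[T_V(B_r(v),v)<l]\geq 1-C_V(B_r(v))/l$, and the edge analogue, so choosing $l>C_V(B_r(v))/\delta$ or $l>C_E(B_r(v))/\delta$ establishes \Eqref{eq:vertex_level_universality} for the named-neighbor and anonymization-only cases respectively.

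The main obstacle I anticipate is the local reconstruction lemma. In the graph-level case $\psi(\mathbf{z})$ was the object of interest, but here the walk is free to roam outside $B_r(v)$ and $\psi(\mathbf{z})$ is generally a strictly larger rooted subgraph of $G$. Justifying that taking the radius-$r$ ball around vertex $1$ recovers exactly $B_r(v)$ requires two inclusions: no extraneous vertex can appear at reconstructed distance at most $r$ from $1$, because $\psi(\mathbf{z})$ embeds isomorphically into $G$ and so distances in $\psi(\mathbf{z})$ upper-bound those in $G$; and every vertex of $B_r(v)$ appears at its true distance, because its entire shortest path in $G$ is contained in $B_r(v)$ and is recorded once the covering condition holds. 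Once this structural lemma is in place, the probabilistic estimate follows routinely from the tools already developed in the proof of Theorem~\ref{theorem:graph_level_universality}.
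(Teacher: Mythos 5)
Your proposal is correct and follows essentially the same route as the paper's proof: modify the decoder to return the radius-$r$ ball around the start vertex (named $1$), invoke Theorem~\ref{theorem:finite_local_cover_time} to guarantee finite local cover times under restarts, and then transcribe the Markov-inequality argument from the graph-level case. The only difference is that you flesh out the local reconstruction lemma (why trimming $\psi(\mathbf{z})$ to the radius-$r$ ball recovers $B_r(v)$ exactly), which the paper states more tersely; your two-inclusion argument via the subgraph distance inequality and the shortest-path containment in $B_r(v)$ is a sound elaboration of what the paper leaves implicit.
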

\addtocounter{theorem}{-1}
\endgroup
\begin{proof}
    The proof is almost identical to Theorem~\ref{theorem:graph_level_universality}, except $G\in\mathbb{G}_n$ are substituted by $B_r(v)\in\mathbb{B}_r$, and the decoding function $\psi:\{\mathbf{z}\}\to\mathbb{B}_r$ is defined to ignore all recorded vertices $\mathrm{id}(x)$ whose shortest path distance from the starting vertex $\mathrm{id}(v)=\mathrm{id}(v_0)=1$ exceeds $r$.
    The latter is necessary to restrict the range of the decoding function $\psi$ to $\mathbb{B}_r$.
    In addition, any nonzero restart probability $\alpha$ or restart period $k\geq r+1$ is sufficient to make the cover times $C_E(B_r(v))$ and $C_V(B_r(v))$ finite (Theorem~\ref{theorem:finite_local_cover_time}), thereby guaranteeing the existence of a finite choice of $l$.
\end{proof}

\subsubsection{Proof of Theorem~\ref{theorem:oversmoothing}~(Section~\ref{sec:feature_mixing})}\label{sec:proof_oversmoothing}

\begingroup
\def\thetheorem{\ref{theorem:oversmoothing}}
\begin{theorem}
    The simple RWNN outputs $\mathbf{h}^{(l)}\to\mathbf{x}^\top\boldsymbol{\pi}$ as $l\to\infty$.
\end{theorem}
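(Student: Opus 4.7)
The plan is to recognize $\mathbf{h}^{(l)}=\tfrac{1}{l+1}\sum_{t=0}^{l}\mathbf{x}_{v_t}$ as a time-average of a function along a finite-state Markov chain and invoke its ergodic behaviour. Since $G$ is connected and non-bipartite, the transition matrix $P$ (row-normalized adjacency) defines an irreducible aperiodic chain whose unique stationary distribution is the familiar $\boldsymbol{\pi}_v=\deg(v)/\sum_u\deg(u)$. The target $\mathbf{x}^\top\boldsymbol{\pi}=\sum_v\boldsymbol{\pi}_v\mathbf{x}_v$ is precisely the expectation of the scalar observable $v\mapsto\mathbf{x}_v$ under $\boldsymbol{\pi}$.

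The main step is then to apply the ergodic theorem (strong law of large numbers) for finite-state irreducible Markov chains to the bounded observable $f(v):=\mathbf{x}_v$. This gives
\begin{equation*}
    \mathbf{h}^{(l)}=\frac{1}{l+1}\sum_{t=0}^{l}f(v_t)\;\xrightarrow[l\to\infty]{\text{a.s.}}\;\mathbb{E}_{\boldsymbol{\pi}}[f]=\sum_{v}\boldsymbol{\pi}_v\mathbf{x}_v=\mathbf{x}^\top\boldsymbol{\pi},
\end{equation*}
independently of the starting vertex $v_0$. If one prefers a statement about the mean predictor, the same limit in expectation follows without probabilistic machinery: with initial law $\boldsymbol{\mu}_0$ one has $\mathbb{E}[\mathbf{x}_{v_t}]=\boldsymbol{\mu}_0^{\top}P^{t}\mathbf{x}$, aperiodicity and irreducibility force $P^{t}\to\mathbf{1}\boldsymbol{\pi}^{\top}$ (Perron--Frobenius), so $\mathbb{E}[\mathbf{x}_{v_t}]\to\mathbf{x}^\top\boldsymbol{\pi}$, and Cesàro averaging preserves the limit.

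There is no substantial obstacle: the non-bipartite hypothesis is exactly what rules out a persistent parity oscillation in $P^{t}$, and finiteness of $V(G)$ makes $f$ bounded so the ergodic theorem applies verbatim. The only care needed is to be explicit about the mode of convergence being asserted (almost sure along walks, equivalently in expectation for the mean predictor), and to contrast this limit with the MPNN limit $\mathbf{h}^{(l)}\to\boldsymbol{\pi}$ highlighted in Section~\ref{sec:feature_mixing}: here the limit retains the input through the inner product $\mathbf{x}^\top\boldsymbol{\pi}$ rather than collapsing onto $\boldsymbol{\pi}$, which is the precise sense in which over-smoothing is avoided.
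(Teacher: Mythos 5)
Your proof is correct and follows essentially the same route as the paper: both invoke ergodicity of the uniform random walk on a connected non-bipartite graph and identify the time-average of the observable $v\mapsto\mathbf{x}_v$ with its stationary expectation $\mathbf{x}^\top\boldsymbol{\pi}$. You are somewhat more explicit than the paper about the mode of convergence (almost-sure via the Markov-chain ergodic theorem, or in expectation via Perron--Frobenius plus Cesàro averaging), which is a welcome clarification but not a different argument.
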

\addtocounter{theorem}{-1}
\endgroup
\begin{proof}
    Since $G$ is connected and non-bipartite, the uniform random walk on it defines an ergodic Markov chain with a unique stationary distribution $\boldsymbol{\pi}$.
    The limiting frequency of visits on each vertex $v$ is precisely the stationary probability $\boldsymbol{\pi}_v$.
    Since the model reads $\mathbf{x}_{v_0}\to\cdots\to\mathbf{x}_{v_l}$ by average pooling, the output is given by weighted mean $\sum_v\boldsymbol{\pi}_v\mathbf{x}_v$ which is $\mathbf{x}^\top\boldsymbol{\pi}$.
\end{proof}

\subsubsection{Proof of Theorem~\ref{theorem:oversquashing}~(Section~\ref{sec:feature_mixing})}\label{sec:proof_oversquashing}

\begingroup
\def\thetheorem{\ref{theorem:oversquashing}}
\begin{theorem}
    Let $\mathbf{h}_u^{(l)}$ be output of the simple RWNN queried with $u$.
    Then:
    \begin{align}\label{eq:apdx_oversquashing}
        \mathbb{E}\left[\left|\frac{\partial \mathbf{h}_u^{(l)}}{\partial \mathbf{x}_v}\right|\right]=\frac{1}{l+1}\left[\sum_{t=0}^lP^t\right]_{uv}\to \boldsymbol{\pi}_v\quad\text{as}\quad l\to\infty.
    \end{align}
\end{theorem}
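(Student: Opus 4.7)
The plan is to directly compute the output of the simple RWNN, differentiate it, and then apply the ergodic theorem to obtain the limit. Since the model averages the sequence of recorded features, we have the explicit formula
\begin{align}
\mathbf{h}_u^{(l)} = \frac{1}{l+1}\sum_{t=0}^l \mathbf{x}_{v_t},
\end{align}
where $v_0 = u$ and $v_1,\ldots,v_l$ is the uniform random walk. Differentiating with respect to $\mathbf{x}_v$ turns each occurrence of $v_t = v$ into a $+1$ contribution, so the Jacobian is $\frac{1}{l+1}\sum_{t=0}^l \mathbb{1}[v_t = v]$. Crucially this is nonnegative, hence equals its own absolute value, which eliminates any subtlety with the absolute value inside the expectation.

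Taking expectation and linearity gives
\begin{align}
\mathbb{E}\!\left[\left|\tfrac{\partial \mathbf{h}_u^{(l)}}{\partial \mathbf{x}_v}\right|\right]
= \frac{1}{l+1}\sum_{t=0}^l \mathrm{Prob}[v_t = v \mid v_0 = u]
= \frac{1}{l+1}\left[\sum_{t=0}^l P^t\right]_{uv},
\end{align}
where the last equality uses that the $t$-step transition probabilities of the uniform random walk are exactly $[P^t]_{uv}$. This establishes the equality half of the claim.

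For the limit, I would invoke the same ergodicity hypothesis already used in Theorem~\ref{theorem:oversmoothing}: since $G$ is connected and non-bipartite, the chain defined by $P$ is irreducible and aperiodic with unique stationary distribution $\boldsymbol{\pi}$, so $P^t \to \mathbf{1}\boldsymbol{\pi}^\top$ entrywise as $t\to\infty$. The Cesàro (Abel) mean of a convergent sequence converges to the same limit, hence $\frac{1}{l+1}\sum_{t=0}^l P^t \to \mathbf{1}\boldsymbol{\pi}^\top$ entrywise, and in particular the $(u,v)$ entry tends to $\boldsymbol{\pi}_v$. This yields the desired limit.

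I do not anticipate a main obstacle here; the only mild subtlety is ensuring the absolute value can be dropped before taking expectation, which is immediate because each indicator is nonnegative. If one preferred to avoid the explicit ergodic-theorem invocation, an alternative is to cite the Markov chain ergodic (law of large numbers) theorem directly on the empirical occupation frequency $\frac{1}{l+1}\sum_{t=0}^l \mathbb{1}[v_t = v]$, whose expectation we already computed; either route closes the argument.
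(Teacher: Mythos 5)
Your proof is correct and follows essentially the same route as the paper's: express the Jacobian as the normalized visit count $\frac{1}{l+1}\sum_{t=0}^l \mathbbm{1}[v_t=v]$, note nonnegativity, take expectations via linearity to get $[P^t]_{uv}$ terms, and invoke ergodicity for the limit. The only (minor) difference is that you make the final convergence argument explicit via $P^t\to \mathbf{1}\boldsymbol{\pi}^\top$ plus Cesàro averaging, whereas the paper simply appeals to the limiting visit frequency; both are standard consequences of ergodicity of a connected non-bipartite chain, so the two write-ups are the same in substance.
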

\addtocounter{theorem}{-1}
\endgroup
\begin{proof}
    Since the model reads $\mathbf{x}_{v_0}\to\cdots\to\mathbf{x}_{v_l}$ by average pooling, the feature Jacobian $|\partial \mathbf{h}_u^{(l)}/\partial \mathbf{x}_v|$ is given as number of visits to the vertex $v$ in the random walk $v_0\to\cdots\to v_l$ starting at $v_0=u$, divided by length $l+1$.
    Let us denote the expected number of these visits by $J(u, v, l)$.
    Let $\mathbbm{1}_{v_t=v}$ be the indicator function that equals $1$ if $v_t=v$ and $0$ otherwise.
    Then we can write $J(u, v, l)$ as:
    \begin{align}
        J(u, v, l) &= \mathbb{E}\left[\sum_{t=0}^l\mathbbm{1}_{v_t=v}|v_0=u\right],\nonumber\\
        &= \sum_{t=0}^l \mathbb{E}[\mathbbm{1}_{v_t=v}|v_0=u].
    \end{align}
    We have used linearity of expectations for the second equality.
    $\mathbb{E}[\mathbbm{1}_{v_t=v}|v_0=u]$ is the probability of being at $v$ at step $t$ given that the walk started at $u$.
    This probability is precisely $[P^t]_{uv}$.
    Therefore:
    \begin{align}
        J(u, v, l) = \sum_{t=0}^l\left[P^t\right]_{uv}=\left[\sum_{t=0}^lP^t\right]_{uv},
    \end{align}
    which gives the equality in \Eqref{eq:apdx_oversquashing}.
    Furthermore, since $G$ is connected and non-bipartite, the uniform random walk on it defines an ergodic Markov chain with a unique stationary distribution $\boldsymbol{\pi}$.
    The limiting frequency of visits on vertex $v$ is precisely the stationary probability $\boldsymbol{\pi}_v$, which gives the convergence in \Eqref{eq:apdx_oversquashing}.
    This completes the proof.
\end{proof}

\newpage

\subsection{Extended Related Work (Section~\ref{sec:related_work})}\label{sec:apdx_extended_related_work}

\paragraph{Anonymized random walks~\citep{micali2016reconstructing}}
The initial work on anonymization by \citet{micali2016reconstructing} has stated an important result, that a sufficiently long anonymized walk starting from a vertex $v$ encodes sufficient information to reconstruct the local subgraph $B_r(v)$ up to isomorphism.
While the focus of \citet{micali2016reconstructing} was a probabilistic graph reconstruction algorithm that uses a set of independent anonymized walks and accesses the oracle set of all possible anonymized walks, we adopt the idea in a neural processing context to acquire universality in probability (Section~\ref{sec:expressive_power}).
In addition, the invariance property of anonymization has rarely been noticed formally in the literature, which is our key motivation for using it, on top of being able to recover the whole graph (Section~\ref{sec:algorithm}).

\paragraph{In-depth comparison with CRaWl~\citep{tonshoff2023walking}}
Our approach is related to CRaWl in two key aspects: \textbf{(1)} identity and connectivity encodings of CRaWl contains analogous information to anonymization and named neighbor recording, respectively, and \textbf{(2)} CRaWl uses 1D CNNs as the reader NN.
A key technical difference lies in the first part.
The identity and connectivity encodings of CRaWl are defined within a fixed window size (denoted $s$ in \citep{tonshoff2023walking}), which puts a locality constraint on the recorded information.
Precisely, the encodings at step $t$ can encode the information of the walk from step $t - s$ to $t$ (precisely, its induced subgraph), referred to as a walklet.
The window size $s$ is a hyperparameter that controls the expressive power of CRaWl, and this dependency makes CRaWl non-universal (Section 3.2, \citep{tonshoff2023walking}).
Our choice of anonymization and named neighbor recording are not under such a local constraint, and they encode the full information of a given walk globally.
This property underlies our universality results in Section~\ref{sec:expressive_power}, which also naturally motivates our choice of universal reader NNs, e.g. a transformer language model, which were not explicitly considered in CRaWl.

\paragraph{Random walks on graphs}
Our work builds upon theory of random walks on graphs, i.e., Markov chains on discrete spaces.
Their statistical properties such as hitting and mixing times have been well-studied~\citep{aleliunas1979random, lovasz1993random, coppersmith1996random, feige1997a, oliveira2012mixing, peres2015mixing}, and our Section~\ref{sec:algorithm} is related to vertex cover time~\citep{aleliunas1979random, kahn1989cover, ding2011cover, abdullah2012cover}, edge cover time~\citep{zuckerman1991on, bussian1996bounding, panotopoulou2013bounds, georgakopoulos2014new}, and improving them, using local degree information~\citep{ikeda2009the, abdullah2015speeding, david2018random}, non-backtracking~\citep{alon2007non, kempton2016non, arrigo2019non, fasino2021hitting}, or restarts~\citep{dumitriu2003on, mcnew2013random, janson2012hitting}.
Our work is also inspired by graph algorithms based on random walks, such as anonymous observation~\citep{micali2016reconstructing}, sublinear algorithms~\citep{dasgupta2014estimating, chiericetti2016sampling, benhamou2018estimating, bera2020count}, and personalized PageRank for search~\citep{page1999pagerank} and message passing~\citep{gasteiger2019predict}.
While we adopt their techniques to make our walks and their records well-behaved, the difference is that we use a non-linear deep neural network to process the records and directly make predictions.
Our method is also related to label propagation algorithms~\citep{zhu2002learning, zhu2005semi, grady2006random} that perform transductive learning on graphs based on random walks, as we have discussed in Section~\ref{sec:arxiv}.

\paragraph{Over-smoothing and over-squashing}
Prior work on over-smoothing and over-squashing of MPNNs~\citep{barcelo2020the, topping2022understanding, giovanni2023how, giraldo2023on, black2023understanding, nguyen2023revisiting, park2023non, wu2023demystifying} often make use of structural properties of graphs, such as effective resistance~\citep{doyle1984random, chandra1997the}, Laplacian eigen-spectrum~\citep{lovasz1993random, spielman2019spectral}, and discrete Ricci curvatures~\citep{ollivier2009ricci, devriendt2022discrete}.
Interestingly, random walks and their statistical properties are often closely related to these properties, indicating some form of parallelism between MPNNs and RWNNs.
This has motivated our analysis in Section~\ref{sec:feature_mixing}, where we transfer the prior results on over-smoothing and over-squashing of MPNNs based on these properties into the results on our approach.

Unlike the MPNNs considered in the above prior work, AMP~\citep{errica2023adaptive} mitigates over-smoothing and over-squashing by learning to prune messages at each step (among its other techniques).
This brings it close to attention-based MPNNs, which can be understood as modifying the computation ($P$ in Section~\ref{sec:feature_mixing}) over topology.
Indeed, our description of over-smoothing and over-squashing in MPNNs in Section~\ref{sec:feature_mixing} mainly applies to message passing using fixed weights, and attention falls out of this framework, which is a shared limitation with prior work~\citep{topping2022understanding, giraldo2023on, black2023understanding, nguyen2023revisiting}.
Formal propagation analysis for attention-based MPNNs is currently in its infancy~\citep{wu2023demystifying}, but we believe it is interesting since, based on our parallelism between MPNNs and RWNNs, it may tell us about how to adapt the walk probabilities adaptively based on input data.
We leave investigating this as future work.

\paragraph{Language models for learning on graphs}
While not based on random walks, there have been prior attempts on applying language models for problems on graphs~\citep{wang2023can, chen2023exploring, zhao2023graphtext, fatemi2023talk, ye2024language}, often focusing on prompting methods on problems involving simulations of graph algorithms.
We take a more principle-oriented approach based on invariance and expressive power, and thus demonstrate our approach mainly on the related tasks, e.g. graph separation.
We believe extending our work to simulating graph algorithms requires a careful treatment~\citep{weiss2021thinking, deletang2023neural, luca2024simulation, sanford2024understanding} and plan to investigate it as future work.

\subsection{Limitations and Future Work}\label{sec:apdx_limit}
Our current main limitation is the cost of performing training and inference if using language models, especially on long walks.
While random walk sampling and text recording can be done efficiently, the main computational bottleneck comes from the use of language models as the reader NN.
For example, in Section~\ref{sec:arxiv}, each test prediction of RWNN-Llama-3-70b takes around 0.410 ± 0.058 seconds on a 4$\times$ H100 machine (RWNN-Llama-3-8b is faster, 0.116 ± 0.064 seconds per prediction).
The inference time of standard MPNNs are much shorter, e.g., a 3-layer GAT with 16 hidden dimensions takes around one second for all test vertices of ogbn-arxiv on a V100 GPU~\citep{wang2019deep}.
Overcoming this issue is an important future direction.
We remark that, if language model embeddings are used and fine-tuned, the training cost of MPNNs can also be substantial, e.g., recent work has reported that prior methods for training an MPNN jointly with language model embeddings on arXiv take 25-46 hours on a 6$\times$ RTX 3090 machine (Table 3 of \citet{zhu2024efficient}).

On the theory side, we focus on cover times of random walks in our analysis (Section~\ref{sec:expressive_power} and Appendix~\ref{sec:apdx_more_analysis}), which corresponds to the worst-case length bounds required for universal approximation.
Yet, in practice, there is a substantial amount of useful information that can be already recovered from possibly non-covering walks.
Examples include spectral properties~\citep{benjamini2006waiting}, number of edges and mixing time~\citep{benhamou2018estimating}, and triangle counts~\citep{bera2020count}, among others studied in the field of sublinear algorithms.
These results suggest that shorter walks may suffice in practice for RWNNs, if the task at hand requires properties that can be estimated without seeing the whole graph, and the reader NN can learn to recover them (e.g., leveraging universality).
Understanding this aspect better is an important direction for future work.

Another question for future work is whether we can train a language model on a large pseudocorpus of random walks~\citep{flubicka2019synthetic, flubicka2020english} to build a foundation model for graphs which is language-compatible.
We plan to investigate this direction in the future.

\end{document}